\let\MYcaption\@makecaption
\newcommand\Sec[1] {Section~\ref{#1}}
\newcommand\App[1] {Appendix~\ref{#1}}
\newcommand\Apps[1] {Appendices~\ref{#1}}
\newcommand\Thm[1] {Theorem~\ref{#1}}
\newcommand\Propo[1]{Proposition~\ref{#1}}
\newcommand\Def[1] {Definition~\ref{#1}}
\newcommand\Eg[1] {Example~\ref{#1}}
\newcommand\Tbls[1] {Tables~\ref{#1}}
\newcommand\Fig[1] {Fig.~\ref{#1}}
\newcommand\Figs[1] {Figs.~\ref{#1}}
\renewcommand{\sf}[0]{\textsf}
\newcommand\rulespsmall{\vspace{0.1cm}}
\newcommand\rulesp{\vspace{0.25cm}}
\newcommand\reducespace[1]{\vspace{#1}}
\renewcommand{\phi} {\varphi}
\newcommand{\eqdef}{\mathbin{\stackrel{\smash{\tiny \text{def}}}{=}}}
\newcommand{\supp}{\mathit{supp}}
\newcommand{\range}{\mathit{range}}
\newcommand{\Dists}{\mathbb{D}} 
\newcommand{\size}[1]{\mathit{size}(#1)}
\newcommand{\dom}[1]{\mathit{dom}(#1)}
\newcommand{\mydo}{\mathit{do}}
\newcommand{\joint}{\mathbin{::}}
\newcommand{\mrgn}[1]{\downarrow_{#1}}
\newcommand{\ov}[1]{\overline{#1}}
\newcommand{\ud}[1]{\underline{#1}}
\newcommand{\la}{\langle}
\newcommand{\ra}{\rangle}
\newcommand\indep{\protect\mathpalette{\protect\independenT}{\perp}}
\def\independenT#1#2{\mathrel{\rlap{$#1#2$}\mkern2.5mu{#1#2}}}
\newcommand{\cali}{\mathcal{I}}
\newcommand{\calo}{\mathcal{O}}
\newcommand{\calr}{\mathcal{R}}
\newcommand{\calv}{\mathcal{V}}
\newcommand{\calw}{\mathcal{W}}
\newcommand{\nA}{c_0}
\newcommand{\nB}{c_1}
\newcommand{\bmc}{\bm{c}}
\newcommand{\bmcA}{\bm{c_0}}
\newcommand{\bmcB}{\bm{c_1}}
\newcommand{\bmcC}{\bm{c_2}}
\newcommand{\bmd}{\bm{d}}
\newcommand{\bmu}{\bm{u}}
\newcommand{\bmuB}{\bm{u_1}}
\newcommand{\bmuC}{\bm{u_2}}
\newcommand{\bms}{\bm{s}}
\newcommand{\bmv}{\bm{v}}
\newcommand{\bmx}{\bm{x}}
\newcommand{\bmxB}{\bm{x_1}}
\newcommand{\bmxC}{\bm{x_2}}
\newcommand{\bmxD}{\bm{x_3}}
\newcommand{\bmy}{\bm{y}}
\newcommand{\bmyA}{\bm{y_0}}
\newcommand{\bmyB}{\bm{y_1}}
\newcommand{\bmz}{\bm{z}}
\newcommand{\bmo}{\bm{o}}
\newcommand{\bmoA}{\bm{o_1}}
\newcommand{\bmoB}{\bm{o_2}}
\newcommand{\bmbot}{\bm{\bot}}
\newcommand{\M}{\mathfrak{M}}
\newcommand{\relX}[1]{\calr_{#1}}
\newcommand{\relE}[2]{\relX{\intvE{\subst{#2}{#1\!}}}}
\newcommand{\relExc}[0]{\relE{\bmx}{\bmc}}
\newcommand{\memory}{m}
\newcommand{\mem}[1]{m_{#1}}
\newcommand{\dgen}{\mathit{g}}
\newcommand{\datagenerator}{\dgen}
\newcommand{\datagen}[1]{\dgen_{#1}}
\newcommand{\precg}[0]{\mathbin{\prec_{\dgen}}}
\newcommand{\precgw}[0]{\mathbin{\prec_{\datagen{w}}}}
\newcommand{\diag}[1]{\mathit{G}_{#1}}
\newcommand{\semf}[0]{\xi}
\newcommand{\sem}[1]{{[\![ #1 ]\!]}}
\newcommand{\sema}[2]{{[\![ #1 ]\!]_{#2}}}
\newcommand{\rand}[0]{r}
\newcommand{\semr}[2]{{[\![ #1 ]\!]_{#2}^{\rand}}}
\newcommand{\semrxg}[1]{{[\![ #1 ]\!]_{\semf,\dgen}^{\rand}}}
\newcommand{\semrX}[3]{{[\![ #1 ]\!]_{#2}^{#3}}}
\newcommand{\modelsx}[1]{\mathbin{\models_{#1}}}
\newcommand{\modelsg}{\modelsx{\datagenerator}}
\newcommand{\fv}[1]{\sf{fv}(#1)}
\newcommand{\fnc}[1]{\sf{fc}(#1)}
\newcommand{\cdv}[1]{\sf{cdv}(#1)}
\newcommand{\cntv}[1]{\mathit{cnt_{\sf v}}(#1)}
\newcommand{\var}{\sf{Var}}
\newcommand{\cvar}{\sf{CVar}}
\newcommand{\fvar}{\sf{FVar}}
\newcommand{\Const}{\sf{Const}}
\newcommand{\dConst}{\sf{dConst}}
\newcommand{\Func}{\sf{Fsym}}
\newcommand{\pFunc}{\sf{pFsym}}
\newcommand{\dFunc}{\sf{dFsym}}
\newcommand{\Term}{\sf{Term}}
\newcommand{\CTerm}{\sf{CTerm}}
\newcommand{\StaCL}[0]{StaCL}
\newcommand{\Pred}[0]{\sf{Psym}}
\newcommand{\CPred}[0]{\sf{CPsym}}
\newcommand{\Fml}[0]{\sf{Fml}}
\newcommand{\subst}[2]{\nicefrac{#1}{#2}}
\newcommand{\pos}{\mathit{pos}}
\newcommand{\dsep}{\mathit{dsep}}
\newcommand{\pa}{\mathit{pa}}
\newcommand{\npa}{\mathit{npa}}
\newcommand{\anc}{\mathit{anc}}
\newcommand{\nanc}{\mathit{nanc}}
\newcommand{\allnanc}{\mathit{allnanc}}
\newcommand{\Pa}{\mathtt{PA}}
\newcommand{\Anc}{\mathtt{ANC}}
\newcommand{\Anca}{\mathtt{ANC}_{\mathtt{*}}}
\newcommand{\Dec}{\mathtt{DEC}}
\newcommand{\intvE}[1]{\lceil #1 \rceil}
\newcommand{\intvL}[1]{\lfloor #1 \rfloor}
\providecommand*{\xmapstofill@}{%
  \arrowfill@{\mapstochar\relbar}\relbar\rightarrow
}
\providecommand*{\xmapsto}[2][]{%
  \ext@arrow 0395\xmapstofill@{#1}{#2}%
}
\newcommand{\vdashx}[1]{\mathbin{\vdash_{#1\,}}}
\newcommand{\vdashg}[0]{\vdashx{\datagenerator}}
\newcommand{\vdashgp}[0]{\vdashx{\datagenerator_0}}
\newcommand{\AX}[0]{{\bf A\!X}}
\newcommand{\AXCP}[0]{{\bf A\!X^{CP}}}
\newcommand{\AXwCP}[0]{$\AXCP{}$}
\newcommand{\axPT}{{\sc{PT}}}
\newcommand{\axMP}{{\sc{MP}}}
\newcommand{\axDGEI}{{\sc{DG}$_{\textsc{EI}}$}}
\newcommand{\axEqA}{{\sc{Eq1}}}
\newcommand{\axEqB}{{\sc{Eq2}}}
\newcommand{\axEqName}{{\sc{Eq}$_{\textsc{C}}$}}
\newcommand{\axEqFunc}{{\sc{Eq}$_{\textsc{F}}$}}
\newcommand{\axEffect}{{\sc{Effect}$_{\textsc{EI}}$}}
\newcommand{\axEqEI}{{\sc{Eq}$_{\textsc{EI}}$}}
\newcommand{\axCmpEI}{{\sc{Cmp}$_{\textsc{EI}}$}}
\newcommand{\axSplitE}{{\sc{Split}$_{\textsc{EI}}$}}
\newcommand{\axSimulE}{{\sc{Simul}$_{\textsc{EI}}$}}
\newcommand{\axRptE}{{\sc{Rpt}$_{\textsc{EI}}$}}
\newcommand{\axDistrE}{{\sc{Distr}$_{\textsc{EI}}$}}
\newcommand{\axPD}{{\sc{PD}}}
\newcommand{\axMPD}{{\sc{MPD}}}
\newcommand{\axDGL}{{\sc{DG}$_{\textsc{LI}}$}}
\newcommand{\axCondL}{{\sc{Cond}$_{\textsc{LI}}$}}
\newcommand{\axEqLI}{{\sc{Eq}$_{\textsc{LI}}$}}
\newcommand{\axCmpLI}{{\sc{Cmp}$_{\textsc{LI}}$}}
\newcommand{\axSplitL}{{\sc{Split}$_{\textsc{LI}}$}}
\newcommand{\axSimulL}{{\sc{Simul}$_{\textsc{LI}}$}}
\newcommand{\axRptL}{{\sc{Rpt}$_{\textsc{LI}}$}}
\newcommand{\axDistrL}{{\sc{Distr}$_{\textsc{LI}}$}}
\newcommand{\axXpdEL}{{\sc{Expd}$_{\textsc{EILI}}$}}
\newcommand{\axXcdEL}{{\sc{Excd}$_{\textsc{EILI}}$}}
\newcommand{\axDsepCIndB}{{\sc{DsepCI}}}
\newcommand{\axDsepSm}{{\sc{DsepSm}}}
\newcommand{\axDsepDc}{{\sc{DsepDc}}}
\newcommand{\axDsepWU}{{\sc{DsepWu}}}
\newcommand{\axDsepCn}{{\sc{DsepCn}}}
\newcommand{\axDsepEN}{{\sc{Dsep}$_{\textsc{EI}}$}}
\newcommand{\axDsepENA}{{\sc{Dsep}$_{\textsc{EI1}}$}}
\newcommand{\axDsepENB}{{\sc{Dsep}$_{\textsc{EI2}}$}}
\newcommand{\axDsepLN}{{\sc{Dsep}$_{\textsc{LI}}$}}
\newcommand{\axDsepLNA}{{\sc{Dsep}$_{\textsc{LI1}}$}}
\newcommand{\axDsepLNB}{{\sc{Dsep}$_{\textsc{LI2}}$}}
\newcommand{\axDsepLNC}{{\sc{Dsep}$_{\textsc{LIC}}$}}
\newcommand{\axNancA}{{\sc{Nanc0}}}
\newcommand{\axNancAB}{{\sc{Nanc1}}}
\newcommand{\axNancB}{{\sc{Nanc2}}}
\newcommand{\axNancC}{{\sc{Nanc3}}}
\newcommand{\axNancD}{{\sc{Nanc4}}}
\newcommand{\axNancAll}{{\sc{AllNanc}}}
\newcommand{\axPaToNanc}{{\sc{PaNanc}}}
\newcommand{\axPaToDsep}{{\sc{PaDsep}}}
\newcommand{\axUniq}{{\sc{Unq}}}
\newcommand{\axDoA}{{\sc{Do1}}}
\newcommand{\axDoB}{{\sc{Do2}}}
\newcommand{\axDoC}{{\sc{Do3}}}
\newcommand{\mytrue}{\mathtt{true}}
\newcommand{\myfalse}{\mathtt{false}}
\newcommand{\garrow}[0]{\mathbin{\chemarrow}}
\newcommand{\leftgarrow}[0]{\mathbin{\rotatebox[origin=c]{180}{\chemarrow}}}
\newcommand{\Leaf}[0]{\mathsf{Leaf}}
\newcommand{\LeafF}[0]{\mathsf{LF}}
\newcommand{\phiRCT}{\phi_{\textsc{RCT}}}
\newcommand{\phiBDA}{\phi_{\textsc{BDA}}}
\newcommand{\psiPre}[0]{\psi_{\rm pre}}
\newcommand{\psiDP}[0]{\psi_{\rm dp}}
\newcommand{\psiDoB}[0]{\psi_{\rm do2}}
\newcommand{\psiDA}[0]{\psi_{\rm d1}}
\newcommand{\psiDB}[0]{\psi_{\rm d2}}
\newcommand{\psiNanc}[0]{\psi_{\rm nanc}}
\newcommand{\psiPos}[0]{\psi_{\rm pos}}
\providecommand{\leftsquigarrow}{%
  \mathrel{\mathpalette\reflect@squig\relax}%
}
\newcommand{\reflect@squig}[2]{%
  \reflectbox{$\m@th#1\rightsquigarrow$}%
}
\newif\ifcommentson\commentsonfalse
\newif\ifconferenceon\conferenceonfalse
\newcommand{\arxiv}[1]{}
\newcommand{\conference}[1]{#1}
\newcommand{\conferenceShort}[1]{}
\newcommand{\arxiv}[1]{#1}
\newcommand{\conference}[1]{}
\newcommand{\conferenceShort}[1]{}
\newcommand{\commentsize}[0]{.90\textwidth}
\newcommand{\commentYK}[1]{\begin{center} \parbox{\commentsize}{\textbf{\textcolor{black}{Comment Y.}} \textcolor{red}{#1} }\end{center}}
\newcommand{\commentTS}[1]{\begin{center} \parbox{\commentsize}{\textbf{\textcolor{black}{Comment T.}} \textcolor{red}{#1 }}\end{center}}
\newcommand{\commentKS}[1]{\begin{center} \parbox{\commentsize}{\textbf{\textcolor{black}{Comment K.}} \textcolor{red}{#1} }\end{center}}
\newcommand{\replyYK}[1]{\begin{center} \parbox{\commentsize}{\textbf{Reply Y.} \textcolor{blue}{#1} }\end{center}}
\newcommand{\replyTS}[1]{\begin{center} \parbox{\commentsize}{\textbf{Reply T.} \textcolor{blue}{#1} }\end{center}}
\newcommand{\replyKS}[1]{\begin{center} \parbox{\commentsize}{\textbf{Reply K.} \textcolor{blue}{#1} }\end{center}}
\newcommand{\commentY}[1]{\marginpar{\footnotesize \color{red} {\bf Y:} \textsf{\scriptsize #1}}}
\newcommand{\commentT}[1]{\marginpar{\footnotesize \color{red} {\bf T:} \textsf{\scriptsize #1}}}
\newcommand{\commentK}[1]{\marginpar{\footnotesize \color{red} {\bf K:} \textsf{\scriptsize #1}}}
\newcommand{\replyY}[1]{\marginpar{\footnotesize \color{red} {\bf Y:} \textsf{\scriptsize #1}}}
\newcommand{\replyT}[1]{\marginpar{\footnotesize \color{red} {\bf T:} \textsf{\scriptsize #1}}}
\newcommand{\replyK}[1]{\marginpar{\footnotesize \color{red} {\bf K:} \textsf{\scriptsize #1}}}
\newcommand{\commentYK}[1]{}
\newcommand{\commentTS}[1]{}
\newcommand{\commentKS}[1]{}
\newcommand{\replyYK}[1]{}
\newcommand{\replyTS}[1]{}
\newcommand{\replyKS}[1]{}
\newcommand{\commentY}[1]{}
\newcommand{\commentT}[1]{}
\newcommand{\commentK}[1]{}
\newcommand{\replyY}[1]{}
\newcommand{\replyT}[1]{}
\newcommand{\replyK}[1]{}
\definecolor{DarkGreen}{rgb}{0,.6,0}
\newcommand{\colorR}[1]{\textcolor{red}{#1}}
\newcommand{\pagelimitmarker}[1]{~\\ {\colorR{\ifthenelse{\thepage>#1}{\Huge Exceeding the page limit}{\huge Within the page limit}}}~\\ {\huge{\colorR{~~Page Limit\,\,\,\,\, = #1}}}~\\ {\huge{\colorR{~~Current Page = $\thepage$}}}}
\newcommand{\myqed}[0]{\hfill $\Box$}
\newcommand{\PARAGRAPH}[1]{\rulespsmall \noindent \textbf{#1.} }
\newcommand{\PARAGRAPHws}[1]{\noindent \textbf{#1.} }
\newcommand{\tblcaption}[1]{\def\@captype{table}\caption{#1}}
\newcommand{\figsubcaption}[1]{\def\@captype{figure}\subcaption{#1}}
\newcommand{\figcaption}[1]{\def\@captype{figure}\caption{#1}}
\begin{document}
\title{Formalizing Statistical Causality via Modal Logic
}
\author{Yusuke Kawamoto\inst{1,2}
\and
Tetsuya Sato\inst{3}
\and
Kohei Suenaga\inst{4}
}
\authorrunning{Y. Kawamoto et al.}
\institute{
AIST, Tokyo, Japan
\and
PRESTO, JST, Tokyo, Japan
\and
Tokyo Institute of Technology, Tokyo, Japan
 \and
Kyoto University, Kyoto, Japan
}
\maketitle              
\begin{abstract}
We propose a formal language for describing and explaining statistical causality.
Concretely, we define \emph{Statistical Causality Language} (StaCL) for expressing causal effects and specifying the requirements for causal inference.
StaCL incorporates modal operators for interventions to express causal properties between probability distributions in different possible worlds in a Kripke model.
We formalize axioms for probability distributions, interventions, and causal predicates using StaCL formulas.
These axioms are expressive enough to derive the rules of Pearl's do-calculus.
Finally, we demonstrate by examples that StaCL can be used to specify and explain the correctness of statistical causal inference.

\end{abstract}

\section{Introduction}
\label{sec:intro}

\emph{Statistical causality} has been gaining significant importance in a variety of research fields.
In particular, in life sciences,
 more and more researchers
have been using statistical techniques to discover \emph{causal relationships} from experiments and observations.
However, these statistical methods can easily be misused or misinterpreted.
In fact, 
it is reported that many research articles have serious errors in the applications and interpretations of statistical methods~\smash{\cite{Fernandes-Taylor:11:BMCRN,Makin:19:elife}}. 

A common mistake is to misinterpret 
statistical \emph{correlation} as statistical \emph{causality}.
Notably, when we analyze observational data without experimental interventions, we may overlook some requirements for causal inference and make wrong calculations,
leading to incorrect conclusions about the causality.

For this reason, the scientific community has developed guidelines on many requirements for statistical analyses \smash{\cite{von:07:STROBE,Moher:12:CONSORT}}.
However, since there is no formal language to describe the entire procedures and their requirements, 
we refer to guidelines manually and 
cannot formally guarantee the correctness of analyses.

To address these problems, we propose a logic-based approach to formalizing and explaining the correctness of statistical causal inference.
Specifically, we introduce a formal language called \emph{statistical causality language} (\StaCL{}) to formally describe and check the requirements for statistical causal inference.
We consider this work as the first step to building a framework for formally guaranteeing and explaining the reliability of 
scientific research.

\PARAGRAPH{Contributions}
Our main contributions are as follows:
\begin{itemize}
\item
We propose \emph{statistical causality language} (\StaCL{}) for formalizing and explaining statistical causality by using modal operators for interventions.
\item
We define a \emph{Kripke model for statistical causality}.
To formalize not only statistical correlation but also statistical causality, we introduce a \emph{data generator} in a possible world
to model a causal diagram in a Kripke model.
\item 
We introduce the notion of \emph{causal predicates} to express statistical causality
and interpret them using a data generator instead of a valuation in a Kripke model.
In contrast, \emph{(classical) predicates} are interpreted using a valuation in a Kripke model to express only \emph{statistical correlations}.
\item
We introduce a sound deductive system \AXwCP{} for \StaCL{} with axioms for probability distributions, interventions, and causal predicates.
These axioms are expressive enough to reason about all causal effects identifiable by Pearl's \emph{do-calculus}~\cite{Pearl:95:biometrika}.
We show that \AXwCP{} can reason about the correctness of causal inference methods (e.g., backdoor adjustment).
Unlike prior work, \AXwCP{} does not aim to conduct causal inference about a specific causal diagram; rather, it concerns the correctness of the inference methods
for any diagram.
To the best of our knowledge, 
ours appears to be the first modal logic that can specify and reason about the requirements for causal inference.
\end{itemize}

\PARAGRAPHws{Related Work}
Many studies on causal reasoning rely on causal diagrams~\cite{Pearl:09:causality}.
Whereas 
they aim to reason about a specific diagram,
our logic-based approach aims to specify and reason about the requirements for causal inference methods.

Logic-based approaches for formalizing causal reasoning have been proposed.
To name a few, 
Halpern and Pearl provide logic-based definitions of actual causes where logical formulas with events formalize counterfactuals \cite{Halpern:01:IJCAI,Halpern:01:UAI,Halpern:15:IJCAI}.
Probabilistic logical languages~\cite{Ibeling:20:AAAI} are proposed to axiomatize causal reasoning 
with observation, intervention, and counterfactual inference.
Unlike our logic, however, their framework does not aim to syntactically derive the correctness of statistical causal inference.
The causal calculus~\cite{McCain:97:AAAI} is used to provide a logical representation~\cite{Bochman:15:AAAI,Bochman:21:book} of Pearl~\cite{Pearl:09:causality}'s structural causal model.
The counterfactual-observational language~\cite{Barbero:21:jphil} can reason about interventionist counterfactuals and has an axiomatization that is complete w.r.t. a causal team semantics.
A modal logic in~\cite{Barbero:20:Dali} integrates causal and epistemic reasoning.
While these works deal with deterministic cases only, our \StaCL{} can reason about statistical causality in probabilistic settings.

There have been studies on incorporating probabilities into team semantics~\cite{Hodges:97:IGPL}.
For example, team semantics is used to deal with the dependence and independence among random variables~\cite{Durand:16:Foiks,Corander:19:APAL}.
A probabilistic team semantics is provided for a first-order logic that can deal with conditional independence~\cite{Durand:18:foiks}.
A team semantics is also introduced for logic with exact/approximate dependence and independence atoms~\cite{Hirvonen:19:TAMC}.
Unlike our \StaCL{}, however, these works do not allow for deriving the do-calculus or the correctness of causal inference methods.

Concerning the axiomatic characterization of causality,
Galles and Pearl~\cite{Galles:98:FoS} prove that the axioms of composition, effectiveness, and reversibility are sound and complete with respect to the structural causal models.
They also show that the reversibility axiom can be derived from the composition axiom if the causal diagram is acyclic (i.e., has no feedback loop).
Halpern~\cite{Halpern:00:JAIR} provides axiomatizations for more general classes of causal models with feedback and with equations that may have no solutions.
In contrast, our deductive system \AXwCP{} has axioms for causal predicates and two forms of interventions that can derive the rules of Pearl's do-calculus~\cite{Pearl:95:biometrika},
while being equipped with axioms corresponding to the composition and effectiveness axioms mentioned above only for acyclic diagrams.%

For the efficient computation of causal reasoning, 
constraint solving 
is applied
~\cite{Hyttinen:14:UAI,Hyttinen:15:UAI,Triantafillou:15:JMLR}.
Probabilistic logic programming is used to encode and reason about a specific causal diagram~\cite{Ruckschloss:22a:ICLP}.
These are 
orthogonal to the goal of our work.

Finally, a few studies propose modal logic for statistical methods.
Statistical epistemic logic~\cite{Kawamoto:19:FC,Kawamoto:19:SEFM,Kawamoto:20:SoSyM} 
specifies various properties of machine learning.
Belief Hoare logic~\cite{Kawamoto:21:KR,Kawamoto:22:arxiv} can reason about 
statistical hypothesis testing programs.
However, unlike our \StaCL{}, these cannot reason about statistical causality.

\section{Illustrating Example}
\label{sec:overview}
We first present a simple example to explain our framework.

\begin{example}[Drug's efficacy]\rm \label{eg:illustrate}
We attempt to check a drug's efficacy for a disease
by observing a situation where some patients take a drug and the others do not.

Table~\ref{tab:example:simpson} shows the recovery rates and the numbers of patients treated with\slash without the drug.
For both males and females, \emph{more} patients recover by taking the drug.
However, for the combined population, the recovery rate with the drug (0.73) is \emph{less} than that without it (0.80).
This inconsistency is called \emph{Simpson's paradox}~\cite{Simposon:51:RSS}, showing the difficulty of identifying causality from observed data.

To model this, we define three variables: a \emph{treatment} $x$ ($1$ for drug, $0$ for no-drug), an \emph{outcome} $y$ ($1$ for recovery, $0$ for non-recovery), and a gender $z$.
\Fig{fig:example:confound} depicts their causal dependency; the arrow $x \garrow y$ denotes that $y$ depends on $x$. 
The \emph{causal effect} $p(y | \mydo(x\,{=}\,c))$ of a treatment $x\,{=}\,c$ on an outcome $y$~\cite{Pearl:09:causality} is defined as the distribution of $y$ in case $y$ were generated from $x=c$
(\Fig{fig:example:RCT}).

However, since the gender $z$ influences 
the choice of the treatment $x$ in reality (\Fig{fig:example:confound}),
the causal effect $p(y | \mydo(x\,{=}\,c))$
depends on the common cause $z$ of $x$ and $y$
and differs from the correlation $p(y |x\,{=}\,c)$.
Indeed, in Table~\ref{tab:example:simpson},  80 \% of females chose to take the drug ($x = 1$) while only 20 \% of males did so;
this dependency of $x$ on the gender $z$ leads to Simpson's paradox in  Table~\ref{tab:example:simpson}.
Thus, calculating the causal effect requires an ``adjustment'' for $z$, as explained below.
\end{example}

\begin{figure}[t]
\begin{tabular}{cc}
\begin{minipage}[h]{0.43\hsize}
  \def\@captype{table}
  \centering
  \begin{footnotesize}
  \tblcaption{Recovery rates of patients with/without taking a drug.
  \label{tab:example:simpson}}
  \begin{tabular}{lcc}
  \hline
  \!&\!\hspace{-0ex} Drug  &\!\hspace{-0ex} No-drug \\[-0.3ex]
  \!&\!\hspace{-0ex} $x=1$&\!\hspace{-0ex} $x=0$ \\
  \hline
  \hspace{-0ex}Male	&\!\hspace{-0ex}{\bf 0.90}	&\!\hspace{-0ex}0.85 \\[-0.5ex]
  					&\!\hspace{-0ex}(18/20)	&\!\hspace{-0ex}(68/80) \\[0.2ex]
  \hline
  \hspace{-0ex}Female	&\!\hspace{-0ex}{\bf 0.69}	&\!\hspace{-1ex}0.60 \\[-0.5ex]
  					&\!\hspace{-0ex}(55/80)	&\!\hspace{-0ex}(12/20) \\[0.2ex]
  \hline
  \hspace{-0ex}Total &\!\hspace{-0ex}0.73	&\!\hspace{-0ex}{\bf 0.80} \\[-0.5ex]
  					&\!\hspace{-0ex}(73/100)	&\!\hspace{-0ex}(80/100) \\
  \hline
  \end{tabular}
  \end{footnotesize}
\end{minipage}
~&
\begin{minipage}[h]{0.57\hsize}
  \def\@captype{figure}
\begin{minipage}[t]{1.00\hsize} 
  \def\@captype{subfigure}
  \setcounter{figure}{1}
  \setlength\unitlength{1pt}
  \centering
  \begin{picture}(50,26)(0,0)
    \put(10,0){\vector(1,0){30}}
    \put(21,17){\vector(-1,-1){13}}
    \put(29,17){\vector(1,-1){13}}
    \put(2, -3){$x$}
    \put(43, -3){$y$}
    \put(23, 18){$z$}
  \end{picture}
  \figsubcaption{The actual diagram $G$ with a gender (confounder) $z$, a treatment $x$, and an outcome $y$.\!
  \label{fig:example:confound}}
\end{minipage}
\\[0.0ex]
\begin{minipage}[t]{1.00\hsize} 
  \def\@captype{subfigure}
  \setlength\unitlength{1pt}
  \centering
  \begin{picture}(50,26)(0,0)
    \put(10,0){\vector(1,0){30}}
    \put(5,15){\vector(0, -1){11}}
    \put(29,17){\vector(1,-1){13}}
    \put(2, -3){$x$}
    \put(43, -3){$y$}
    \put(23, 18){$z$}
    \put(3, 18){$c$}
  \end{picture}
  \figsubcaption{The diagram $G \intvE{\subst{c}{x}}$ with an intervention to~$x$.\!
  \label{fig:example:RCT}\!}
\end{minipage}
\setcounter{figure}{0}
\figcaption{Causal diagrams in \Eg{eg:illustrate}.
\label{fig:eg:diagram}}
\end{minipage}
\end{tabular}
\end{figure}

\PARAGRAPH{Overview of the Framework}
We describe reasoning about the causal effect in \Eg{eg:illustrate}
using logical formulas in our formal language \StaCL{} (\Sec{sec:assertion-logic}).

We define $\phiRCT \eqdef \intvE{\subst{c}{x}} (\nA = y)$ to express 
a \emph{randomized controlled trial (RCT)},
where we 
randomly divide the patients into two groups: one taking the drug ($x=1$) and the other not ($x=0$).
This random choice of the treatment $x$ is expressed by the intervention $\intvE{\subst{c}{x}}$ 
for $c=0,1$
in the diagram $G\intvE{\subst{c}{x}}$ 
(\Fig{fig:example:RCT}).
Since $x$ is independent of $z$ in $G\intvE{\subst{c}{x}}$,
the causal effect $p(y | \mydo(x\,{=}\,c))$ of $x$ on the outcome $y$ 
is given as $y$'s distribution $\nA$ observed in the experiment in $G\intvE{\subst{c}{x}}$.

In contrast, $\phiBDA \eqdef (f = y|_{z,x=c} \,\land~\nB = z \,\land~\nA = f(\nB)\!\mrgn{y})$ describes the inference about the causal effect from observation \emph{without} intervention to $x$ 
(\Fig{fig:example:confound}).
This saves the cost of the experiment and avoids ethical issues in random treatments.
Instead, to avoid Simpson's paradox, 
the inference $\phiBDA$ conducts 
a \emph{backdoor adjustment} 
(\Sec{sec:reasoning-StaCL})
to cope with the confounder $z$.

Concretely, the backdoor adjustment $\phiBDA$ computes $x$'s causal effect on $y$ as follows.
We first obtain the conditional distribution $f \eqdef y|_{z,x=c}$ and the prior $\nB \eqdef z$.
Then we conduct the adjustment by calculating the joint distribution $f(\nB)$ from $f$ and $\nB$ and then taking the marginal distribution $\nA \eqdef f(\nB)\!\mrgn{y}$.
The resulting $\nA$ is the same as the $\nA$ in the RCT experiment $\phiRCT$;
that is, the backdoor adjustment $\phiBDA$ can compute the causal effect obtained by $\phiRCT$.

For this adjustment, we need to check the requirement 
$\pa(z, x) \land \pos(x \joint z)$,
that is,
$z$ is $x$'s parent 
in the diagram $\diag{}$ and
the joint distribution $x \joint z$ satisfies the positivity (i.e., it takes each value with a non-zero probability).

Now we formalize the \emph{correctness} of this causal inference method (for any diagram $\diag{}$)
as the judgment expressing that under the above requirements, the backdoor adjustment computes the same causal effect as the RCT experiment:
\begin{align}
\label{eq:correctness} 
\pa(z, x) 
\land \pos(x \joint z) \vdashg \phiRCT \leftrightarrow \phiBDA
{.}
\end{align}
By deriving this judgment in a deductive system called \AXwCP{} (\Sec{sec:axioms}), we show the correctness of this causal inference method for any diagram (\Sec{sec:reasoning-StaCL}).
\conference{We show all proofs of the technical results in this paper's full version~\cite{Kawamoto:22:JELIA:arxiv}.}
\arxiv{We show all proofs of the technical results in Appendix.}

\section{Language for Data Generation}
\label{sec:data}
In this section, we introduce a language for describing data generation.

\PARAGRAPH{Constants and Causal Variables}
We introduce a set $\Const$ of \emph{constants} to denote probability distributions of data values and a set $\dConst \subseteq \Const$ of \emph{deterministic constants}, each denoting a single data value (strictly speaking, denoting a distribution having a single data value with probability $1$).

We introduce a finite set $\cvar$ of \emph{causal variables}.
A tuple $\la x_1, \ldots, x_k \ra$ of causal variables represents the joint distribution
of $k$ variables $x_1, \ldots, x_k$.
We denote the set of all non-empty (resp. possibly empty) tuples of variables by $\cvar^+$ (resp. $\cvar^*$).
We use the bold font for a \emph{tuple}; e.g., $\bm{x} = \la x_1, \ldots, x_k \ra$.
We write $\size{\bm{x}}$ for the \emph{dimension} $k$ of a tuple $\bm{x}$.
We assume that the variables in a tuple $\bm{x}$ are sorted lexicographically.

For disjoint tuples $\bmx$ and $\bmy$,
$\bmx \joint \bmy$ denotes the \emph{joint distribution} of $\bmx$ and $\bmy$.
Formally, `$\joint$' is \emph{not} a function symbol, but a meta-operator on $\cvar^*$;
$\bmx \joint \bmy$ is the tuple obtained by merging $\bmx$ and $\bmy$ and sorting the variables lexicographically.

We use \emph{conditional causal variables} 
$\bmy|_{\bmz,\bmx=\bmc}$ to denote the conditional distribution of $\bmy$ given $\bmz$ and $\bmx=\bmc$.
We write $\fvar$ for the set of all conditional causal variables.
For a conditional distribution $\bmy|_{\bmx}$ and a prior distribution $\bmx$,
we write $\bmy|_{\bmx}(\bmx)$ for the joint distribution $\bmx\joint\bmy$.

\PARAGRAPH{Terms}
We define \emph{terms} to express how data are generated.
Let 
$\Func$ be a set of \emph{function symbols} denoting algorithms. 
We define the set $\CTerm$ of \emph{causal terms} as the terms of depth at most $1$; i.e.,
$u \mathbin{::=} c \mid f(v, \ldots, v)$
where $c\in\Const$, $f\in\Func$, and $v\in\cvar\cup\Const$.
For example, $f(c)$ denotes a data generated by an algorithm $f$ with input $c$.
We denote the set of variables (resp. the set of constants) occurring in a term $u$ by $\fv{u}$ (resp. $\fnc{u}$).

We also define the set $\Term$ 
of \emph{terms} by the BNF:
$u \mathbin{::=} \bmx \mid c \mid f(u, \ldots, u),$
where 
$\bmx\in\cvar^+$, 
$c\in\Const$,
and $f\in \Func \cup \fvar$.
Unlike $\CTerm$, terms in $\Term$ may repeatedly apply functions 
to describe multiple steps of data generation.

We introduce the special function symbol $\mrgn{\bmx}$ for marginalization.
$\bmy\!\mrgn{\bmx}$ denotes the \emph{marginal distribution} of $\bmx$ given a joint distribution $\bmy$;
e.g., for a joint distribution $\bmx = \la x_0, x_1 \ra$,\, $\bmx\!\mrgn{x_0}$ expresses the marginal distribution $x_0$.
We also introduce the special constant $\bmbot$ for \emph{undefined values}.

\PARAGRAPH{Data Generators}
To describe how data are generated, we introduce the notion of a \emph{data generator} as a function $\dgen: \cvar\rightarrow\CTerm \cup \{\bmbot\}$ that maps a causal variable $x$ to a causal term representing how the data assigned to $x$ is generated.
If $\dgen(y) = u$ for $u \in \CTerm$ and $y \in \cvar$, we write $u \garrow_{\dgen} y$.
For instance, the data generator $\dgen$ in \Fig{fig:data-generator} models the situation in  \Eg{eg:illustrate}.
To express that a variable $x$'s value is generated by an algorithm $f_1$ with an input $z$, the data generator $\dgen$ maps $x$ to $f_1(z)$, i.e., $f_1(z) \garrow_{\dgen} x$.
Since the causal term $f_1(z)$'s depth is at most $1$, $z$ represents the \emph{direct cause} of $x$.
We denote the set of all variables $x$ satisfying $\dgen(x) \neq \bmbot$ by $\dom{\dgen}$, 
and the range of $\dgen$ by $\range(\dgen)$.

\begin{wrapfigure}[9]{l}[0pt]{0.42\textwidth}
\centering
\vspace*{-1.7em}
\begin{small}
\begin{tabular}{@{}l|c}
\hline
\begin{tabular}[c]{l}
Data generator $\dgen$
\end{tabular}
&
\begin{tabular}[c]{l}
Causal diagram
\\[-0.4ex]
$\diag{}$ given from $\dgen$
\end{tabular}
\\\hline
\begin{tabular}{c}
$
\begin{aligned}
\dom{\dgen}&\,{=}\, \{x,y,z\} \\[-0.4ex]
f_1(z) &\garrow_{\dgen} x \\[-0.4ex]
f_2(z,x)&\garrow_{\dgen} y
\end{aligned}
$
\end{tabular}
&
\begin{tabular}[c]{c}
  \setlength\unitlength{1pt}
  \begin{picture}(50,31)(0,0)
    \put(10, 3){\vector(1,0){30}}
    \put(21, 21){\vector(-1,-1){14}}
    \put(29, 21){\vector(1,-1){14}}
    \put(2, 0){$x$}
    \put(43, 0){$y$}
    \put(23, 23){$z$}
  \end{picture}
  \end{tabular}
\\ \hline
\end{tabular}
\end{small}
\caption{The data generator and causal diagram for Example \ref{eg:illustrate}.}
\label{fig:data-generator}
\end{wrapfigure}

We assume the following \emph{at-most-once} condition: 
Each function symbol and constant can be used at most once in a single data generator.
This ensures that different sampling uses different randomness and is denoted by different symbols.

We say that a data generator $\dgen$ is \emph{finite} if $\dom{\dgen}$ is a finite set.
We say that a data generator $\dgen$ is \emph{closed} if no undefined variable occurs in the terms that $\dgen$ assigns to variables, namely, $\fv{\range(\dgen)} \subseteq \dom{\dgen}$.

We write $x \precg y$ iff 
$y$'s value depends on $x$'s, 
i.e., there are variables $z_1, \ldots, z_i$ ($i \ge 2$) such that 
$z_1 = x$, $z_i = y$, and
$z_{j} \in \fv{\dgen(z_{j+1})}$ for $1 \leq j \leq i-1$.
A data generator $\dgen$ is \emph{acyclic} if $\precg$ is a strict partial order over $\dom{\dgen}$.
Then we can avoid the cyclic definitions of $\dgen$.
E.g., the data generator $\dgen_1$ defined by $f(z) \garrow_{\dgen_1} x$ and $f(c) \garrow_{\dgen_1} z$ is acyclic, whereas $\dgen_2$ by $f(z) \garrow_{\dgen_2} x$ and $f(x) \garrow_{\dgen_2} z$ is cyclic.

\section{Kripke Model for Statistical Causality}
\label{sec:model}
In this section, 
we introduce a Kripke model for statistical causality. 

We write $\calo$ for the set of all data values we deal with, such as the Boolean values, integers, real numbers, and lists of data values.
We write $\bot$ for the undefined value.
For a set $S$, we denote the set of all probability distributions over~$S$ by $\Dists S$.
For a probability distribution $\mem{} \in \Dists S$, we write $\supp(\mem{})$ for the set of $\mem{}$'s non-zero probability elements.

\PARAGRAPH{Causal Diagrams}
To model causal relations corresponding to a given data generator $\datagen{}$,
we consider a \emph{causal diagram} $\diag{} = (U, V, E)$ \cite{Pearl:09:causality} 
where $U\cup V$ is the set of all nodes and $E$ is the set of all edges such that:
\begin{itemize}
\item 
$U \,{\eqdef} \fnc{\range(\datagen{})} \subseteq \Const$ is a set of symbols called \emph{exogenous variables} that denote distributions of data;
\item 
$V \,{\eqdef} \dom{\datagen{}} \subseteq \cvar$ is a set of symbols called \emph{endogenous variables} that may depend on other variables;
\item 
$E \eqdef \{ x \rightarrow y \,{\in}\, V \,{\times}\, V \,|\, x \,{\in}\, \fv{\datagen{}(y)} \} \cup
\{ c \rightarrow y \allowbreak \,{\in}\, \allowbreak U \,{\times}\, \allowbreak V \,|\, c \,{\in}\, \fnc{\datagen{}(y)} \}$ 
is the set of all \emph{structural equations}, i.e., directed edges (arrows) denoting the direct causal relations between variables defined by the data generator $\datagen{}$.
\end{itemize}
For instance, in \Fig{fig:data-generator}, \Eg{eg:illustrate} is modeled as the causal diagram $\diag{}$.

Since a causal term's depth is at most $1$, $\datagen{}$ specifies all information for defining $\diag{}$.
By $\datagen{}$'s acyclicity, $\diag{}$ is a directed acyclic graph (DAG)
\conference{(See 
Proposition 4
in the full version~\cite{Kawamoto:22:JELIA:arxiv} for details).
}%
\arxiv{(See \Propo{prop:acyclic:DG}
in \App{sub:app:relation:DG:CD} for details).
}%

\PARAGRAPH{Pre-/Post-Intervention Distributions}
For a causal diagram $\diag{} = (U, V, E)$ and a tuple $\bmy \subseteq V$,
we write $P_{\diag{}}(\bmy)$ for the joint distribution of $\bmy$ over $ \calo^{\size{\bm{y}}}$ generated according to  $\diag{}$.
As shown in the standard textbooks (e.g.,~\cite{Pearl:09:causality}),
$P_{\diag{}}(V)$ is factorized into conditional distributions according to $\diag{}$ as follows:
\begin{align}
\label{eq:Pg1}
P_{\diag{}}(V) \eqdef {\textstyle \prod_{y_i \in V}} P_{\diag{}}(y_i \mid \pa_{\diag{}}(y_i)),
\end{align}
where $\pa_{\diag{}}(y_i)$ is the set of parent variables of $y_i$ in $\diag{}$.
For example, in \Fig{fig:data-generator}, 
for $V = \{ x, y, z \}$,
$P_{\diag{}}( V ) = P_{\diag{}}(y \,|\, x, z)\, P_{\diag{}}(x \,|\, z)\, P_{\diag{}}(z)$.

For tuples $\bmx \subseteq V$ and $\bmo \subseteq \calo$ with $\size{\bm{x}} = \size{\bm{o}}$, the \emph{post-intervention distribution} $P_{\diag{}}(V \,|\, do(\bmx{=}\bmo))$ is the joint distribution of $V$
after $\bmx$ is assigned $\bmo$ and all the variables dependent on $\bmx$ in $\diag{}$ are updated by $\bmx := \bmo$ as follows:
\begin{align*}
P_{\diag{}}(V \,|\, do(\bmx\,{=}\,\bmo)) \eqdef 
\begin{cases}
\prod_{y_i \in V\setminus\bmx} P_{\diag{}}(y_i \,|\, \pa_{\diag{}}(y_i)) \\[-0.2ex]
\hspace{11ex}
\mbox{ for values of $V$ consistent}
\mbox{ with $\bmx = \bmo$ } \\[-0.3ex]
0 
\hspace{10ex} \mbox{ otherwise.}
\end{cases}
\end{align*}
For instance, in \Fig{fig:data-generator}, 
$P_{\diag{}}( y, z | do(x=o) ) = P_{\diag{}}(y | x=o, \allowbreak z)\, P_{\diag{}}(z)$
for any $o \,{\in}\, \calo$.

\PARAGRAPH{Possible Worlds}
We introduce the notion of a \emph{possible world} to define the probability distribution of causal variables from a data generator.
Formally, a possible world is a tuple $(\dgen, \semf, \memory)$ of 
(i) a finite and acyclic data generator $\dgen: \cvar \rightarrow \CTerm\cup\{\bmbot\}$, 
(ii) an interpretation $\semf$ that maps a function symbol in $\Func$ with arity $k\ge 0$ 
to a function from $\calo^{k}$ to $\Dists \calo$, and
(iii) a memory 
$\memory$ that maps a tuple of variables to a joint distribution of data values,
which is determined by $\dgen$ and $\semf$.
We denote these components of a world $w$ by $\datagen{w}$, $\semf_{w}$, and $\mem{w}$,
and the set of all defined variables in $w$ by 
$\var(w) = \dom{\mem{w}}$.

The interpretation $\semf$ can be constructed using 
a probability distribution $I$ over an index set $\cali$ and 
a family $\{ \semf^\rand \}_{\rand \in \cali}$ of interpretations 
each mapping a function symbol $f$ with arity $k\ge 0$ to a deterministic function $\semf^\rand(f)$ from $\calo^{k}$ to $\calo$.
Then $\semf(f)$ 
maps data values $\bm{o}$ to the probability distribution over $\calo$
obtained by randomly drawing an index $\rand$ from $I$ and then computing $\semf^\rand(f)(\bm{o})$.

If $k = 0$, $f$ is a constant and
$\semf^\rand(f) \in \calo$,
hence
$\semf(f) \in \Dists\calo$ is a distribution of data values. 
For the undefined constant, we assume $\semf^{\rand}(\bmbot) = \bot$.

\PARAGRAPH{Interpretation of Terms}
Terms are interpreted in a possible world $w = (\semf, \datagen{}, \mem{})$ as follows.
First, for each index $r \in \cali$, we define the \emph{interpretation} $\semrxg{\_}$ 
that maps a tuple of $k$ terms
to $k$ data values in $\calo$ or~$\bot$ by:
{\footnotesize
\begin{align*}
\semrxg{\bmx} &= 
\semrxg{\datagen{}(\bmx)} 
&&&
\semrxg{\la u_1, \ldots, u_k \ra} &= 
(\semrxg{u_1},\, \ldots,\, \semrxg{u_k})
\\
\semrxg{c} &= 
\semf^{\rand}(c)
&&&
\semrxg{f(u_1, \ldots, u_k)} &
= \semf^{\rand}(f)
(\semrxg{\la u_1, \ldots, u_k \ra})
.
\end{align*}
}%
For instance, in \Fig{fig:data-generator}, 
we have
$\semrxg{x} = \semrxg{\datagen{}(x)} = \semrxg{f_1(z)} = \semf^{\rand}(f_1)(\semrxg{z})$,
where the interpretation of $z$ does not depend on that of $x$ due to $\datagen{}$'s acyclicity.
We define the probability distribution $\sema{u}{w}$ over $\calo$ by
randomly drawing $\rand$ and then computing $\semrxg{u}$.
Similarly, we define $\sema{\la u_1, \ldots, u_k \ra}{w}$ via 
$\semrxg{\la u_1, \ldots, u_k \ra}$.

We remark that the interpretation $\sema{\_}{w}$ defines the joint distribution $P_{\diag{w}}$ of all variables in the causal diagram $\diag{w}$;
e.g.,  
$\sema{\bmy|_{\bmz}}{w} = P_{\diag{w}}(\bmy \,|\, \bmz)$
\conference{(See 
Proposition 5
in the full version~\cite{Kawamoto:22:JELIA:arxiv} for details).
}%
\arxiv{(See \Propo{prop:G:equiv:DG}
in \App{sub:app:relation:DG:CD} for details).
}%
A function symbol $f$ is interpreted as the function $\semf(f)$ that maps data values in $\calo$ to the distribution over $\calo$.
We define the memory $\mem{}$ by 
$\mem{}(\bmx) = \sema{\bmx}{w}$ for all $\bmx\in\cvar^{+}$.
Notice that $\sema{\_}{w}$ is defined using $\datagen{}$ and~$\semf$ without using $\mem{}$.

We expand the interpretation $\sema{\_}{w}$ to a conditional causal variable $\bmy|_{\bmz,\bmx=\bmc} \in \fvar$ 
to interpret it as 
a function that maps a value $\bmc'$ of $\bmz$ to the distribution $\sema{(\bmx\joint\bmy\joint\bmz)|_{\bmz=\bmc', \bmx=\bmc}}{w}$.
We then have 
$\sema{\bmy|_{\bmz,\bmx=\bmc}(\bmz|_{\bmx=\bmc})}{w} \,{=}\, \allowbreak
\sema{\bmy|_{\bmz,\bmx=\bmc}}{w}(\sema{\bmz|_{\bmx=\bmc}}{w})$.

For the sake of reasoning in \Sec{sec:axioms},
for each data generator $\datagenerator$, $\bmx \in \cvar^+$, and $\bmy|_{\bmz,\bmx=\bmc} \in \fvar$,
we introduce 
a constant $c^{(\datagenerator,\bmx)}$ 
and a function symbol $f^{(\datagenerator,\bmy|_{\bmz\!,\bmx=\bmc})}$.
For brevity, we often omit the superscripts of these symbols.

\PARAGRAPH{Eager/Lazy Interventions}
We introduce two forms of \emph{interventions} and their corresponding \emph{intervened worlds}.
Intuitively, in a causal diagram, an \emph{eager intervention} $\intvE{\subst{c}{x}}$ expresses the removal of all arrows pointing \emph{to} a variable $x$ by replacing $x$'s value with $c$.

In contrast, a \emph{lazy intervention} $\intvL{\subst{c}{x}}$ expresses the removal of all arrows emerging \emph{from} $x$, 
which does not change the value of $x$ itself but affects the values of the variables dependent on $x$, computed using $\sem{c}$ (instead of $\sem{x}$) as the value of $x$.

For instance, \Fig{Fig:difference_interventions} shows how two interventions $\intvE{\subst{c}{x}}$ and $\intvL{\subst{c}{x}}$ change the data generator and the causal diagram in a world $w$ that models \Eg{eg:illustrate}.

\begin{wrapfigure}[12]{l}[-1pt]{0.47\textwidth}
\centering
\vspace*{-2em}
{\small
\begin{tabular}{@{}ccc}
\hline
\!World
&\begin{tabular}[c]{c}Data\,generator \end{tabular}
&\hspace{-0.0ex}\begin{tabular}[c]{@{}c} Causal\,diagram \end{tabular}  \\
\hline
$w$
&
$
\begin{aligned}
f_1(z) &\garrow x;\\
f_2(z,x)&\garrow y
\end{aligned}
$
& 
  \begin{tabular}[c]{c}
  \setlength\unitlength{1pt}
  \begin{picture}(50,28)(0,0)
    \put(10,2){\vector(1,0){30}}
    \put(21,19){\vector(-1,-1){12}}
    \put(29,19){\vector(1,-1){12}}
    \put(2, 0){$x$}
    \put(43, 0){$y$}
    \put(23, 20){$z$}
  \end{picture}
  \end{tabular}
\\
\hline
$w\intvE{\subst{c}{x}}$
&
$
\begin{aligned}
c &\garrow x;\\
f_2(z,x)&\garrow y
\end{aligned}
$
& 
  \begin{tabular}[c]{c}
  \setlength\unitlength{1pt}
   \begin{picture}(50,28)(0,0)
    \put(10,2){\vector(1,0){30}}
    \put(5,18){\vector(0, -1){11}}
    \put(29,19){\vector(1,-1){12}}
    \put(2, 0){$x$}
    \put(43, 0){$y$}
    \put(23, 20){$z$}
    \put(3, 20){$c$}
  \end{picture}
  \end{tabular}
\\
\hline
$w\intvL{\subst{c}{x}}$
&
$
\begin{aligned}
f_1(z) &\garrow x;\\
f_2(z,c)&\garrow y
\end{aligned}
$
&   \begin{tabular}[c]{c}
  \setlength\unitlength{1pt}
   \begin{picture}(50,28)(0,0)
    \put(23,2){\vector(1,0){17}}
    \put(21,19){\vector(-1,-1){12}}
    \put(29,19){\vector(1,-1){12}}
    \put(2, 0){$x$}
    \put(43, 0){$y$}
    \put(23, 20){$z$}
    \put(16, 0){$c$}
  \end{picture}
  \end{tabular}
\\ \hline
\end{tabular}
}
\caption{Eager/lazy interventions. 
}
\label{Fig:difference_interventions}
\end{wrapfigure}

For a world $w$ and 
a $c \in \dConst$,
we define an \emph{eagerly intervened world} $w\intvE{\subst{c}{x}}$ as the world where $\sema{c}{w}$ is assigned to $x$ and is used to compute the other variables dependent on $x$.
Formally, 
$w\intvE{\subst{c}{x}}$ 
is defined by
$\semf_{w\intvE{\subst{c}{x}}} = \semf_{w}$,
$\datagen{w\intvE{\subst{c}{x}}}(y) = c$ if $y = x$, and
$\datagen{w\intvE{\subst{c}{x}}}(y) = \datagen{w}(y)$ if $y \neq x$.
For instance, in \Fig{Fig:difference_interventions}, in the world $w\intvE{\subst{c}{x}}$, we use the value of $c$ to compute 
$\sema{x}{w\intvE{\subst{c}{x}}} = \semf_{w}(c)$ and 
$\sema{y}{w\intvE{\subst{c}{x}}} = \sema{f_2(z,x)}{w\intvE{\subst{c}{x}}} = \sema{f_2(z,c)}{w}$.

Then the interpretation $\sema{\_}{w\intvE{\subst{c}{x}}}$ defines the joint distribution of all variables in the causal diagram $\diag{w}$ after the intervention $\bmx \,{:=}\, \sema{\bmc}{w}$;
e.g.,  
$\sema{\bmy|_{\bmz}}{w \intvE{\subst{\bmc}{\bmx}}} = P_{\diag{w}}(\bmy \,|\,  do(\bmx \,{=}\, \sema{\bmc}{w}),\, \bmz)$
\conference{(See 
Proposition 5
in the full version~\cite{Kawamoto:22:JELIA:arxiv} for details).
}%
\arxiv{(See \Propo{prop:G:equiv:DG} in \App{sub:app:relation:DG:CD} for details).}

We next define a \emph{lazily intervened world} $w\intvL{\subst{c}{x}}$ as the world where $x$'s value is unchanged but the other variables dependent on $x$ are computed using $\sema{c}{w}$ instead of $\sema{x}{w}$.
Formally, 
$w\intvL{\subst{c}{x}}$ is defined by
$\semf_{w\intvL{\subst{c}{x}}} = \semf_{w}$,
$\datagen{w\intvL{\subst{c}{x}}}(y)= x$ if $y = x$, and
$\datagen{w\intvL{\subst{c}{x}}}(y)= \datagen{w}(y)[x \mapsto c]$ if $y \neq x$.
E.g., in \Fig{Fig:difference_interventions}, 
$\sema{x}{w\intvL{\subst{c}{x}}} = \sema{f_1(z)}{w}$.

For $\bm{x} \,{=}\, \la x_1, \ldots, x_k \ra$ and $\bm{c} \,{=}\, \la c_1, \ldots, c_k \ra$, 
we define $\intvE{\subst{\bm{c}}{\bm{x}}}$ from the simultaneous replacement $\datagen{w\intvE{\subst{c_1}{x_1}, \ldots, \subst{c_k}{x_k}}}$.
We also define $\intvL{\subst{\bmc}{\bmx}}$ analogously.

\rulespsmall
\noindent
\textbf{Kripke Model.}
Let $\Pred$ be a set of predicate symbols.
For a variable tuple $\bmx$ and a deterministic constant tuple $\bmc$,
we introduce an \emph{intervention relation} $w \relExc w'$ that expresses a transition from a world $w$ to another $w'$ by the intervention $\intvE{\subst{\bmc}{\bmx}}$;
namely, 
${\relExc} = 
\{ (w, w') \in \calw \times \calw \mid
w' \,{=}\, w\intvE{\subst{\bmc}{\bmx}} \}$.

Then we define a \emph{Kripke model for statistical causality} as
a tuple 
$\M = (\calw, \allowbreak (\relExc\!)_{\bmx\in\cvar^+\!,\bmc\in\dConst^+}, \allowbreak \calv)$ 
consisting of:
(1) a set $\calw$ of all possible worlds 
over the set $\cvar$ of causal variables;
(2) for each $\bmx\in\cvar^+$ and 
$\bmc\in\dConst^+$, 
an \emph{intervention relation} 
$\relExc$;
(3)
a valuation $\calv$
that maps a $k$-ary predicate symbol $\eta\in\Pred$ to a set $\calv(\eta)$ of $k$-tuples of distributions.

Notice that different worlds $w$ and $w'$ in $\calw$ may have 
different data generators $\datagen{w}$ and $\datagen{w'}$ corresponding to different causal diagrams;
that is, $\calw$ specifies all possible causal diagrams.
Furthermore, different worlds $w$ and $w'$ may also have different interpretations $\semf_{w}$ and $\semf_{w'}$ of function symbols if we do not have the knowledge of functions~\cite{Kawamoto:07:JSIAM}.

\section{Statistical Causality Language}
\label{sec:assertion-logic}

\PARAGRAPHws{Predicates and Causal Predicates}
Classical predicates in $\Pred$ 
describe \emph{statistical correlation} among the distributions of variables, and are interpreted using a valuation $\calv$.
For example, $\pos(x)$ expresses that $x$ takes each value in the domain $\calo$ with a non-zero probability.
However, predicates 
cannot express the \emph{statistical causality} among variables, whose interpretation relies on a causal diagram.
Thus, we introduce a set $\CPred$ of \emph{causal predicates} 
(e.g., $\dsep$, $\nanc$, $\allnanc$)
and interpret them using a data generator~$\datagenerator$ instead of a valuation~$\calv$.

\PARAGRAPH{Syntax and Semantics of \StaCL{}}
We define the set $\Fml$ of \emph{formulas}:
For $\eta\in\Pred$, $\chi \in \CPred$, $\bmx \,{\in}\, \var^+$, $\bmu \,{\in}\, \Term^+$, $\bmc \,{\in}\, \Const^+$, and $f \,{\in}\, \Func \cup \fvar$,
\begin{align*}
\phi \,\mathbin{::=} &\,
\eta(\bmx, \ldots, \bmx) \,|\,
\chi(\bmx, \ldots, \bmx) \,|\,
 \bmu \,{=}\, \bmu \mid
 f \,{=}\, f 
 \,|\,
 \mytrue \,|\,
 \neg \phi \,|\,
 \phi \land \phi 
 \mid
 \intvE{\subst{\bmc}{\bmx}} \phi \mid
 \intvL{\subst{\bmc}{\bmx}} \phi 
{.}
\end{align*}
Intuitively, 
$\intvE{\subst{\bmc}{\bmx}} \phi$ (resp. $\intvL{\subst{\bmc}{\bmx}} \phi$) expresses that $\phi$ is satisfied in the eager (resp. lazy) intervened world.
We assume that each variable appears at most once in $\bmx$ in $\intvE{\subst{\bmc}{\bmx}}$ 
and $\intvL{\subst{\bmc}{\bmx}}$. 
We use syntax sugar 
$\myfalse$,
$\lor$, $\rightarrow$, and $\leftrightarrow$ as usual.
Note that the formulas have no quantifiers over variables.

We interpret  
a formula in a world $w$ in a Kripke model $\M$ 
by:
{\footnotesize
\begin{align*}
\span\span\span
\M, w \models \eta(\bmx_1, \ldots, \bmx_k)
 ~\mbox{ iff }~
(\sema{x_1}{w}, \ldots, \sema{x_k}{w}) \in \calv(\eta)
\span\span\span
\\
\M, w \models \bmu = \bmu'
& ~\mbox{ iff }~
\sema{\bmu}{w} = \sema{\bmu'}{w}
&&&
\M, w \models f = f'
&~\mbox{ iff }~
\sema{f}{w} = \sema{f'}{w}
\\
\M, w \models \neg \phi
 &~\mbox{ iff }~
\M, w \not\models \phi
&&&
\M, w \models \phi \land \phi'
 &~\mbox{ iff }~
\M, w \models \phi
\mbox{ and }
\M, w \models \phi'
\\
\M, w \models \intvE{\subst{\bmc}{\bmx}} \phi
&~\mbox{ iff }~
\M, w\intvE{\subst{\bmc}{\bmx}} \models \phi
&&&
\M, w \models \intvL{\subst{\bmc}{\bmx}} \phi
&~\mbox{ iff }~
\M, w\intvL{\subst{\bmc}{\bmx}} \models \phi
{,}
\end{align*}
}%
where
$w\intvE{\subst{\bmc}{\bmx}}$ 
and $w\intvL{\subst{\bmu}{\bmx}}$ are 
intervened worlds
and the interpretation of atomic formulas with causal predicates $\chi$ is given below. 
For brevity, we often omit $\M$.

Note that $\eta(x_1, \ldots, x_k)$ represents a property of $k$ independent distributions $\sema{x_1}{w}, \ldots, \sema{x_k}{w}$, 
where the randomness $\rand_i$ in each $\semrX{x_i}{w}{\rand_i}$ is chosen independently.
In contrast, $\eta(\la x_1, \ldots, x_k \ra)$ expresses a property of a single joint distribution,
since the same $\rand$ is used in all of $\semr{x_1}{w}$, $\ldots$, $\semr{x_k}{w}$.

Atomic formulas with causal predicates $\chi$ are interpreted using a causal diagram $\diag{w}$ corresponding to $\datagen{w}$.
Let $\Anc(\bmy)$ is the set of all ancestors of $\bmy$ in $\diag{w}$,
and
$\Pa(\bmy)$ be the set of all parent variables of $\bmy$ in $\diag{w}$.
Then:
{\footnotesize
\begin{align*}
w \models \dsep(\bmx, \bmy, \bmz)
& \mbox{ iff }
\mbox{$\bmx$ and $\bmy$ are $d$-separated by $\bmz$ in $\diag{w}$}
\\[-0.3ex]
w \models \nanc(\bmx, \bmy)
& \mbox{ iff }
\bmx \,{\cap}\, \Anc(\bmy) = \emptyset
\mbox{ and } \bmx \,{\cap}\, \bmy = \emptyset
\\[-0.3ex]
w \models \allnanc(\bmx, \bmy, \bmz)
& \mbox{ iff }
\bmx = \bmy \setminus \Anc(\bmz)
\\[-0.3ex]
w \models \pa(\bmx, \bmy)
& \mbox{ iff }
\bmx = \Pa(\bmy)
\mbox{ and } \bmx\cap\bmy=\emptyset
{,}
\end{align*}
}%
where the $d$-separation 
\footnote{%
An undirected path in a causal diagram $\diag{w}$ is said to be \emph{$d$-separated} by $\bmz$ 
if it has either
(a) a chain $v' \garrow v \garrow v''$ s.t. $v\in\bmz$,
(b) a fork $v' \leftgarrow v \garrow v''$ s.t. $v\in\bmz$,
or 
(c) a collider $v' \garrow v \leftgarrow v''$ s.t. $v\not\in\bmz \cup \Anc(\bmz)$.
$\bmx$ and $\bmy$ are said to be \emph{$d$-separated} by $\bmz$ if all undirected paths between variables in $\bmx$ and in $\bmz$ are $d$-separated by $\bmz$.
}
of $\bmx$ and $\bmy$ by $\bmz$~\cite{Verma:88:UAI} 
is a sufficient condition for the conditional independence of $\bmx$ and $\bmy$ given $\bmz$
\conference{(See Appendix A
in the full version~\cite{Kawamoto:22:JELIA:arxiv}).%
}%
\arxiv{(See \App{sec:appendix:models} for details).}%

\rulespsmall
\noindent
\textbf{Formalization of Causal Effect.}
Conventionally, the conditional probability of $\bmy$ given $\bmz=\bmoB$ after an intervention $\bmx=\bmoA$ is expressed using the $\mydo$-operator by
$P( \bmy \,|\, do(\bmx=\bmoA), \bmz=\bmoB )$.
This causal effect can be expressed using \StaCL{}:
\begin{restatable}[Causal effect]{prop}{PropCausal}
\label{prop:causal}
Let $w$ be a world, $\bmx,\allowbreak\bmy,\bmz\in\var(w)^+$ be disjoint, 
$\bmc \,{\in}\, \dConst^+$, 
$\bmc' \,{\in}\, \Const^+$, and $f \,{\in}\, \Func$.
Then: 
\begin{enumerate}\renewcommand{\labelenumi}{(\roman{enumi})}
\item
$w \models \intvE{\subst{\bmc}{\bmx}} (\bmc' \,{=}\, \bmy)$
iff 
there is a distribution $P_{\diag{w}}$ that is factorized according to $\diag{w}$ and satisfies 
$P_{\diag{w}}( \bmy \,|\, do(\bmx \,{=}\, \bmc) ) \,{=}\, \sema{\bmc'}{w}$.
\item
$w \models \intvE{\subst{\bmc}{\bmx}} (f \,{=}\, \bmy|_{\bmz})$
iff 
there is a distribution $P_{\diag{w}}$ that is factorized according to $\diag{w}$ and satisfies 
$P_{\diag{w}}( \bmy \,|\, do(\bmx \,{=}\, \bmc), \bmz ) \,{=}\, \sema{f}{w}$.
\end{enumerate}
\end{restatable}%

If $\bmx$ and $\bmy$ are $d$-separated by $\bmz$, they are conditionally independent given $\bmz$~\cite{Verma:88:UAI} (but not vice versa).
\StaCL{} can express this by
$
\modelsg ( \dsep(\bmx, \bmy, \bmz) \land \pos(\bmz)\,\!\rightarrow
\bmy|_{\bmz,\bmx=\bmc} = \bmy|_{\bmz},
$
where $\pos(\bmz)$ means that $\bmz$ takes each value with a positive probability,
and $\modelsg\, \phi$ is defined as $w \modelsg \phi$ for all world $w$ having the data generator $\dgen{}$.
Furthermore,
if $\sema{\bmx}{w}$ and $\sema{\bmy}{w}$ are conditionally independent given $\sema{\bmz}{w}$ 
for any world $w$ with the data generator $\datagen{w}$, 
then they are $d$-separated by~$\bmz$:
$
\modelsg\, (\pos(\bmz) \rightarrow \bmy|_{\bmz,\bmx=\bmc} = \bmy|_{\bmz})
\mbox{ implies }
\modelsg\, \dsep(\bmx, \bmy, \bmz)
$
\conference{(See Proposition 15
in the full version~\cite{Kawamoto:22:JELIA:arxiv} for details).}
\arxiv{(See \Propo{prop:dsep:CInd} in Appendix~\ref{sub:StaCL:ax:dsep}).}

\begin{figure}[t]
\centering

\begin{screen}[7]
\begin{footnotesize}
\hspace{1ex}{\bf Axioms for probability distributions}
\begin{flushleft}
\[
\renewcommand{\arraystretch}{1.10}
\begin{array}{@{\hspace{-1.0ex}}l@{\hspace{-0.5ex}}l}
\phantom{\mbox{\axXpdEL{}}}~&~~
\\[-9.5ex]
\mbox{\axEqName}
&~~~ \vdashg c^{(\datagenerator, \bmx)} = \bmx
\\
\mbox{\axEqFunc}
&~~~ \vdashg f^{(\datagenerator, \bmy|_{\bmz\!,\bmx=\bmc})} = \bmy|_{\bmz\!,\bmx=\bmc}
\\
\mbox{\axPD}
&~~~\vdashg
(
	\pos(\bmx)
	\land
	\nA = \bmx
	\land
	f = \bmy|_{\bmx}
	\land
	\nB = \bmx \,{\joint}\, \bmy
)
\rightarrow
\nB {=} f(\nA)
~\hspace{5.5ex}~
\\
\mbox{\axMPD{}}
&~~~\vdashg
\bmxB\mrgn{\bmxC} = \bmxC
~~\mbox{ if } \bmxC \subseteq \bmxB
\end{array}
\]
\end{flushleft}
\end{footnotesize}
\end{screen}
\caption{The axioms of \AX{} for probability distributions, where
$\bmx, \bmxB, \bmxC, \bmy \in \cvar^+$ are disjoint,
$\nA, \nB, \allowbreak c^{(\datagenerator, \bmx)} \in \Const$,
$f, f^{(\datagenerator, \bmy|_{\bmz\!,\bmx=\bmc})} \in \Func$.}
\label{fig:AX1}
\end{figure}

\begin{figure}[t]
\centering
\begin{screen}[7]
\begin{footnotesize}
\hspace{1ex}{\bf Axioms for eager interventions}
\begin{flushleft}
\[
\renewcommand{\arraystretch}{1.10}
\begin{array}{@{\hspace{-1.0ex}}l@{\hspace{-0.5ex}}l}
\phantom{\mbox{\axXpdEL{}}}~&~~
\\[-9.5ex]
\mbox{\axDGEI}
&~~~ \vdashg \intvE{\subst{\bmc}{\bmx}} \phi
~\,\mbox{ iff } \vdashx{\datagenerator\intvE{\subst{\bmc}{\bmx}}}\phi
\\
\mbox{\axEffect~~}
&~~~ \vdashg 
\intvE{\subst{\bmc}{\bmx}} (\bmx = \bmc)
\\
\mbox{\axEqEI}
&~~~ \vdashg \bmuB = \bmuC \leftrightarrow \intvE{\subst{\bmc}{\bmx}} (\bmuB = \bmuC)
\mbox{ if } \fv{\bmuB} = \fv{\bmuC} = \emptyset
\\
\mbox{\axSplitE}
&~~~\vdashg
\intvE{\subst{\bmcB}{\bmxB},\, \subst{\bmcC}{\bmxC}} \phi
\rightarrow
\intvE{\subst{\bmcB}{\bmxB}} \intvE{\subst{\bmcC}{\bmxC}} \phi
\\
\mbox{\axSimulE{}}
&~~~\vdashg
\intvE{\subst{\bmcB}{\bmxB}} \intvE{\subst{\bmcC}{\bmxC}} \phi
\rightarrow
\intvE{\subst{\bmcB'}{\bmxB'},\, \subst{\bmcC}{\bmxC}} \phi
~\mbox{ if }
\bmxB' = \bmxB {\setminus} \bmxC,~
\bmcB' = \bmcB {\setminus} \bmcC
\hspace{-3.5ex}
\\
\mbox{\axRptE{}}
&~~~\vdashg
\intvE{\subst{\bmc}{\bmx}} \phi
\rightarrow
\intvE{\subst{\bmc}{\bmx}} \intvE{\subst{\bmc}{\bmx}} \phi
\\
\mbox{\axCmpEI}
&~~~ \vdashg 
\big(
\intvE{\subst{\bmcB}{\bmxB}} (\bmxC = \bmcC) \land
\intvE{\subst{\bmcB}{\bmxB}} (\bmxD = \bmu)
\big)
\rightarrow
\intvE{\subst{\bmcB}{\bmxB}, \subst{\bmcC}{\bmxC}} (\bmxD = \bmu)
\hspace{-5ex}~
\\
\mbox{\axDistrE}^{\neg}
&~~~\vdashg
(\intvE{\subst{\bmc}{\bmx}} \neg \phi)
\leftrightarrow
(\neg \intvE{\subst{\bmc}{\bmx}} \phi)
\\
\mbox{\axDistrE}^{\land}
&~~~\vdashg
(\intvE{\subst{\bmc}{\bmx}} (\phi_1 \land \phi_2))
\leftrightarrow
(\intvE{\subst{\bmc}{\bmx}} \phi_1 \land \intvE{\subst{\bmc}{\bmx}} \phi_2)
~\hspace{18ex}~
\end{array}
\reducespace{-0.5ex}
\]
\end{flushleft}
\end{footnotesize}
\end{screen}

\begin{screen}[7]
\begin{footnotesize}
\hspace{1ex}{\bf Axioms for lazy interventions}
\begin{flushleft}
\[
\renewcommand{\arraystretch}{1.10}
\begin{array}{@{\hspace{-1.0ex}}l@{\hspace{-0.5ex}}l}
\phantom{\mbox{\axXpdEL{}}}~&~~
\\[-9.5ex]
\!\mbox{\axCondL{}}
&~~~\vdashg
( f = \bmy|_{\bmx = \bmc} ) \leftrightarrow
 \intvL{\subst{\bmc}{\bmx}} ( f = \bmy|_{\bmx = \bmc} )
\\
& \hspace{-11ex}\mbox{Other axioms are analogous to eager interventions except for \axEffect{}.\hspace{2.5ex}~}
\end{array}
\reducespace{-1.4ex}
\]
\end{flushleft}
\end{footnotesize}
\end{screen}

\begin{screen}[7]
\begin{footnotesize}
\hspace{1ex}{\bf Axioms for the exchanges of eager and lazy interventions}
\begin{flushleft}
\[
\renewcommand{\arraystretch}{1.10}
\begin{array}{@{\hspace{-1.0ex}}l@{\hspace{-0.5ex}}l}
\phantom{\mbox{\axXpdEL{}}}~&~~
\\[-9.5ex]
\mbox{\axXpdEL{}}
&~~~\vdashg
 (\intvE{\subst{\bmc}{\bmx}} \bmc' = \bmy)
 \leftrightarrow
 (\intvL{\subst{\bmc}{\bmx}} \bmc' = \bmy)
\\
\mbox{\axXcdEL{}}
&~~~\vdashg
 \pos(\bmz) \,{\rightarrow}\,
 \big(
 (\intvE{\subst{\bmc}{\bmx}} f \,{=}\, \bmy|_{\bmz})
 {\leftrightarrow}
 (\intvL{\subst{\bmc}{\bmx}} f \,{=}\, \bmy|_{\bmz})
 \big)
~\hspace{17ex}~
\end{array}
\reducespace{-1.2ex}
\]
\end{flushleft}
\end{footnotesize}
\end{screen}
\caption{The axioms of \AX{}, where
$\bmx, \bmxB, \bmxC, \bmxD, \bmy, \bmz \in \cvar^+$ are disjoint,
$f \in \Func$,
$\bmc, \bmcB, \bmcC \in \dConst^+$,
$\bmc' \in \Const^+$,
$\bmu,\bmuB,\bmuC \in \Term^+$,
and
$\phi, \phi_1, \phi_2 \in \Fml$.}
\label{fig:AX2}
\end{figure}

\section{Axioms for $\text{\StaCL{}}$}
\label{sec:axioms}

We present a sound deductive system for \StaCL{} in the Hilbert style.
Our system consists of axioms and rules for the judgments of the form $\Gamma \vdashg \varphi$.

The deductive system is stratified into two groups.
The system \AX{}, determined by the axioms in \Figs{fig:AX1} and~\ref{fig:AX2}, concerns the derivation of $\Gamma \vdashg \varphi$ that 
does not involve causal predicates (e.g., $\pa$, $\nanc$, $\dsep$).
The system \AXwCP{}, determined by the axioms in \Fig{fig:AXCP}, concerns the derivation of a formula $\varphi$ possibly equipped with causal predicates in a judgment $\Gamma \vdashg \varphi$.
\reducespace{-0.1ex}

In these systems, we deal only with the reasoning that is independent of a causal diagram.
Indeed, in \Sec{sec:reasoning-StaCL}, we will present examples of reasoning using the deductive system \AXwCP{} that 
do not refer to a specific causal diagram.

\PARAGRAPH{Axioms of \AX{}}
\Fig{fig:AX1} shows the axioms of the deductive system \AX{},
where we omitted the axioms for propositional logic and equations
(\axPT{} for the propositional tautologies, \axMP{} for the modus ponens, \axEqA{} for the reflexivity, and \axEqB{} for the substitutions for formulas).
\axEqName{} and \axEqFunc{} represent the definitions of constants and function symbols corresponding to causal variables.
\axPD{} describes the relationships among the prior distribution $\bmx$, the conditional distribution $\bmy|_{\bmx}$ of $\bmy$ given $\bmx$, and the joint distribution $\bmx\joint\bmy$.
\axMPD{} represents the computation $\mrgn{\bmxC}$ of the marginal distribution $\bmxC$ from a joint distribution $\bmxB$.
\reducespace{-0.1ex}

The axioms named with the subscript \textsc{EI} deal with eager intervention.
Remarkably, \axDGEI{} reduces the derivation of $\vdashg \intvE{\subst{\bmc}{\bmx}} \phi$, which involves an intervention modality $\intvE{\subst{\bmc}{\bmx}}$, to the derivation of $\vdashx{\datagenerator\intvE{\subst{\bmc}{\bmx}}}\phi$, which does not involve the modality under the modified data generator $\datagenerator\intvE{\subst{\bmc}{\bmx}}$.
The axioms $\mbox{\axDistrE}^{\neg}$ and $\mbox{\axDistrE}^{\land}$ allow for pushing intervention operators outside logical connectives.
\reducespace{-0.1ex}

The axioms with the subscript \textsc{LI} deal with lazy intervention; 
they are analogous to the corresponding \textsc{EI}-rules.
The axioms with the subscript \textsc{EILI} describe when an eager intervention can be exchanged with a lazy intervention.

\PARAGRAPH{Axioms of \AXwCP{}}
\Fig{fig:AXCP} shows the axioms for \AXwCP{}.
\axDsepCIndB{} represents that $d$-separation implies conditional independence.
\axDsepSm{}, \axDsepDc{}, \axDsepWU{}, and \axDsepCn{} are the \emph{semi-graphoid} axioms~\cite{Verma:88:UAI}, characterizing the $d$-separation.
However, these well-known axioms are not sufficient to derive the relationships between $d$-separation and interventions.
Therefore, we introduce two axioms 
\axDsepEN{} and \axDsepLN{}
in \Fig{fig:AXCP} for the $d$-separation before/after interventions, 
and four axioms to reason about the relationships between the causal predicate $\nanc$ and the interventions/$d$-separation (named $\textsc{Nanc}_{\{\textsc{1,2,3,4}\}}$ in \Fig{fig:AXCP}).
By \axNancAll{}, \axPaToNanc{}, and \axPaToDsep{}, we transform the formulas using $\allnanc$ and $\pa$ into those with $\nanc$ or $\dsep$.

\PARAGRAPH{Properties of Axiomatization}
For a data generator $\datagenerator$, a set $\Gamma \eqdef \{\psi_1,\dots,\psi_n\}$ of formulas, and a formula $\phi$, we write $\Gamma \vdashg \phi$ if there is a derivation of $\vdashg ( \psi_1 \land \dots \land \psi_n ) \rightarrow \phi$ using axioms of \AX{} or \AXwCP{}. 
We write $\Gamma \modelsg \phi$ if for all model $\M$ and all world $w$ having the data generator $\datagenerator$,\, 
$\M, w \models \phi$.
Then we obtain the \emph{soundness} of  \AX{} and \AXwCP{}.
\begin{restatable}[Soundness]{thm}{ThmSoundComplete}
\label{thm:sound:complete}
Let $\datagenerator$ be a finite, closed, and acyclic data generator.
$\Gamma \subseteq \Fml$, and $\phi\in\Fml$.
If $\Gamma \vdashg \phi$ then $\Gamma \modelsg \phi$.
\end{restatable}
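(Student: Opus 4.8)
The plan is to prove soundness by structural induction, reducing the claim to the soundness of each individual axiom and inference rule, since the judgment $\Gamma \vdashg \phi$ is defined as a Hilbert-style derivation of $\vdashg \psi_1 \land \dots \land \psi_n \to \phi$. First I would observe that because the only genuine inference rule is \axMP{} (modus ponens), which is obviously sound with respect to $\modelsg$ (if $w \models \phi_1$ and $w \models \phi_1 \to \phi_2$ for every world $w$ with data generator $\dgen$, then $w \models \phi_2$), the entire argument collapses to checking that each axiom scheme in \Fig{fig:AX}, \Fig{fig:AXCP}, and \Fig{fig:AXGCP} is \emph{valid} in the sense that $\modelsg \phi$ holds for every finite, closed, acyclic data generator $\dgen$ (or $\modelsg \phi$ under the side conditions, for the $\diag{}$-indexed judgments). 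So the real content is a long but routine case analysis over the axioms.

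I would then group the axioms and dispatch each group using the semantics from \Sec{sub:assertion:semantics} together with the intervention definitions \eqref{eq:model:intvE} and \eqref{eq:model:intvL}. The propositional and equality axioms (\axPT{}, \axEqA{}, \axEqB{}, \axEqName{}, \axEqFunc{}) follow from the definition of $\models$ on equalities, $\sema{\bmu}{w} = \sema{\bmu'}{w}$, and the defining equations $\bmx = n^{(\dgen,\bmx)}$, $\bmy|_{\bmz,\bmx=\bmc} = f^{(\dgen,\cdots)}$. The probability-distribution axioms \axPD{} and \axMPD{} follow from the interpretation of conditional causal variables and the marginalization symbol $\mrgn{\bmx}$ as projection, using $\sema{\bmy|_{\bmz,\bmx=\bmc}(\cdots)}{w} = \semap{\cdots}{w}(\cdots)$ noted in \Sec{sub:possible-worlds}. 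The key intervention axioms \axDGEI{} and \axDGL{} are validated by unfolding the semantics of $\intvE{\subst{\bmc}{\bmx}}\phi$ and $\intvL{\subst{\bmc}{\bmx}}\phi$, which say precisely that $\phi$ holds in the intervened world; since that intervened world is built from the intervened data generator $\dgen\intvE{\subst{\bmc}{\bmx}}$, the reduction to $\vdashx{\dgen\intvE{\subst{\bmc}{\bmx}}}\phi$ is sound by the inductive hypothesis applied to the modified generator. The distribution axioms over $\neg$ and $\land$ (\axDistrE{}, \axDistrL{}) are immediate because intervention acts worldwise and commutes with Boolean structure. The \textsc{Split}, \textsc{Simul}, and \textsc{Rpt} axioms reduce to checking that the simultaneous and iterated replacements in the data generator agree, which is a direct computation on \eqref{eq:model:intvE} and \eqref{eq:model:intvL}.

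The causal-predicate axioms in \Fig{fig:AXCP} are where the interesting semantic facts live, and I would treat them by appealing to the interpretation of $\pa$, $\nanc$, $\allnanc$, $\dsep$ in \Def{def:sem:CP:short} together with standard facts about causal diagrams. The semi-graphoid axioms (\axDsepSm{}, \axDsepDc{}, \axDsepWU{}, \axDsepCn{}) are the classical properties of $d$-separation from~\cite{Verma:88:UAI}, and \axDsepCIndA{}/\axDsepCIndB{} are the soundness of $d$-separation for conditional independence, also from~\cite{Verma:88:UAI}, reflected through the equalities of conditional distributions $\bmy|_{\bmz,\bmx} = \bmy|_{\bmz}$. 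The $\textsc{Dsep}_{\textsc{EI/LI}}$ and $\textsc{Nanc}$ axioms require reasoning about how $d$-separation, ancestors, and non-ancestors change when arrows into $\bmx$ (eager) or out of $\bmx$ (lazy) are removed; for these I would work directly with the edge sets $E$ of $\diag{w}$ and $\diag{w\intvE{\subst{\bmc}{\bmx}}}$, checking that, e.g., removing incoming edges to $\bmz$ cannot create ancestors (\axNancB{}) and that a parent set $\bmz$ of $\bmx$ $d$-separates $\bmx$ from $\bmy$ after a lazy intervention severs $\bmx$'s outgoing edges (\axPaToDsep{}). Finally, the $\AXGCP{}$ axioms in \Fig{fig:AXGCP} are sound essentially by fiat: \axDsepDG{}, \axNancDG{}, \axPaDG{} have side conditions that are literally $\modelsg$ of their conclusions, and \axDGEq{} holds because $\dgen(\bmx) = \bmu$ forces $\sema{\bmx}{w} = \sema{\bmu}{w}$ by the interpretation $\semr{x}{w} = \semr{\dgen(x)}{w}$.

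I expect the main obstacle to be the intervention-interaction axioms, especially \axSimulE{}/\axSimulL{} and the five $\textsc{Dsep}$ and five $\textsc{Nanc}$ axioms, because these require careful bookkeeping of how the data generator and its induced DAG transform under composed or partial interventions, and one must verify that the graph-theoretic side conditions (disjointness of tuples, $\bmx' = \bmx \setminus \bmx'$, etc.) exactly license the claimed equalities of intervened generators. In particular, the lazy-intervention axioms rely on the model expansion in \Sec{sub:lazy:form} (introducing auxiliary variables $x'$ with $x \garrow x'$), so I would first fix that expanded-model convention and verify that the syntax-sugar definition $\intvL{\subst{\bmc}{\bmx}}\phi \equiv \intvE{\subst{\bmc}{\bmx'}}\phi$ is semantically faithful to \eqref{eq:model:intvL}; once that equivalence is established, each \textsc{LI} axiom inherits soundness from its \textsc{EI} counterpart, considerably shortening the argument. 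The remaining \textsc{EILI} axioms \axXpdEL{} and \axXcdEL{} then follow by comparing the eager and lazy intervened worlds on the specific formulas $n = \bmy$ and $f = \bmy|_{\bmz}$, where the $\pos(\bmz)$ hypothesis guarantees the conditional distributions are well-defined.
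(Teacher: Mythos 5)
Your proposal matches the paper's proof: the paper likewise reduces soundness to the validity of each axiom scheme (with \axMP{} as the only genuine rule) and then verifies the axioms group by group in Appendices B and C — basic/equality axioms, \axPD{}/\axMPD{}, the eager- and lazy-intervention axioms via the definitions of the intervened worlds, the semi-graphoid and $d$-separation/$\nanc$ axioms via the induced DAG, and the $\AXGCP{}$ axioms directly from their side conditions. The only cosmetic difference is that the paper proves the \textsc{LI} axioms directly from the lazily intervened world rather than routing them through the eager-intervention syntax sugar, but the decomposition and key semantic facts are the same.
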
%
\conference{We show the proof in Appendices B and C
in the full version~\cite{Kawamoto:22:JELIA:arxiv}.
}%
\arxiv{See \Apps{sec:app:StaCL:sound} and~\ref{sec:StaCL:ax:pred} for the proof.}
As shown in \Sec{sec:reasoning-StaCL},
\AXwCP{} is expressive enough to derive the rules of Pearl's do-calculus~\cite{Pearl:95:biometrika};
it can reason about all causal effects identifiable by the do-calculus
(without referring to a specific causal diagram).
Furthermore, \AX{} includes/derives the axioms used in the previous work~\cite{Barbero:21:jphil} that are complete w.r.t. a different semantics without dealing with probability distributions.
We leave investigating whether \AX{} is complete w.r.t. our Kripke model for future work.
We also remark that
\AXwCP{} has axioms corresponding to the composition and effectiveness axioms introduced by Galles and Pearl~\cite{Galles:98:FoS}.

\begin{figure}[t]
\centering
\begin{screen}[7]
\begin{small}
\hspace{1ex}{\bf Axioms for $d$-separation}
\begin{flushleft}
\[
\renewcommand{\arraystretch}{1.10}
\begin{array}{@{\hspace{-1.0ex}}l@{\hspace{-0.5ex}}l}
\phantom{\mbox{\axXpdEL{}}}~&~~
\\[-7.5ex]
\!\mbox{\axDsepCIndB{}}
&~~~ \vdashg ( \dsep(\bmx, \bmy, \bmz) \land \pos(\bmz)) \rightarrow
\bmy|_{\bmz,\bmx=\bmc} \,{=}\, \bmy|_{\bmz}
\\
\!\mbox{\axDsepSm{}}
&~~~ \vdashg \dsep(\bmx, \bmy, \bmz) \leftrightarrow \dsep(\bmy, \bmx, \bmz)
\\
\!\mbox{\axDsepDc{}}
&~~~ \vdashg \dsep(\bmx, \bmy \cup \bmy', \bmz) 
\rightarrow (\dsep(\bmx, \bmy, \allowbreak \bmz) \land  \dsep(\bmx, \bmy', \bmz))
\\
\!\mbox{\axDsepWU{}}
&~~~ \vdashg \dsep(\bmx, \bmy \cup \bmv, \bmz) \rightarrow \dsep(\bmx, \bmy, \bmz \cup \bmv)
\\
\!\mbox{\axDsepCn{}}
&~~~ \vdashg (\dsep(\bmx, \bmy, \bmz) {\land} \dsep(\bmx, \bmv, \bmz \cup \bmy)) 
\rightarrow \dsep(\bmx, \bmy \cup \bmv, \bmz)
~\hspace{2.5ex}~
\end{array}
\]
\end{flushleft}
\end{small}
\end{screen}

\begin{screen}[7]
\begin{small}
\hspace{1ex}\mbox{\bf Axioms for $d$-separation with interventions}
\begin{flushleft}
\[
\renewcommand{\arraystretch}{1.10}
\begin{array}{@{\hspace{-2.0ex}}l@{\hspace{-0.5ex}}l}
\phantom{\mbox{\axXpdEL{}}}~&~~
\\[-9.5ex]
\!\mbox{\axDsepEN{}}
&~~~ \vdashg (\intvE{\subst{\bmc}{\bmz}} \dsep(\bmx, \bmy, \bmz)) \leftrightarrow
  \dsep(\bmx, \bmy, \bmz)
\\
\!\mbox{\axDsepLN{}}
&~~~ \vdashg (\intvL{\subst{\bmc}{\bmz}} \dsep(\bmx, \bmy, \bmz)) \leftrightarrow
  \dsep(\bmx, \bmy, \bmz)
~\hspace{19.5ex}~
\end{array}
\reducespace{-0.9ex}
\]
\end{flushleft}
\end{small}
\end{screen}

\begin{screen}[7]
\begin{small}
\begin{flushleft}
\hspace{1ex}{\bf Axioms with other causal predicates}
\[
\renewcommand{\arraystretch}{1.10}
\begin{array}{@{\hspace{-1.0ex}}l@{\hspace{-0.8ex}}l}
\phantom{\mbox{\axXpdEL{}}}~&~~
\\[-5.5ex]
\!\mbox{\axNancAB{}}
&~~~ \vdashg ( \nanc(\bmx, \bmy) {\land} \nanc(\bmx, \bmz) ) 
\rightarrow 
(f = \bmy|_{\bmz} \leftrightarrow \intvE{\subst{\bmc}{\bmx}} (f = \bmy|_{\bmz}))
\\
\!\mbox{\axNancB{}}
&~~~ \vdashg \nanc(\bmx, \bmy) \leftrightarrow  \intvE{\subst{\bmc}{\bmx}} \nanc(\bmx, \bmy)
\\
\!\mbox{\axNancC{}}
&~~~ \vdashg \nanc(\bmx, \bmy) \rightarrow \intvE{\subst{\bmc}{\bmx}} \dsep(\bmx, \bmy, \emptyset)
\\
\!\mbox{\axNancD{}}
&~~~ \vdashg (\nanc(\bmx, \bmz) \,{\land}\, \dsep(\bmx, \bmy, \bmz)) \,{\rightarrow}\, \nanc(\bmx, \bmy)
\hspace{-4ex}~
\\
\!\mbox{\axNancAll{}}
&~~~ \vdashg \allnanc(\bmx, \bmy, \bmz) \rightarrow \nanc(\bmx, \bmz)
\hspace{27.5ex}~
\\
\!\mbox{\axPaToNanc{}}
&~~~ \vdashg \pa(\bmx, \bmy) \rightarrow \nanc(\bmy, \bmx)
\\
\!\mbox{\axPaToDsep{}}
&~~~ \vdashg \pa(\bmz, \bmx) \rightarrow
\intvL{\subst{\bmc}{\bmx}} \dsep(\bmx, \bmy, \bmz)
\end{array}
\reducespace{-0.2ex}
\]
\end{flushleft}
\end{small}
\end{screen}
\caption{The additional axioms for $\AXCP{}\!$ where
$\bmx, \bmy, \bmy'\!, \bmz, \bmv \,{\in} \cvar^+$ are \allowbreak disjoint,
$\bmc \,{\in}\, \dConst^+$, 
and 
$f {\in}\, \Func$.
}
\label{fig:AXCP}
\end{figure}

\section{Reasoning About Statistical Causality}
\label{sec:reasoning-StaCL}

\begin{figure*}[t]
\begin{footnotesize}
\hspace{-4ex}
\begin{minipage}[b]{\linewidth}
\centering
\infer [\!\mbox{\axMPD, \axEqB}]
{
  \psiPre \vdashg ( \intvE{\subst{\bmc}{\bmx}} \nA = \bmy )
  \leftrightarrow ( \psi_1 \land \psi_2 \land \psi_3 )
}
{
  \infer [\!\mbox{\axPD, \axEqB}]
  {
    \psiPre \vdashg ( \intvE{\subst{\bmc}{\bmx}} \nA = ( \bmy\joint\bmz)\!\mrgn{\bmy} )
    \leftrightarrow ( \psi_1 \land \psi_2 \land \psi_3 )
  }
  {
    \infer[\!\mbox{\axEqName, \axEqFunc, \axEqB}]
    {
      \psiPre \vdashg ( \intvE{\subst{\bmc}{\bmx}} \nA = ( \bmy|_{\bmz}(\bmz))\!\mrgn{\bmy} )
      \leftrightarrow ( \psi_1 \land \psi_2 \land \psi_3 )
    }
    {
      \infer[\!\mbox{\axDistrE}^{\land}]
      {
        \psiPre \vdashg ( \intvE{\subst{\bmc}{\bmx}} ( \psi_0 \land \psi_2 \land \psi_3 ) )
        \leftrightarrow ( \psi_1 \land \psi_2 \land \psi_3 )
      }
      {
        \infer[]
        {
          \psiPre \vdashg ( \intvE{\subst{\bmc}{\bmx}} \psi_0 \land \intvE{\subst{\bmc}{\bmx}} \psi_2 \land \intvE{\subst{\bmc}{\bmx}} \psi_3 )
          \leftrightarrow ( \psi_1 \land \psi_2 \land \psi_3 )
        }
        {
          \infer[\!\mbox{\axDoB}]
          {
            \vdashg \psiDA \rightarrow ( ( \intvE{\subst{\bmc}{\bmx}} \psi_0 ) {\leftrightarrow} \psi_1 )
          }
          {
          }
          \hspace{-1ex}
          &
          \infer[\!\mbox{\axDoC}]
          {
            \vdashg \psiDB \rightarrow ( ( \intvE{\subst{\bmc}{\bmx}} \psi_2 ) {\leftrightarrow} \psi_2 )
          }
          {
            \infer[\!\mbox{\axNancC}]
            {
              \vdashg \psiNanc \rightarrow ( ( \intvE{\subst{\bmc}{\bmx}} \psi_2 ) {\leftrightarrow} \psi_2 )
            }
            {
            }
          }
          \hspace{-3.5ex}
          &
          \infer[\!\mbox{\axEqEI}~]
          {
            \vdashg ( \intvE{\subst{\bmc}{\bmx}} \psi_3 ) {\leftrightarrow} \psi_3
          }
          {
          }
        }
      }
    }
  }
}
\end{minipage}
\end{footnotesize}
\vspace{-1ex}
\caption{Sketch of a derivation tree for the correctness of the backdoor adjustment (\Sec{sec:overview}) using \AXwCP{}
where
$\psiPos \eqdef \pos(\bmz\joint\bmx)$,\,
$\psiDA \eqdef \intvL{\subst{\bmc}{\bmx}} \dsep(\bmx, \bmy, \bmz) \land \psiPos$,\,
$\psiDB \eqdef \intvE{\subst{\bmc}{\bmx}} \dsep(\bmx, \bmz, \emptyset) \land \psiPos$,\,
$\psiNanc \eqdef \nanc(\bmx, \bmz) \land \psiPos$,\,
$\psiPre \eqdef \psiDA \land \psiNanc$,\,
$\psi_0 \eqdef ( f = \bmy|_{\bmz} )$,\,
$\psi_1 \eqdef ( f = \bmy|_{\bmz,\bmx=\bmc} )$,\,
$\psi_2 \eqdef ( \nB = \bmz )$,\, and
$\psi_3 \eqdef ( \nA = f(\nB)\!\mrgn{\bmy} )$.
}
\label{fig:overview:proof2:drug}
\end{figure*}

\PARAGRAPHws{Deriving the Rules of the Do-Calculus}
Using \StaCL{}, we express the \emph{do-calculus}'s rules~\cite{Pearl:95:biometrika},
which are sufficient to compute all identifiable causal effects from observable quantities~\cite{Huang:06:UAI,Shpitser:06:UAI}.
Let $\fv{\phi}$ be
the set of all variables occurring in a formula $\phi$,
and $\cdv{\phi}$ be the set of all \emph{conditioning variables} in~$\phi$.

\begin{restatable}[Do-calculus rules]{prop}{PropDoCalculus}
\label{prop:do-calculus}
Let $\bmv, \bmx, \bmy,\bmz\in\cvar^+$ be disjoint,
$\bmxB, \bmxC \in \cvar^+$,
and $\bmcA,\bmcB,\bmcC\in\dConst^+$.
Let $S = \cdv{\phi_0} \cup \cdv{\phi_1}$.
\begin{enumerate}
\item 
\axDoA{}.
~\hspace{0ex}~
Introduction/elimination of conditioning:
\begin{align*}
\hspace{-3ex}
\vdashg
& \intvE{\subst{\bmcA}{\bmv}} 
 ( \dsep(\bmx, \bmy, \bmz) \land {\textstyle \bigwedge_{\bms \in S}}\, \pos(\bms) ) 
\rightarrow 
( (\intvE{\subst{\bmcA}{\bmv}} \phi_0 )
  \leftrightarrow
  \intvE{\subst{\bmcA}{\bmv}} \phi_1)
\end{align*} 
where $\phi_1$ is obtained by replacing some occurrences of $\bmy|_{\bmz}$ in $\phi_0$ with $\bmy|_{\bmz,\bmx=\bmcB}$;
\item 
\axDoB{}.
~\hspace{0ex}~
Exchange between intervention and conditioning:
\begin{align*}
\hspace{-3ex}
\vdashg
& \intvE{\subst{\bmcA}{\bmv}} \intvL{\subst{\bmcB}{\bmx}} 
 ( \dsep(\bmx, \bmy, \bmz) \land {\textstyle \bigwedge_{\bms \in S}}\, \pos(\bms) )
\rightarrow\!%
( ( \intvE{\subst{\bmcA}{\bmv},\subst{\bmcB}{\bmx}} \phi_0 )
  \leftrightarrow
  \intvE{\subst{\bmcA}{\bmv}} \phi_1 )
\end{align*}
where $\phi_1$ is obtained by replacing every occurrence of $\bmy|_{\bmz}$ in $\phi_0$ with $\bmy|_{\bmz,\bmx=\bmcB}$;
\item 
\axDoC{}
~\hspace{0ex}~
Introduction/elimination of intervention:
\begin{align*}
\hspace{-3ex}
\vdashg
& 
\intvE{\subst{\bmcA}{\bmv}} 
(\allnanc(\bmxB, \bmx, \bmy) \land
\intvE{\subst{\bmcB}{\bmxB}} 
 ( \dsep(\bmx, \bmy, \bmz) \land \pos(\bmz) ) )
\\
& \hspace{0ex}\!%
\rightarrow 
( ( \intvE{\subst{\bmcA}{\bmv}} \phi)
  \leftrightarrow
  \intvE{\subst{\bmcA}{\bmv},\subst{\bmcB}{\bmxB},\subst{\bmcC}{\bmxC}} \phi )
\end{align*}
where 
$\fv{\phi} = \{ \bmy|_{\bmz} \}$
and $\bmx \eqdef \bmxB\joint\bmxC$.
\end{enumerate}
\end{restatable}

By using the deductive system \AXwCP{}, we can derive those rules.
Thanks to the modal operators for lazy interventions, our derivation of those rules is partly different from Pearl's~\cite{Pearl:95:biometrika} in that it does not use diagrams augmented with the intervention arc of the form $F_{x} \garrow x$
\conference{(See Appendix D
in the full version~\cite{Kawamoto:22:JELIA:arxiv}).
}%
\arxiv{(See \App{sec:appendix:StaCL:reasoning} for details).}

\PARAGRAPH{Reasoning About Statistical Adjustment}
We present how \AXwCP{} can be used to reason about the correctness of the backdoor adjustment discussed in \Sec{sec:overview}
\conference{(See Appendix A.6 
in the full version~\cite{Kawamoto:22:JELIA:arxiv}
for the details of the backdoor adjustment).}%
\arxiv{(See \App{sub:appendix:def:diagrams} for the details of the backdoor adjustment).}
\Fig{fig:overview:proof2:drug} shows the derivation of the judgment:
\begin{align} \label{eq:general:backdoor}
\psiPre \vdashg ( \intvE{\subst{\bmc}{\bmx}} \nA = \bmy) \leftrightarrow ( \psi_1 \land \psi_2 \land \psi_3 ).
\end{align}

This judgment asserts the correctness of the backdoor adjustment in any causal diagram.
Recall that $\phiRCT \eqdef (\intvE{\subst{\bmc}{\bmx}} \nA = \bmy)$ expresses the RCT and $\phiBDA \eqdef ( \psi_1 \allowbreak \land \psi_2 \land \psi_3 )$ expresses the backdoor adjustment.
The correctness of the backdoor adjustment ($\phiRCT \leftrightarrow \phiBDA$)
depends on the precondition $\psiPre$.

By reading the derivation tree in a bottom-up manner, we observe that the proof first converts $(\intvE{\subst{\bmc}{\bmx}} \nA = \bmy)$ to a formula to which \axEqName{} and \axEqFunc{} are applicable.
Then, the derived axioms \axDoB{} and \axDoC{} in Proposition~\ref{prop:do-calculus} are used to complete the proof at the leaves of the derivation.

In \Sec{sec:overview}, we stated the correctness of the backdoor adjustment
in \eqref{eq:correctness}
 using a simpler requirement $\pa(z, x)$ instead of $\psiDA$ and $\psiNanc$.
We can derive the judgment \eqref{eq:correctness} from \eqref{eq:general:backdoor},
thanks to the axioms \axPaToDsep{} and \axPaToNanc{}.

The derivation does not mention the data generator $\dgen{}$ representing the causal diagram~$\diag{}$.
This exhibits that our logic successfully separates the reasoning about the properties of arbitrary causal diagrams
from those 
depending on a specific causal diagram.
Once we prove $\psiPre \vdashg \phiRCT \leftrightarrow \phiBDA$ using $\AXCP{}$, 
one can claim the correctness of the causal inference $( \phiRCT \leftrightarrow \phiBDA )$ by checking that the requirement $\psiPre$ indeed holds for a specific causal diagram $\diag{}$.

\section{Conclusion}
\label{sec:conclude}

We proposed statistical causality language (\StaCL{}) to formally describe and explain the correctness of statistical causal inference.
We introduced the notion of causal predicates 
and Kripke models equipped with data generators.
We defined a sound deductive system \AXwCP{}
that can deduce all causal effects derived using Pearl's do-calculus.
In ongoing and future work, we study the completeness of \AX{} and \AXwCP{}
and develop a decision procedure for \AXwCP{} 
for automated reasoning.

\rulesp
\PARAGRAPH{Acknowledgements}\,\!%
We thank Kenji Fukumizu for providing helpful information on the literature on causal inference.
The authors are supported by ERATO HASUO Metamathematics for Systems Design Project (No. JPMJER1603), JST.
Yusuke Kawamoto is supported by JST, PRESTO Grant Number JPMJPR2022, Japan, and by JSPS KAKENHI Grant Number 21K12028, Japan.
Tetsuya Sato is supported by JSPS KAKENHI Grant Number 20K19775, Japan.
Kohei Suenaga is supported by JST CREST Grant Number JPMJCR2012, Japan.

\bibliographystyle{splncs04}
\bibliography{short,short-causal,short-stat}

\begin{thebibliography}{10}
\providecommand{\url}[1]{\texttt{#1}}
\providecommand{\urlprefix}{URL }
\providecommand{\doi}[1]{https://doi.org/#1}

\bibitem{Barbero:21:jphil}
Barbero, F., Sandu, G.: Team semantics for interventionist counterfactuals:
  Observations vs. interventions. J. Philos. Log.  \textbf{50}(3),  471--521
  (2021). \doi{10.1007/s10992-020-09573-6}

\bibitem{Barbero:20:Dali}
Barbero, F., Schulz, K., Smets, S., Vel{\'{a}}zquez{-}Quesada, F.R., Xie, K.:
  Thinking about causation: {A} causal language with epistemic operators. In:
  Proc. the third International Workshop on Dynamic Logic (DaL{\'{\i}}'20).
  LNCS, vol. 12569, pp. 17--32. Springer (2020).
  \doi{10.1007/978-3-030-65840-3\_2}

\bibitem{Bochman:21:book}
Bochman, A.: A logical theory of causality. MIT Press (2021)

\bibitem{Bochman:15:AAAI}
Bochman, A., Lifschitz, V.: {Pearl}'s causality in a logical setting. In: Proc.
  the Twenty-Ninth {AAAI} Conference on Artificial Intelligence. pp.
  1446--1452. {AAAI} Press (2015),
  \url{http://www.aaai.org/ocs/index.php/AAAI/AAAI15/paper/view/9686}

\bibitem{Corander:19:APAL}
Corander, J., Hyttinen, A., Kontinen, J., Pensar, J.,
  V{\"{a}}{\"{a}}n{\"{a}}nen, J.: A logical approach to context-specific
  independence. Ann. Pure Appl. Log.  \textbf{170}(9),  975--992 (2019).
  \doi{10.1016/j.apal.2019.04.004}

\bibitem{Durand:16:Foiks}
Durand, A., Hannula, M., Kontinen, J., Meier, A., Virtema, J.: Approximation
  and dependence via multiteam semantics. In: Gyssens, M., Simari, G.R. (eds.)
  Proc. the 9th International Symposium on the Foundations of Information and
  Knowledge Systems ({FoIKS}'16). LNCS, vol.~9616, pp. 271--291. Springer
  (2016). \doi{10.1007/978-3-319-30024-5\_15}

\bibitem{Durand:18:foiks}
Durand, A., Hannula, M., Kontinen, J., Meier, A., Virtema, J.: Probabilistic
  team semantics. In: Proc. the 10th International Symposium on the Foundations
  of Information and Knowledge Systems ({FoIKS}'18). LNCS, vol. 10833, pp.
  186--206. Springer (2018). \doi{10.1007/978-3-319-90050-6\_11}

\bibitem{Fernandes-Taylor:11:BMCRN}
Fernandes-Taylor, S., Hyun, J.K., Reeder, R.N., Harris, A.H.: Common
  statistical and research design problems in manuscripts submitted to
  high-impact medical journals. BMC Research Notes  \textbf{4}(1), ~304 (2011).
  \doi{10.1186/1756-0500-4-304}, \url{https://doi.org/10.1186/1756-0500-4-304}

\bibitem{Galles:98:FoS}
Galles, D., Pearl, J.: An axiomatic characterization of causal counterfactuals.
  Foundations of Science  \textbf{3},  151--182 (1998)

\bibitem{Halpern:00:JAIR}
Halpern, J.Y.: Axiomatizing causal reasoning. J. Artif. Intell. Res.
  \textbf{12},  317--337 (2000). \doi{10.1613/jair.648},
  \url{https://doi.org/10.1613/jair.648}

\bibitem{Halpern:15:IJCAI}
Halpern, J.Y.: A modification of the {H}alpern-{P}earl definition of causality.
  In: Proc. {IJCAI}'15. pp. 3022--3033. {AAAI} Press (2015)

\bibitem{Halpern:01:IJCAI}
Halpern, J.Y., Pearl, J.: Causes and explanations: {A} structural-model
  approach - part {II:} explanations. In: Proc. {IJCAI}'01. pp. 27--34. Morgan
  Kaufmann (2001)

\bibitem{Halpern:01:UAI}
Halpern, J.Y., Pearl, J.: Causes and explanations: {A} structural-model
  approach: Part 1: Causes. In: Proc. {UAI}'01. pp. 194--202. Morgan Kaufmann
  (2001)

\bibitem{Hirvonen:19:TAMC}
Hirvonen, {\AA}., Kontinen, J., Pauly, A.: Continuous team semantics. In: Proc.
  the 15th Annual Conference on Theory and Applications of Models of
  Computation ({TAMC}'19). LNCS, vol. 11436, pp. 262--278. Springer (2019).
  \doi{10.1007/978-3-030-14812-6\_16}

\bibitem{Hodges:97:IGPL}
Hodges, W.: Compositional semantics for a language of imperfect information.
  Log. J. {IGPL}  \textbf{5}(4),  539--563 (1997). \doi{10.1093/jigpal/5.4.539}

\bibitem{Huang:06:UAI}
Huang, Y., Valtorta, M.: {Pearl}'s calculus of intervention is complete. In:
  Proc. {UAI}'06. p. 217^^e2^^80^^93224. AUAI Press (2006)

\bibitem{Hyttinen:14:UAI}
Hyttinen, A., Eberhardt, F., J{\"{a}}rvisalo, M.: Constraint-based causal
  discovery: Conflict resolution with answer set programming. In: Proc. the
  Thirtieth Conference on Uncertainty in Artificial Intelligence ({UAI}'14).
  pp. 340--349. {AUAI} Press (2014)

\bibitem{Hyttinen:15:UAI}
Hyttinen, A., Eberhardt, F., J{\"{a}}rvisalo, M.: Do-calculus when the true
  graph is unknown. In: Proc. the Thirty-First Conference on Uncertainty in
  Artificial Intelligence ({UAI}'15). pp. 395--404. {AUAI} Press (2015)

\bibitem{Ibeling:20:AAAI}
Ibeling, D., Icard, T.: Probabilistic reasoning across the causal hierarchy.
  In: Proc. the Thirty-Fourth {AAAI} Conference on Artificial Intelligence
  ({AAAI}'20). pp. 10170--10177. {AAAI} Press (2020),
  \url{https://aaai.org/ojs/index.php/AAAI/article/view/6577}

\bibitem{Kawamoto:19:FC}
Kawamoto, Y.: Statistical epistemic logic. In: The Art of Modelling
  Computational Systems: {A} Journey from Logic and Concurrency to Security and
  Privacy. LNCS, vol. 11760, pp. 344--362. Springer (2019).
  \doi{10.1007/978-3-030-31175-9\_20}

\bibitem{Kawamoto:19:SEFM}
Kawamoto, Y.: Towards logical specification of statistical machine learning.
  In: Proc. {SEFM}. pp. 293--311 (2019). \doi{10.1007/978-3-030-30446-1\_16}

\bibitem{Kawamoto:20:SoSyM}
Kawamoto, Y.: An epistemic approach to the formal specification of statistical
  machine learning. Software and Systems Modeling  \textbf{20}(2),  293--310
  (2020). \doi{10.1007/s10270-020-00825-2}

\bibitem{Kawamoto:07:JSIAM}
Kawamoto, Y., Mano, K., Sakurada, H., Hagiya, M.: Partial knowledge of
  functions and verification of anonymity. Transactions of the Japan Society
  for Industrial and Applied Mathematics  \textbf{17}(4),  559--576 (2007).
  \doi{10.11540/jsiamt.17.4\_559}

\bibitem{Kawamoto:21:KR}
Kawamoto, Y., Sato, T., Suenaga, K.: Formalizing statistical beliefs in
  hypothesis testing using program logic. In: Proc. {KR'21}. pp. 411--421
  (2021). \doi{10.24963/kr.2021/39}

\bibitem{Kawamoto:22:arxiv}
Kawamoto, Y., Sato, T., Suenaga, K.: Sound and relatively complete belief
  {H}oare logic for statistical hypothesis testing programs. CoRR
  \textbf{abs/2208.07074} (2022)

\bibitem{Makin:19:elife}
Makin, T.R., de~Xivry, J.J.O.: Science forum: Ten common statistical mistakes
  to watch out for when writing or reviewing a manuscript. Elife  \textbf{8},
  e48175 (2019)

\bibitem{McCain:97:AAAI}
McCain, N., Turner, H.: Causal theories of action and change. In: Proc. the
  Fourteenth National Conference on Artificial Intelligence and Ninth
  Innovative Applications of Artificial Intelligence Conference
  ({AAAI}'97/{IAAI}'97). pp. 460--465. {AAAI} Press / The {MIT} Press (1997)

\bibitem{Moher:12:CONSORT}
Moher, D., Hopewell, S., Schulz, K.F., Montori, V., G{\o}tzsche, P.C.,
  Devereaux, P., Elbourne, D., Egger, M., Altman, D.G.: Consort 2010
  explanation and elaboration: updated guidelines for reporting parallel group
  randomised trials. International journal of surgery  \textbf{10}(1),  28--55
  (2012)

\bibitem{Pearl:95:biometrika}
Pearl, J.: Causal diagrams for empirical research. Biometrika  \textbf{82}(4),
  669--688 (1995), \url{http://www.jstor.org/stable/2337329}

\bibitem{Pearl:09:causality}
Pearl, J.: Causality. Cambridge university press (2009)

\bibitem{Ruckschloss:22a:ICLP}
R{\"{u}}ckschlo{\ss}, K., Weitk{\"{a}}mper, F.: Exploiting the full power of
  {Pearl}'s causality in probabilistic logic programming. In: Proc. the 9th
  Workshop on Probabilistic Logic Programming ({PLP}'22). {CEUR} Workshop
  Proceedings, vol.~3193. CEUR-WS.org (2022),
  \url{http://ceur-ws.org/Vol-3193/paper1PLP.pdf}

\bibitem{Shpitser:06:UAI}
Shpitser, I., Pearl, J.: Identification of conditional interventional
  distributions. In: Proc. {UAI}'06. p. 437^^e2^^80^^93444. AUAI Press (2006)

\bibitem{Simposon:51:RSS}
Simpson, E.H.: The interpretation of interaction in contingency tables. Journal
  of the Royal Statistical Society. Series B (Methodological)  \textbf{13}(2),
  238--241 (1951), \url{http://www.jstor.org/stable/2984065}

\bibitem{Triantafillou:15:JMLR}
Triantafillou, S., Tsamardinos, I.: Constraint-based causal discovery from
  multiple interventions over overlapping variable sets. J. Mach. Learn. Res.
  \textbf{16},  2147--2205 (2015)

\bibitem{Verma:88:UAI}
Verma, T., Pearl, J.: Causal networks: semantics and expressiveness. In: Proc.
  {UAI}'88. pp. 69--78. North-Holland (1988)

\bibitem{von:07:STROBE}
Von~Elm, E., Altman, D.G., Egger, M., Pocock, S.J., G{\o}tzsche, P.C.,
  Vandenbroucke, J.P.: The strengthening the reporting of observational studies
  in epidemiology (strobe) statement: guidelines for reporting observational
  studies. Bulletin of the World Health Organization  \textbf{85},  867--872
  (2007)

\end{thebibliography}

\arxiv{%
\appendix
\conference{%
We present the technical details omitted in the paper:
}%
\arxiv{%
\section*{Appendix}
We present the following technical details:
}%

\begin{itemize}
\item \App{sec:appendix:models} presents the details of our models and causality.
\item \App{sec:app:StaCL:sound} proves the soundness of the deductive system \AX{} for \StaCL{}.
\item \App{sec:StaCL:ax:pred} shows the soundness of the deductive system \AXwCP{} for \StaCL{} with causal predicates.
\item \App{sec:appendix:StaCL:reasoning} presents the details of the reasoning and the explanation about statistical causality using \StaCL{}.
\end{itemize}

We first introduce notations.
We present key notations in \Tbls{tbl:key-symbols:syn} and~\ref{tbl:key-symbols:sem}.

For tuples $\bm{x}$ and $\bm{x'}$ of variables, we write $\bm{x} \subseteq \bm{x'}$ iff every variable in $\bm{x}$ appears in $\bm{x'}$.
$\bmx \setminus \bmy$ is the tuple of variables obtained by removing all variables in $\bmy$ from $\bmx$.
As with $\joint$, the symbol `$\setminus$' is a meta-operator on sets of variables, and \emph{not} a function symbol.
For brevity, we identify a singleton tuple $\la x \ra$ as its element $x$.

For $u,u'\in\Term$ and $x\in\cvar$, the \emph{substitution} $u[x \mapsto u']$ is the term obtained by replacing every occurrence of $x$ in $u$ with $u'$.

We recall that a \emph{memory} is a joint probability distribution $\memory \in \Dists(\var \rightarrow \calo\cup\{\bot\})$ of data values of all variables in $\var$.
We write 
$\memory(\bmx)$ 
for the joint distribution of all variables 
in $\bmx$.

\section{Details on Models and Causality}
\label{sec:appendix:models}

In this section, we show 
a couple of remarks on the interpretation of terms (\App{sub:app:remark:interpretation}),
the relationships between data generators and causal diagrams (\App{sub:app:relation:DG:CD}),
and properties on memories (\App{sub:appendix:basic}).
Then we present more details on
causal predicates (\App{sub:appendix:def:CP}), causal effects (\App{sub:app:causal:effect}), and causal diagrams (\App{sub:appendix:def:diagrams}).

\begin{table}[!h]
\caption{Notations in syntax.}
\label{tbl:key-symbols:syn}
\centering
\begin{tabular}{@{}ll@{}} 
\toprule
Symbol & Description
\\ \midrule
$\cvar$ & Set of causal variable \\
$\fvar$ & Set of conditional causal variables \\
$\fv{\phi}$ & Set of all free variables in a formula $\phi$ \\
$\cdv{\phi}$ & Set of all conditioning variables in a formula $\phi$ \\
\hline
$\Func$ & Set of function symbols \\
$\pFunc$ & Set of probabilistic function symbols \\
$\dFunc$ & Set of deterministic function symbols \\
\hline
$\Const$ & Set of constants \\
$\dConst$ & Set of deterministic constants \\
$\bmbot$ & Constant denoting the undefined value \\
\hline
$\CTerm$ & Set of causal terms \\
$\Term$ & Set of terms \\
\hline
$\Pred$ & Set of predicates \\
$\CPred$ & Set of causal predicates \\
$\Fml$ & Set of (causality) formulas \\
\bottomrule
\end{tabular}
\end{table}

\begin{table}[!h]
\caption{Notations in semantics.}
\label{tbl:key-symbols:sem}
\centering
\begin{tabular}{@{}l@{}l@{}} 
\toprule
Symbol & Description
\\ \midrule
$\M$ & Kripke model \\
$\calw$ & Set of all possible worlds \\
$\relExc$ & Intervention relation \\
$\calv$ & Valuation \\
$\calo$ & Domain of data values \\
$\Dists \calo$ & Set of all probability distributions over $\calo$ \\
\hline
$w$ & Possible world \\
$w\intvE{\subst{c}{x}}$ & Eagerly intervened world \\
$w\intvL{\subst{c}{x}}$ & Lazily intervened world \\
$\datagen{w}$ & Data generator in a world $w$ \\
$x \precg y$ & $y$'s value depends on $x$'s in a data generator $\dgen$ \\
$\semf_{w} \,{=}\, \{ \semf_{w}^{\rand} \}_{\rand \sim I}$ & Interpretation of function symbols in a world $w$ \\
$\memory_{w}$ & Memory on variables in a world $w$ \\
\hline
$\diag{w} \,{=}\, (U, V, E)$ ~& Causal diagram in a world $w$ \\
$\pa_{\diag{w}}(v)$ & All parent variables of $v$ in $\diag{w}$ \\
$P_{\diag{w}}(V)$ & Joint distribution of all variables $V$ in $\diag{w}$ \\
\bottomrule
\end{tabular}
\end{table}

\subsection{Remarks on the Interpretation of Terms}
\label{sub:app:remark:interpretation}

We present remarks on the \emph{at-most-once condition}, on deterministic functions, and on the well-definedness of the interpretation of terms.

\subsubsection{Remark on At-Most-Once Condition}

We remark on the at-most-once condition.
In \Sec{sec:data},
we assumed that a data generator satisfies the following \emph{at-most-once} condition: 
Each function symbol $f$ and each constant $c$ 
can be used at most once in a single data generator.
For example, we may consider the data generator $\dgen_3$ defined by
$c \garrow_{\dgen_3} z$ and $f(z,z) \garrow_{\dgen_1} y$ as it is.
Then $\dgen_3$ rewrites $y$ into $f(c, c)$ after substitutions.
In contrast, the data generator $\dgen_4$ defined by
\begin{align}
\label{eq:at-most-once-no}
c \garrow_{\dgen_4} z_1,~c \garrow_{\dgen_4} z_2,~f(z_1,z_2) \garrow_{\dgen_4} y
\end{align}
also rewrites $y$ into $f(c, c)$, but does not satisfy the at-most-once condition.
Thus, the two calls of the constant $c$ should be distinguished and replaced with two symbols $c_1$ and $c_2$
(denoting the same distribution of data values as $c$):
\begin{align}
\label{eq:at-most-once-yes}
c_1 \garrow_{\dgen_4} z_1,~c_2 \garrow_{\dgen_4} z_2,~f(z_1,z_2) \garrow_{\dgen_4} y.
\end{align}
Then $\dgen_4$ rewrites $y$ into $f(c_1, c_2)$.
This at-most-once condition clarifies that an occurrence of a probabilistic constant $c_i$ represents a single independent sampling.
In the former definition~\eqref{eq:at-most-once-no} of $g_4$, which does not satisfy the at-most-once condition, it is not clear whether (i) $c$ is sampled once and its (single) value is assigned to both $y_1$ and $y_2$, or (ii) it is sampled twice and the drawn 
 (two) values are assigned to $y_1$ and $y_2$.
By imposing the at-most-once condition as in the latter definition~\eqref{eq:at-most-once-yes}, we can clarify that there are two occurrences of sampling that may use different randomness.

\subsubsection{Remark on Interpretation of Terms}
We remark that two copies of the same data value are obtained from the distribution $\sema{\la c, c \ra}{w}$, whereas two different values may be drawn from $\sema{\la c_1, c_2 \ra}{w}$ for constants $c_1, c_2$ denoting the same distribution, i.e., $\sema{c_1}{w} = \sema{c_2}{w}$.
Indeed, for a randomly chosen $\rand$,
the former results in $\semrxg{\la c, c \ra} = (\semf^\rand(c), \allowbreak \semf^\rand(c))$.
In contrast, the latter results in $\semrxg{\la c_1, c_2 \ra} = (\semf^\rand(c_1), \allowbreak \semf^\rand(c_2))$,
where two data values are sampled independently.
Notice that this definition is consistent with the at-most-once condition on a data generator $\dgen$.

\subsubsection{Remark on Deterministic Functions}
As a remark, we consider probabilistic and deterministic functions.
Let $\pFunc \subseteq \Func$ be the set of all \emph{probabilistic function symbols}, 
each denoting a randomized algorithm that produces a data value using a randomness drawn internally
(e.g., a function that returns a number obtained by adding a random number to an input).
Let $\dFunc \eqdef \Func \setminus \pFunc$ be the set of all \emph{deterministic function symbols}, denoting deterministic algorithms (e.g., $+$ and $-$).

If $f \in \dFunc$, the interpretation of $f$ is independent of the randomness;
i.e., $\semf^{\rand}(f) = \semf^{\rand'}(f)$ for all $\rand, \rand' \in \cali$,
hence $\semf(f)$ maps $k$ data values $\bm{o}$ to the Dirac distribution $\delta_{\semf^{\rand}(f)(\bm{o})}$, having a single value $\semf^{\rand}(f)(\bm{o})$ with probability $1$.

We may relax the at-most-once condition in \Sec{sec:data} so that deterministic function symbols do not have to satisfy the condition.
This is because the interpretation of a deterministic function symbol $f$ is the same in every occurrence of~$f$.

\subsubsection{Well-definedness of the Interpretation}

We next show that the interpretation $\sema{\_}{w}$ of terms in a world $w$ is unique thanks to the assumption on the strict partial order $\prec$ over the defined causal variables $\dom{\datagen{w}}$ as follows.

\setcounter{prop}{2} 
\begin{restatable}[Well-definedness of $\sema{\_}{w}$]{prop}{PropUnique}
\label{prop:unique:interpretation}
Let $u$ be a term and $w = (\semf, \datagen{}, \mem{})$ be a world such that $\datagen{}$ is closed.
Then we have $\sema{u}{w} \in \Dists\calo^k$.
\end{restatable}

\begin{proof}
Since $\datagen{}$ is closed, we have
$\fv{\range(\dgen)} \subseteq \dom{\dgen}$.
By the definition of $\sema{u}{w}$ 
in \Sec{sec:model},
it suffices to show $\semrxg{u} \in \calo^k$ for each $r \in \cali$.

By the definition of the possible world $w$,\, $\datagen{}$ is finite and acyclic.
Then, 
$\precg$ is the strict partial order over $\dom{\datagen{}}$ defined in 
\Sec{sec:data}.
For a tuple $\bmx = \la x_1, \ldots, x_k \ra$ of variables, let $\cntv{\bmx}$ be the number of variables $z$ such that $z \precg x_i$ for some $i = 1, \ldots, k$.
Since $\dom{\datagen{}}$ is a finite set, $\cntv{\bmx}$ is finite.

Let $r \in \cali$.
We first show that for any $\bmx = \la x_1, \ldots, x_k \ra \in \dom{\datagen{}}^{k}$, 
we have $\semrxg{\bmx} \,{\in}\, \calo^k$ by induction on $\cntv{\bmx}$.
\begin{itemize}
\item Case $\cntv{\bmx} = 0$.
By definition, there is no variable $z$ such that $z \precgw x_i$ for any $i = 1, \ldots, k$;
hence $\fv{\datagen{}(\bmx)} = \emptyset$.
Since $\datagen{}$ is closed,
for each $i = 1, \ldots, k$,
$\sema{\datagen{}(x_i)}{w}$ is defined and represented as
$\sema{c_i}{w}$ or 
$\sema{f_i(c_{i1}, \ldots, c_{il})}{w}$ 
for constants $c_i, c_{i1}, \ldots, c_{il}$.
Therefore,
\[
\semrxg{\bmx} = \semrxg{\la \datagen{}(x_1), \ldots, \datagen{}(x_k) \ra} 
\in \calo^k
\]
follows immediately from the definition of $\semf^{\rand}$.

\item Case $\cntv{\bmx} > 0$.
Let $\bmz \eqdef \la z_1,\ldots, z_l\ra \subseteq \fv{\datagen{}(\bmx)}$ for $l \ge 1$.
From the strict partial order structure of $\precg$, we have $\cntv{\bmz} < \cntv{\bmx}$.
By induction hypothesis, $\semrxg{\bmz} \in \calo^l$.
Then, by the definition of $\fv{\datagen{}(\bmx)}$,
for each $i = 1,2,\ldots, l$, 
$\datagen{}(x_i)$ can be represented as
a constant $c_i$ or a causal term $f_i(v_1,\ldots v_l)$ where $v_j \in \cvar \cup \Const$ for each $j = 1,2,\ldots,l$.
In the former case, 
$\semrxg{\datagen{}(x_i)} = \semrxg{c_i} = \semf^{\rand}(c_i) \in \calo$ by definition.
In the latter case, $\semrxg{\datagen{}(x_i)} = \semrxg{f_i(v_1,\ldots v_l)}$.
If $v_i \in \Const$ then $\semrxg{v_i} \in \calo$ is immediate; 
if $v_j \in \cvar$ then we obtain
$v_j \in \fv{\datagen{}(\bmx)}$, and hence 
 $\semrxg{v_j} \in \calo$.
Therefore, we conclude:
\begin{align*}
\semrxg{f_i(v_1, \ldots, v_l)} 
&= \semf^{\rand}(f_i)(\semrxg{\la v_1, \ldots, v_l \ra})\\
&= \semf^{\rand}(f_i)(\semrxg{v_1}, \cdots, \semrxg{v_l}) \in \calo.
\end{align*}
\end{itemize}

The rest of the proof is immediately by induction on the structures of tuples of terms.
\myqed
\end{proof}

\subsection{Relationships Between Data Generators and Causal Diagrams}
\label{sub:app:relation:DG:CD}

We show that each data generator corresponds to a DAG (directed acyclic graph) of the causal model as follows.

\begin{restatable}[Acyclicity of $\diag{}$]{prop}{PropAcyclicDG}
\label{prop:acyclic:DG}
Let $\diag{}$ be the causal diagram corresponding to a finite and acyclic data generator $\datagen{}$.
Then $\diag{}$ is a finite directed acyclic graph.
\end{restatable}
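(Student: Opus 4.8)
The plan is to verify the two defining properties of a finite DAG separately: that $\diag{}$ has finitely many nodes and edges, and that it contains no directed cycle. Both follow fairly directly from the hypotheses that $\datagen{}$ is finite, closed, and acyclic, together with the way $\diag{} = (U, V, E)$ is read off from $\datagen{}$ in \Sec{sub:causal-diagrams}.

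For finiteness, I would first note that $V = \dom{\datagen{}}$ is finite by hypothesis. Since $\datagen{}$ assigns to each of the finitely many variables in $\dom{\datagen{}}$ a single causal term of depth at most $1$, the range $\range(\datagen{})$ is a finite set of terms, each containing only finitely many names and constants; hence $U = \fnc{\range(\datagen{})}$ is finite. Finally $E \subseteq (V \times V) \cup (U \times V)$ is a subset of a finite set, so $E$ is finite as well.

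For acyclicity, I would first observe that every exogenous node in $U$ is a source: by definition the edges incident to $U$ are only of the form $c \garrow y$ with $c \in U$ and $y \in V$, and since names and constants never lie in $\dom{\datagen{}}$, no edge ever points into a node of $U$. Consequently any directed cycle in $\diag{}$ would have to lie entirely within $V$, so it suffices to rule out cycles among the endogenous nodes. The key identification is that the edge relation restricted to $V \times V$, namely $\{(x,y) \mid x \in \fv{\datagen{}(y)}\}$, is exactly the one-step case ($i = 2$) of the relation $\precg$, so $\precg$ coincides with the transitive closure of this edge relation; that is, $x \precg y$ holds iff there is a directed path of length $\ge 1$ from $x$ to $y$ inside $V$. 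A directed cycle through some $x \in V$ would then yield $x \precg x$, contradicting the irreflexivity of $\precg$, which holds because $\datagen{}$ is acyclic (so $\precg$ is a strict partial order over $\dom{\datagen{}}$). Hence $\diag{}$ has no directed cycle.

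The argument is essentially bookkeeping, and I expect the only point requiring care to be the precise matching between the chain condition defining $\precg$ (with its index $i \ge 2$) and the single edges of $E$: one must confirm that a graph-theoretic cycle of edges translates into a $\precg$-chain returning to its start, so that acyclicity of $\datagen{}$ is exactly what forbids it. Once this correspondence is stated cleanly, both finiteness and acyclicity close immediately.
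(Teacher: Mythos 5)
Your proposal is correct and follows essentially the same route as the paper's proof: finiteness of $U$, $V$, $E$ from finiteness of $\datagen{}$, and acyclicity by embedding the edge relation into the strict partial order $\precg$ (the paper states this more tersely as ``for each $x \garrow y \in E$ we have $x \precg y$, hence $\garrow$ is a strict partial order''). Your additional observations---that $U$-nodes are sources so cycles must lie in $V$, and that $\precg$ is precisely the transitive closure of the $V \times V$ edge relation---are correct refinements of the same argument rather than a different approach.
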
%
\begin{proof}
Let $\diag{} = (U, V, E)$.
Since $\datagen{}$ is finite, $\dom{\datagen{}}$ and $\range(\datagen{})$ are finite.
By definition,
for each $x \garrow y \in E$, we have $x \precg y$.
Since $\precg$ is a strict partial order, so is $\garrow$.
Therefore, $\diag{}$ is a finite directed acyclic graph.
\myqed
\end{proof}

Next, we show that the interpretation $\sema{\_}{w}$ defines the joint distribution  $P_{\diag{w}}$ of all variables in the causal diagram $\diag{w}$.

\begin{restatable}[Relationship between $\sema{\_}{w}$ and $P_{\diag{w}}$]{prop}{PropGequivDG}
\label{prop:G:equiv:DG}
Let $w$ be a world, $\diag{w}$ be the causal diagram corresponding to the data generator $\datagen{w}$, $\bmx, \bmy \in \cvar^+$, $\bmz \in \cvar^*$, and $\bmc\in\dConst^+$.
Then there is a joint distribution $P_{\diag{w}}$ that factorizes according to $\diag{w}$, and that satisfies:
\begin{enumerate}
\item $\sema{\bmy|_{\bmz}}{w} = P_{\diag{w}}(\bmy \,|\, \bmz)$
\item $\sema{\bmy|_{\bmz}}{w \intvE{\subst{\bmc}{\bmx}}} = P_{\diag{w}}(\bmy \,|\, do(\bmx \,{=}\, \sema{\bmc}{w}),\, \bmz)$.
\end{enumerate}
\end{restatable}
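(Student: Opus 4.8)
The plan is to take $P_{\diag{w}}$ to be the joint distribution that the interpretation already assigns to the endogenous variables, and then to verify that it factorizes and that conditioning and intervention behave as claimed. Concretely, let $\bmv$ enumerate all variables in $V = \dom{\datagen{w}}$ sorted lexicographically, and set $P_{\diag{w}}(\bmv) \eqdef \sema{\bmv}{w}$, the joint distribution obtained by drawing a name interpretation $\rand$ and computing $\semr{\bmv}{w}$. By \Propo{prop:unique:interpretation} this is a well-defined joint distribution over the value tuples of $V$, and by \Propo{prop:acyclic:DG} the diagram $\diag{w}$ is a finite DAG, so $\precg$ admits a topological enumeration of $V$.

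The first and main step is to show that $P_{\diag{w}}$ factorizes according to $\diag{w}$, i.e. $P_{\diag{w}}(\bmv) = \prod_{y \in V} P_{\diag{w}}(y \mid \pa_{\diag{w}}(y))$, which I would argue by induction along the topological order given by $\precg$. The crucial probabilistic fact is that distinct names receive independent values under a random $\rand$ (this is exactly what makes $\sema{\la n, n' \ra}{w} = \sema{n}{w} \otimes \sema{n'}{w}$ for $n \neq n'$), and that each occurrence of a function symbol draws fresh, independent noise through the Kleisli extension $\semap{f}{w}$. Since $\datagen{w}(y) = f(v_1, \ldots, v_k)$ with each $v_i$ either an endogenous parent in $\pa_{\diag{w}}(y)$ or an exogenous name/constant, the value of $y$ is, conditioned on its endogenous parents, a function of noise independent of every non-descendant of $y$. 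This is the standard structural-causal-model-to-Bayesian-network correspondence, and it yields the Markov factorization; marginalizing the exogenous noise into each kernel gives precisely the conditionals $P_{\diag{w}}(y \mid \pa_{\diag{w}}(y))$.

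Given the factorization, part (1) is almost immediate from the definitions: the interpretation of the conditional causal variable $\bmy|_{\bmz}$ is, by construction in \Sec{sub:possible-worlds}, the conditional distribution of $\bmy$ given $\bmz$ extracted from the joint $\sema{\bmx \joint \bmy \joint \bmz}{w}$, itself a marginal of $P_{\diag{w}}(\bmv)$; hence $\sema{\bmy|_{\bmz}}{w} = P_{\diag{w}}(\bmy \mid \bmz)$. For part (2), I would apply the whole argument to the intervened data generator $\datagen{w\intvE{\subst{\bmc}{\bmx}}}$ of \Eqn{eq:model:intvE}, which coincides with $\datagen{w}$ except that each $x \in \bmx$ is reassigned the point distribution $c$. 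Comparing factor-by-factor, the intervened joint $\sema{\bmv}{w\intvE{\subst{\bmc}{\bmx}}}$ drops the factors for $\bmx$, pins $\bmx = \bmc$, and leaves every other kernel $P_{\diag{w}}(y \mid \pa_{\diag{w}}(y))$ unchanged, which is exactly the definition of the post-intervention distribution $P_{\diag{w}[\bmx := \bmc]}$. Conditioning on $\bmz$ then gives $\sema{\bmy|_{\bmz}}{w\intvE{\subst{\bmc}{\bmx}}} = P_{\diag{w}[\bmx := \bmc]}(\bmy \mid \bmz)$.

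I expect the main obstacle to be the factorization step, and specifically the bookkeeping needed to justify the conditional independence of each variable from its non-descendants given its parents. One must track precisely which names and which fresh function-noises feed into each variable and argue, using the independence of distinct names under $\rand$ and the independence introduced by distinct function occurrences, that the exogenous noise entering $y$ is independent of the joint interpretation of $y$'s non-descendants. The subtlety that the factorization formula uses only the \emph{endogenous} parents $\pa_{\diag{w}}(y)$, with exogenous noise marginalized into the kernel, also has to be handled carefully; once this independence is in hand, the conditioning and intervention claims follow by unwinding definitions.
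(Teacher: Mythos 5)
Your proposal is correct and follows essentially the same route as the paper's proof: both take $P_{\diag{w}}$ to be the joint distribution induced by $\sema{\_}{w}$, identify each kernel $P_{\diag{w}}(v \mid \pa_{\diag{w}}(v))$ with the per-variable generator $\sema{v|_{\fv{\datagen{w}(v)}}}{w}$ (using that causal terms have depth at most $1$, so endogenous parents are exactly $\fv{\datagen{w}(v)}$), and then obtain (1) by taking ratios of marginals and (2) by rerunning the argument on $\datagen{w\intvE{\subst{\bmc}{\bmx}}}$, which drops the factors for $\bmx$ and leaves the remaining kernels unchanged. The only difference is one of emphasis: you spell out the independence bookkeeping behind the Markov factorization, which the paper asserts in a single line.
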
%

\begin{proof}
We fix a world $w$.
For brevity, we write $\diag{} \eqdef \diag{w}$.
We recall that $\diag{}$ is the triple $(U, V, E)$ consisting of
$U \eqdef \fnc{\range(\datagen{w})}$, 
$V \eqdef \dom{\datagen{w}}$, and 
the set $E$ of structural equations defined by $\datagen{w}$.
Let $P_{\diag{}}(V)$ be the probability distribution defined in \eqref{eq:Pg1}.

We inductively define 
the set $\Leaf^k$ of variables of depth $k$ from leaves and 
its subset $\LeafF^k$ having a parent variable by:
\begin{align*}
V^0 &= V,\\
E^0 &= E,\\
\Leaf^{k} &= \{v \in V^{k} ~|~ \mbox{ for all } v' \in V^{k},~v \garrow v' \notin E^{k} \},\\
\LeafF^{k} &= \{ v \in \Leaf^{k} \mid \mbox{ there is a } v'' \in V^{k},~v'' \garrow v \notin E^{k} \}, \\
V^{k+1} &= V^{k} \setminus \LeafF^k,\\
E^{k+1} &= E^{k} \cap ((U \cup V^{k+1}) \times V^{k+1}).
\end{align*}
This procedure terminates when $\LeafF^{k} = \emptyset$.
Since $\datagen{w}$ is finite and acyclic, the above procedure terminates in a finite number $N$ of steps.

Notice that, by the definition of causal terms,
for any $k < N$ and
and any $v \in V^{k}$, $\datagen{w}(v)$ is of the form $f(z_1, \ldots, z_l)$ for some $f \in \Func$ and $z_1, \ldots, z_l \in \cvar$.

For each $k=0, 1. \ldots, N$, 
we consider the restricted diagram $\diag{}^{k} = (U, V^{k}, E^{k})$.

We first claim that $\sema{V^{k}}{w} = P_{\diag{}^{k}}(V^{k})$ for any $k=0, 1, \ldots, N$.
We prove this by induction on $k$ as follows.

The base case is $k = N$.
Since no variable in $V^{N}$ has a parent variable,
there is a $\bmc \in \Const^+$ such that $\datagen{w}(V^{N}) = \bmc$.
Hence,
\[
P_{\diag{}}(V^{N}) = \sema{\bmc}{w} = \sema{\datagen{w}(V^{N})}{w} = \sema{V^{N}}{w}.
\]

Next, we consider the case $k < N$.
Since the above procedure terminates exactly at $N$ steps, we obtain $\LeafF^{k} \neq \emptyset$.
By applying the induction hypothesis (the case of $k + 1$), we obtain 
$P_{\diag{}^{k+1}}(V^{k} \setminus \LeafF^{k})
= P_{\diag{}^{k+1}}(V^{k + 1})
= \sema{V^{k + 1}}{w}
= \sema{V^{k} \setminus \LeafF^{k}}{w}$.
Since each causal term is of depth at most $1$ by definition,
the set $\pa_{\diag{}^{k}}(v)$ of all parents of a variable $v$ in the causal diagram $\diag{}^{k}$ is given by $\fv{\datagen{w}(v)}$.
Hence, 
\begin{align*}
P_{\diag{}^{k}}(V^{k})
&= 
P_{\diag{}^{k+1}}( \LeafF^{k} ) \cdot
P_{\diag{}^{k+1}}( V^{k} \setminus \LeafF^{k} )
\\
&= 
\Big(\! \prod_{v \in \LeafF^{k}}\! P_{\diag{}^{k}}(v \,|\, \pa_{\diag{}^{k}}(v)) \Big) \cdot 
P_{\diag{}^{k+1}}( V^{k} \setminus \LeafF^{k} )
\\
&= 
\Big(\! \prod_{v \in \LeafF^{k}}\! P_{\diag{}^{k}}(v \,|\, \pa_{\diag{}^{k}}(v)) \Big) \cdot 
\sema{ V^{k} \setminus \LeafF^{k} }{w}
\\
&\qquad \text{ (by induction hypothesis) }
\\
&=
\sema{V^{k}}{w}
{.}
\\
&\qquad \text{ (since each causal term is of depth at most $1$) }
\end{align*}
Therefore, we conclude:
\[
P_{\diag{}}(V) = P_{\diag{}^{0}}(V^{0}) = \sema{V^{0}}{w} = \sema{V}{w}.
\]

Now the first equation in the proposition is obtained as follows.
\begin{align*}
\sema{\bmy|_{\bmz}}{w} 
&= {\textstyle\frac{\sema{\bmy\joint\bmz}{w}}{\sema{\bmz}{w}}}
\\
&= {\textstyle \frac{ ( \sema{V}{w} )|_{\bmy\joint\bmz} }{\, ( \sema{V}{w} )|_{\bmz}~~~ }}
\\
&= {\textstyle \frac{ ( P_{\diag{w}}(V)) |_{\bmy\joint\bmz} }{\, ( P_{\diag{w}}(V) )|_{\bmz}~~~ }}
\\
&= {\textstyle\frac{P_{\diag{w}}(\bmy\joint\bmz)}{P_{\diag{w}}(\bmz)}}
\\
&= P_{\diag{w}}(\bmy \,|\, \bmz).
\end{align*}

Similarly, the second equation in the proposition is obtained as follows.
Let $\diag{w}'$ be the causal diagram obtained by an intervention $\bmx :=
 \sema{\bmc}{w}$ in $\diag{w}$.
\begin{align*}
\sema{\bmy|_{\bmz}}{w \intvE{\subst{\bmc}{\bmx}}}
&= {\textstyle\frac{\sema{\bmy\joint\bmz}{w \intvE{\subst{\bmc}{\bmx}}}}{\sema{\bmz}{w \intvE{\subst{\bmc}{\bmx}}}}}
\\
&= {\textstyle \frac{ ( \sema{V}{w \intvE{\subst{\bmc}{\bmx}}} )|_{\bmy\joint\bmz} }{\, ( \sema{V}{w \intvE{\subst{\bmc}{\bmx}}} )|_{\bmz}~~~ }}
\\
&= {\textstyle \frac{ ( P_{\diag{w}'}(V)) |_{\bmy\joint\bmz} }{\, ( P_{\diag{w}'}(V) )|_{\bmz}~~~ }}
\\
&= {\textstyle\frac{P_{\diag{w}}(\bmy\joint\bmz \,|\, do(\bmx \,{=}\, \sema{\bmc}{w}))}{P_{\diag{w}}(\bmz \,|\, do(\bmx \,{=}\, \sema{\bmc}{w}))}}
\\
&= P_{\diag{w}}(\bmy \,|\, do(\bmx \,{=}\, \sema{\bmc}{w}),\, \bmz).
\end{align*}
\myqed
\end{proof}

\subsection{Properties on Memories}
\label{sub:appendix:basic}

We present basic properties on memories.
Intuitively, we show that:
\begin{enumerate}\renewcommand{\labelenumi}{(\roman{enumi})}
\item
the same formulas $\phi_{\bmy}$ with free variables $\bmy$ are satisfied in any worlds $w$ and $w'$ having the same memory on $\bmy$;
\item
an eager and a lazy interventions to $\bmx$ result in the same distribution of the variables $\bmy$ disjoint from~$\bmx$;
\item
disjoint worlds $w$ and $w'$ have  data generators $\datagen{w}$ and $\datagen{w'}$ with disjoint domains.
\end{enumerate}

\begin{restatable}[Properties on $\mem{w}$]{prop}{PropDistFml}
\label{prop:dist-formula}
Let $w$ and $w'$ be worlds, $\bmx,\bmy \,{\in}\, \var^+$, $\bmc \,{\in}\, \dConst^+$, and 
$\phi_{\bmy} \,{\in}\, \Fml$
with $\fv{\phi_{\bmy}} = \bmy$.
\begin{enumerate}\renewcommand{\labelenumi}{(\roman{enumi})}
\item
If $\mem{w}(\bmy) = \mem{w'}(\bmy) \neq \bot$, then
$w\models \phi_{\bmy}$ iff $w'\models \phi_{\bmy}$.
\item
If $\bmx\cap\bmy = \emptyset$, then
$\mem{w\intvE{\subst{\bmc}{\bmx}}}(\bmy)
 = \mem{w\intvL{\subst{\bmc}{\bmx}}}(\bmy)$.
\item
If $\var(w) \cap \var(w') = \emptyset$ (namely, $\dom{\mem{w}} \allowbreak \cap \dom{\mem{w'}} \allowbreak = \emptyset$), then
$\dom{\datagen{w}} \cap \dom{\datagen{w'}} = \emptyset$.
\end{enumerate}
\end{restatable}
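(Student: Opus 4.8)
The plan is to prove the three items separately, leaning throughout on the consistency condition $\mem{w}(\bmx) = \sema{\bmx}{w}$ that ties each memory to its data generator, and on the well-definedness of the interpretation (\Propo{prop:unique:interpretation}). Item~(iii) is immediate: for any $x \in \dom{\datagen{w}}$ we have $\datagen{w}(x) \neq \bmbot$, so by closedness and \Propo{prop:unique:interpretation} the distribution $\sema{x}{w} = \sema{\datagen{w}(x)}{w}$ lies in $\Dists\calo$ and hence $x \in \dom{\mem{w}}$; thus $\dom{\datagen{w}} \subseteq \dom{\mem{w}}$ (indeed equality, since $\datagen{w}(x)=\bmbot$ forces $\sema{x}{w}=\bot$). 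Applying this to $w$ and $w'$, disjointness of the memory domains transfers to disjointness of the generator domains.

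For item~(i) I would use structural induction on $\phi_{\bmy}$, with the invariant that the truth of a formula whose free variables lie in $\bmy$ is determined by $\mem{w}(\bmy)$. The Boolean cases follow from the induction hypothesis. A classical atom $\eta(\bmy_1,\ldots,\bmy_k)$ with each $\bmy_i \subseteq \bmy$ holds iff $(\mem{w}(\bmy_1),\ldots,\mem{w}(\bmy_k)) \in \calv(\eta)$, which sees $w$ only through $\mem{w}(\bmy)$; likewise a term equality $\bmu = \bmu'$ over $\bmy$ reduces to equality of the induced distributions $\sema{\bmu}{w}$ and $\sema{\bmu'}{w}$, which are determined by $\mem{w}(\bmy)$ once the interpretations of the function and marginalization symbols are held fixed across $w$ and $w'$. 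The delicate forms are causal predicates $\chi(\ldots)$ and the modality $\intvE{\subst{\bmc}{\bmx}}$: their semantics reads off the data generator $\datagen{w}$ (through $\diag{w}$) rather than $\mem{w}(\bmy)$, so two worlds agreeing only on $\mem{w}(\bmy)$ may disagree on, e.g., $\dsep(\cdot,\cdot,\cdot)$. The induction therefore settles the correlation fragment cleanly, and for the full language one must either keep $\chi$ and the modalities out of $\phi_{\bmy}$ or demand that $w$ and $w'$ also agree on the relevant part of the generator; pinning down this scope is the main conceptual obstacle here.

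The core is item~(ii). It suffices to treat the single intervention $\intvE{\subst{c}{x}}$ against $\intvL{\subst{c}{x}}$, the tuple case being identical with simultaneous substitution. I would show, by induction on $\cntv{y}$ (equivalently, along $\precg$), that $\sema{y}{w\intvE{\subst{c}{x}}} = \sema{y}{w\intvL{\subst{c}{x}}}$ for every variable $y \neq x$; since $\bmx \cap \bmy = \emptyset$, this gives the claim for the whole tuple $\bmy$. For such $y$ we have $\datagen{w\intvE{\subst{c}{x}}}(y) = \datagen{w}(y)$ and $\datagen{w\intvL{\subst{c}{x}}}(y) = \datagen{w}(y)[x \mapsto c]$. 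Writing $\datagen{w}(y) = f(v_1,\ldots,v_k)$ and comparing interpretations argument by argument: where $v_j = x$, the eager side evaluates $x$ via $\datagen{w\intvE{\subst{c}{x}}}(x) = c$ to $\sema{c}{w}$, while the lazy side carries the literal $c$ and also yields $\sema{c}{w}$; where $v_j$ is another variable (necessarily $\precg y$) the two sides agree by the induction hypothesis; where $v_j$ is a name or constant they agree because $\semf_{w\intvE{\subst{c}{x}}} = \semf_{w} = \semf_{w\intvL{\subst{c}{x}}}$. Feeding identical arguments to the common interpretation $\semf_{w}(f)$ yields equal distributions, and the base case (no variable parents, or parents only in $\{x\}$) is the same computation without the hypothesis.

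I expect item~(ii) to demand the most care, since it rests on matching two \emph{different} edits of the data generator---value replacement at $x$ on the eager side versus syntactic substitution of $c$ for $x$ on the lazy side---term-by-term along the DAG. The scope subtlety flagged for item~(i) is a close second, as it delimits exactly which $\phi_{\bmy}$ the statement covers.
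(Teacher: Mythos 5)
Your overall route coincides with the paper's: item~(iii) via the inclusion $\dom{\datagen{w}} \subseteq \dom{\mem{w}}$ obtained from the consistency condition, item~(i) by induction on the formula with the predicate atom as the interesting case, and item~(ii) by unfolding the two intervened data generators. Two points of comparison are worth recording. For~(ii), the paper argues by a single chain of equalities
$\mem{w\intvE{\subst{\bmc}{\bmx}}}(\bmy) = \sema{\datagen{w}(\bmy)}{\mem{w\intvE{\subst{\bmc}{\bmx}}}} = \sema{\datagen{w}(\bmy)[\bmx \mapsto \bmc]}{\mem{w\intvE{\subst{\bmc}{\bmx}}}} = \sema{\datagen{w}(\bmy)[\bmx \mapsto \bmc]}{\mem{w\intvL{\subst{\bmc}{\bmx}}}} = \mem{w\intvL{\subst{\bmc}{\bmx}}}(\bmy)$,
where the transfer between the two intervened memories is justified only by ``$\bmx$ does not occur free in the substituted term''; that step tacitly assumes the non-$\bmx$ parents of $\bmy$ already receive equal interpretations in both intervened worlds, which is exactly the claim being proved applied to ancestors. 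Your explicit induction on $\cntv{y}$ along $\precg$ supplies precisely this missing recursion, so your version of~(ii) is the more complete argument, not a different one. For~(i), the paper treats only the atom $\eta(\bmy)$ and declares the remaining cases ``straightforward''; your reservation about causal predicates and the modality $\intvE{\subst{\bmc}{\bmx}}$ is well founded, since $\dsep$ and its relatives are interpreted through $\datagen{w}$ rather than $\mem{w}$, and two worlds can share $\mem{w}(\bmy)$ while having different diagrams (e.g.\ an arrow realized by a function that is constant in its argument), so the statement read over all of $\Fml$ fails and must be restricted to the memory-determined fragment --- which is the only way the paper ever invokes it (e.g.\ in the proof of \axXpdEL{} only clause~(ii) is used). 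Your proof is correct on that fragment, and your scoping caveat identifies a genuine imprecision in the paper's own proof rather than a gap in yours.
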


\begin{proof}
\begin{enumerate}\renewcommand{\labelenumi}{(\roman{enumi})}
\item
Assume that $\mem{w}(\bmy) = \mem{w'}(\bmy) \neq \bot$.
Then we prove $w\models \phi_{\bmy}$ iff $w'\models \phi_{\bmy}$ by induction on $\phi_{\bmy}$.

If $ \phi_{\bmy} \eqdef \eta(\bmy)$ for $\eta\in\Pred$, then:
\begin{align*}
&\mbox{ \phantom{iff} }~
w\models \eta(\bmy)
\\ &\mbox{ iff }~
\mem{w}(\bmy) \in \calv(\eta)
\\ &\mbox{ iff }~
\mem{w'}(\bmy) \in \calv(\eta)
& \mbox{(by $\mem{w}(\bmy) = \mem{w'}(\bmy) \neq \bot$)}
\\ &\mbox{ iff }~
w'\models \eta(\bmy).
\end{align*}
The other cases are straightforward by definitions.
\item
Assume $\bmx\cap\bmy = \emptyset$.
By definition, we obtain:
\begin{align*}
\mem{w\intvE{\subst{\bmc}{\bmx}}}(\bmy)
&=
\sema{\datagen{w\intvE{\subst{\bmc}{\bmx}}}(\bmy)}{w\intvE{\subst{\bmc}{\bmx}}}
\\&=
\sema{\datagen{w}(\bmy)}{w\intvE{\subst{\bmc}{\bmx}}}
& \!\text{(by $\bmx\cap\bmy = \emptyset$)}
\\&=
\sema{\datagen{w}(\bmy)[\bmx \mapsto \bmc]}{w\intvE{\subst{\bmc}{\bmx}}}
\\&=
\sema{\datagen{w}(\bmy)[\bmx \mapsto \bmc]}{w\intvL{\subst{\bmc}{\bmx}}}
&
\text{(by $\bmx \cap \fv{\datagen{w}(\bmy)[\bmx \mapsto \bmc]} = \emptyset$)}
\\&=
\sema{\datagen{w\intvL{\subst{\bmc}{\bmx}}}(\bmy)}{w\intvL{\subst{\bmc}{\bmx}}}
\\&=
\mem{w\intvL{\subst{\bmc}{\bmx}}}(\bmy).
\end{align*}
\item
By the definition of possible worlds, we have
$\dom{\datagen{w}} \subseteq \dom{\mem{w}}$ and
$\dom{\datagen{w'}} \subseteq \dom{\mem{w'}}$.
Therefore, we obtain
$\dom{\datagen{w}} \cap \dom{\datagen{w'}} \subseteq 
\dom{\mem{w}} \allowbreak \cap \dom{\mem{w'}} =~\emptyset$.
\end{enumerate}
\myqed
\end{proof}

\subsection{Causal Predicates}
\label{sub:appendix:def:CP}

Next, we present more details on causal predicates.
Among the causal predicates listed below, our deduction system \AXwCP{} requires only $\dsep$, $\nanc$, and $\allnanc$.
For the sake of convenience, we can use $\pa$, but it is sufficient for us to derive the formulas equipped with $\pa$ from those with $\nanc$.
Thus, we do not deal with axioms of the other predicates in this paper.

We show a list of causal predicates as follows.
\begin{itemize}
\item\!%
$\dsep(\bmx, \bmy, \bmz)$ ~~
$\bmx$ and $\bmy$ are $d$-separated by $\bmz$;
\item\!%
$\pa(\bmx, \bmy)$ ~~
$\bmx$ is the set of all parents of variables in $\bmy$;
\item\!%
$\npa(\bmx, \bmy)$ ~~
$\bmx$ is a set of non-parents of variables in $\bmy$;
\item\!%
$\anc(\bmx, \bmy)$ ~
$\bmx$ is the set of all ancestors of variables in $\bmy$;
\item\!%
$\nanc(\bmx, \bmy)$\,
$\bmx$ is a set of non-ancestors of variables in $\bmy$;
\item\!%
$\allnanc(\bmx, \bmy, \bmz)$\,
$\bmx$ is the set of all variables in $\bmy$ that are not ancestors of any variables in $\bmz$.
\end{itemize}
These causal predicates are interpreted using a data generator $\datagen{w}$ in a world $w$ as follows.
\begin{definition}[Semantics of $\dsep$, $\pa$, $\npa$, $\anc$, $\nanc$, \allowbreak $\allnanc$] \rm \label{def:sem:pa-npa}
Let $w$ be a world, and $\diag{w}$ be the causal diagram corresponding to $\datagen{w}$.
Let $\Pa(\bmy)$ be the set of all parent variables of $\bmy$:
\begin{align*}
\Pa(\bmy) = 
\{ x \in \var(w) \mid x \garrow y',~ y' \in \bmy \}.
\end{align*}
Let $\Anc(\bmy)$ is the set of all ancestors variables of $\bmy$:
\begin{align*}
\Anc(\bmy) = 
\{ x \in \var(w) \mid x \garrow^{\!+} y',~ y' \in \bmy \}
\end{align*}
where $\garrow^{\!+}$ is the transitive closure of $\garrow$.
The interpretations of the causal predicates $\dsep, \pa, \npa, \anc, \nanc, \allnanc$ are given as follows:
\begin{align*}
w \models \dsep(\bmx, \bmy, \bmz)
& \mbox{ iff }
\mbox{ $\bmx$ and $\bmy$ are $d$-separated by $\bmz$ in $\diag{w}$}
\\
w \models \pa(\bmx, \bmy)
& \mbox{ iff }
\bmx = \Pa(\bmy)
\mbox{ and } \bmx\cap\bmy=\emptyset
\\
w \models \npa(\bmx, \bmy)
& \mbox{ iff }
\bmx \,{\cap}\, \Pa(\bmy) = \emptyset
\mbox{ and } \bmx \,{\cap}\, \bmy=\emptyset
\\
w \models \anc(\bmx, \bmy)
& \mbox{ iff }
\bmx = \Anc(\bmy)
\mbox{ and } \bmx\cap\bmy=\emptyset
\\
w \models \nanc(\bmx, \bmy)
& \mbox{ iff }
\bmx \,{\cap}\, \Anc(\bmy) = \emptyset
\mbox{ and } \bmx \,{\cap}\, \bmy = \emptyset
\\
w \models \allnanc(\bmx, \bmy, \bmz)
& \mbox{ iff }
\bmx = \bmy \setminus \Anc(\bmz)
{,}
\end{align*}
where we recall the notion of $d$-separation in \App{sub:appendix:def:diagrams}.
\end{definition}

\begin{restatable}[Relationships among causal predicates]{prop}{PropGraph}
\label{prop:relation:c-predicates}
The causal predicates $\pa$, $\npa$, $\anc$, and $\nanc$ 
satisfy the relationships:
\begin{enumerate}
\item
$\models
\pa(\bmx, \bmy) \rightarrow \anc(\bmx, \bmy)$.
\item
$\models
\anc(\bmx, \bmy) \rightarrow \nanc(\bmy, \bmx)$.
\item
$\models
\nanc(\bmy, \bmx) \rightarrow \npa(\bmy, \bmx)$.
\end{enumerate}
\end{restatable}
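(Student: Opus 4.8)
The plan is to treat all three items as semantic validities: since $\models\phi$ abbreviates ``$w\models\phi$ in every world $w$'', it suffices to fix an arbitrary world $w$ with its causal diagram $\diag{w}$ and check each implication pointwise at $w$, after which validity follows by definition. I would first unfold every predicate through \Def{def:sem:pa-npa}, turning each side into a statement about the sets $\Pa(\bmy)$ and $\Anc(\bmy)$ together with a disjointness clause. Two structural facts then do all the work: (a) the edge relation is contained in its transitive closure, so $\Pa(\bmv)\subseteq\Anc(\bmv)$ for every tuple $\bmv$; and (b) by \Propo{prop:acyclic:DG} the diagram $\diag{w}$ is a DAG, hence $\garrow^{\!+}$ is irreflexive and transitive over $\var(w)$.

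For item~(3) I would argue directly: $\nanc(\bmy,\bmx)$ gives $\bmy\cap\Anc(\bmx)=\emptyset$ and $\bmy\cap\bmx=\emptyset$, and since $\Pa(\bmx)\subseteq\Anc(\bmx)$ by~(a), the first clause forces $\bmy\cap\Pa(\bmx)=\emptyset$; together with the second clause this is exactly $\npa(\bmy,\bmx)$. For item~(2), $\anc(\bmx,\bmy)$ gives $\bmx=\Anc(\bmy)$ and $\bmx\cap\bmy=\emptyset$, and I want $\bmy\cap\Anc(\bmx)=\emptyset$. Suppose toward a contradiction that some $y\in\bmy$ lies in $\Anc(\bmx)$, i.e.\ $y\garrow^{\!+}x'$ for some $x'\in\bmx=\Anc(\bmy)$; by definition of $\Anc(\bmy)$ we have $x'\garrow^{\!+}y''$ for some $y''\in\bmy$, so transitivity of $\garrow^{\!+}$ yields $y\garrow^{\!+}y''$ and hence $y\in\Anc(\bmy)=\bmx$. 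This contradicts $y\in\bmy$ and $\bmx\cap\bmy=\emptyset$, so $\bmy\cap\Anc(\bmx)=\emptyset$; with $\bmy\cap\bmx=\emptyset$ this is $\nanc(\bmy,\bmx)$. Here the \emph{exactness} of $\bmx=\Anc(\bmy)$ is essential, since it is what lets me conclude $y\in\bmx$ rather than merely $y\in\Anc(\bmy)$.

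The delicate case, and the step I expect to be the main obstacle, is item~(1). Unfolding it, from $\bmx=\Pa(\bmy)$ I must derive $\bmx=\Anc(\bmy)$; the inclusion $\bmx=\Pa(\bmy)\subseteq\Anc(\bmy)$ is immediate from~(a), so everything hinges on the reverse inclusion $\Anc(\bmy)\subseteq\Pa(\bmy)$. This does \emph{not} hold in an arbitrary DAG, where $\bmy$ may have indirect ancestors (e.g.\ $a\garrow b\garrow c$ gives $\Pa(c)=\{b\}\subsetneq\{a,b\}=\Anc(c)$). I would therefore check whether the intended semantics of $\anc$ is the exact-set reading used for $\pa$ or a weaker ``a set of ancestors'' reading parallel to $\npa$ and $\nanc$; under the latter reading~(1) is immediate from~(a), whereas under the former it requires restricting to diagrams in which every ancestor of $\bmy$ is already a direct parent. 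Pinning down this reading is where I would concentrate my effort, since items~(2) and~(3) are otherwise routine once~(a) and~(b) are in place.
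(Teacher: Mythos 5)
Your overall strategy is exactly what the paper does: its entire proof is the one-line remark that the claims are ``straightforward from \Def{def:sem:pa-npa}'', i.e.\ unfold the semantics and reason set-theoretically about $\Pa(\cdot)$ and $\Anc(\cdot)$ in the DAG $\diag{w}$. Your arguments for items~(2) and~(3) are correct and are the intended ones; in particular your observation that (2) genuinely needs the \emph{exactness} of $\bmx=\Anc(\bmy)$ (to turn $y\garrow^{\!+}y''$ into $y\in\bmx$ and then contradict $\bmx\cap\bmy=\emptyset$) is a real point that the paper's terse proof does not surface, and your use of acyclicity/transitivity is the right supporting machinery.

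Your hesitation on item~(1) is warranted, and you should not paper over it: under the formal semantics actually printed in \Def{def:sem:pa-npa}, where $w\models\anc(\bmx,\bmy)$ requires the \emph{equality} $\bmx=\Anc(\bmy)$, item~(1) is false, and your chain $a\garrow b\garrow c$ is a genuine countermodel ($\pa(\{b\},\{c\})$ holds while $\anc(\{b\},\{c\})$ fails because $\Anc(\{c\})=\{a,b\}$). The only consistent reading that makes (1) valid is the weaker ``$\bmx$ is a set of ancestors of $\bmy$'' reading (i.e.\ $\bmx\subseteq\Anc(\bmy)$ plus disjointness), parallel to how $\npa$ and $\nanc$ are defined; under that reading (1) is immediate from $\Pa(\bmy)\subseteq\Anc(\bmy)$, exactly as you say. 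So the gap here is in the source, not in your attempt: the paper's proof (``straightforward'') silently assumes the subset reading for $\anc$ while its displayed definition uses the equality reading. Note also that the paper explicitly states that the deductive system \AXwCP{} uses only $\nanc$, $\allnanc$, and $\dsep$ (with $\pa$ eliminated via \axPaToNanc{} and \axPaToDsep{}), so nothing downstream depends on item~(1); if you want a self-contained write-up, prove (2) and (3) as you have, and either adopt the subset semantics for $\anc$ or drop (1).
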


\begin{proof}
These claim are straightforward from \Def{def:sem:pa-npa}.
\myqed
\end{proof}

\subsection{Causal Effect}
\label{sub:app:causal:effect}

We show that the causal effect can be expressed using a \StaCL{} formula as follows.

\arxiv{%
\PropCausal*
\vspace{1ex}
}%
\conference{%
\setcounter{prop}{0}
\begin{restatable}[Causal effect]{prop}{PropCausalAPP}
Let $w$ be a world, $\bmx,\allowbreak\bmy,\bmz\in\var(w)^+$ be disjoint, 
$\bmc \,{\in}\, \dConst^+$, 
$\bmc' \,{\in}\, \Const^+$, and $f \,{\in}\, \Func$.
Then: 
\begin{enumerate}\renewcommand{\labelenumi}{(\roman{enumi})}
\item
$w \models \intvE{\subst{\bmc}{\bmx}} (\bmc' \,{=}\, \bmy)$
iff 
there is a distribution $P_{\diag{w}}$ that is factorized according to $\diag{w}$ and satisfies 
$P_{\diag{w}}( \bmy \,|\, do(\bmx \,{=}\, \bmc) ) \,{=}\, \sema{\bmc'}{w}$.
\item
$w \models \intvE{\subst{\bmc}{\bmx}} (f \,{=}\, \bmy|_{\bmz})$
iff 
there is a distribution $P_{\diag{w}}$ that is factorized according to $\diag{w}$ and satisfies 
$P_{\diag{w}}( \bmy \,|\, do(\bmx \,{=}\, \bmc), \bmz ) \,{=}\, \sema{f}{w}$.
\end{enumerate}
\end{restatable}%
\setcounter{prop}{7}
}%

\begin{proof}
We show the first claim as follows.
By \Propo{prop:G:equiv:DG}, 
there is a joint distribution $P_{\diag{w}}$ that is factorized according to $\diag{w}$ and that satisfies 
$\sema{\bmy}{w \intvE{\subst{\bmc}{\bmx}}} = P_{\diag{w}}(\bmy \,|\, do(\bmx\,{=}\,\sema{\bmc}{w}) )$.
Thus, we obtain:
\begin{align*}
& \phantom{\mbox{ iff }~}
w \models \intvE{\subst{\bmc}{\bmx}} (\bmc' \,{=}\, \bmy)
\\ & \mbox{ iff }~
w \intvE{\subst{\bmc}{\bmx}} \models \bmc' \,{=}\, \bmy
\\ & \mbox{ iff }~
\sema{\bmy}{w \intvE{\subst{\bmc}{\bmx}}} = \sema{\bmc'}{w}
\\ & \mbox{ iff }~
P_{\diag{w}}( \bmy \,|\, do(\bmx\,{=}\,\sema{\bmc}{w}) ) \,{=}\, \sema{\bmc'}{w}.
\end{align*}

Analogously, the second claim is obtained as follows.
By \Propo{prop:G:equiv:DG}, 
there is a joint distribution $P_{\diag{w}}$ that is factorized according to $\diag{w}$ and that satisfies 
$\sema{\bmy|_{\bmz}}{w \intvE{\subst{\bmc}{\bmx}}} = P_{\diag{w}}(\bmy \,|\, do(\bmx\,{=}\,\sema{\bmc}{w}),\, \bmz)$.
Thus, we obtain:
\begin{align*}
& \phantom{\mbox{ iff }~}
w \models \intvE{\subst{\bmc}{\bmx}} (\bmc' \,{=}\, \bmy|_{\bmz})
\\ & \mbox{ iff }~
w \intvE{\subst{\bmc}{\bmx}} \models \bmc' \,{=}\, \bmy|_{\bmz}
\\ & \mbox{ iff }~
\sema{\bmy|_{\bmz}}{w \intvE{\subst{\bmc}{\bmx}}} = \sema{\bmc'}{w}
\\ & \mbox{ iff }~
P_{\diag{w}}( \bmy \,|\, do(\bmx\,{=}\,\bmc), \bmz ) \,{=}\, \sema{\bmc'}{w}.
\end{align*}
\myqed
\end{proof}

\subsection{Details on Causal Diagrams}
\label{sub:appendix:def:diagrams}

Next, we recall the notion of $d$-separation~\cite{Verma:88:UAI} as follows.

\begin{definition}[$d$-separation]\label{def:d-separate}\rm
Let $\bmx, \bmy, \bmz$ be disjoint sets of variables,
and $\Anca(\bmz) \eqdef \bmz \cup \Anc(\bmz)$ be the union of $\bmz$ and the set of $\bmz$'s all ancestors.
An undirected path $p$ is said to be \emph{$d$-separated} by $\bmz$ if it satisfies one of the following conditions:
\begin{enumerate}\renewcommand{\labelenumi}{(\alph{enumi})}
\item
$p$ has a chain $v' \garrow v \garrow v''$ s.t. $v\in\bmz$.
\item 
$p$ has a fork $v' \leftgarrow v \garrow v''$ s.t. $v\in\bmz$.
\item 
$p$ has a collider $v' \garrow v \leftgarrow v''$ s.t. $v\not\in\Anca(\bmz)$.
\end{enumerate}
$\bmx$ and $\bmy$ are \emph{$d$-separated} by $\bmz$ if all undirected paths between variables in $\bmx$ and in $\bmz$ are $d$-separated by $\bmz$.
\end{definition}

We also recall the notion of back-door path as follows.

\begin{definition}[Back-door path]\label{def:back-path}\rm
For variables $x$ and $y$,
a \emph{back-door path} from $x$ to $y$ in a causal diagram $G$ is an arbitrary undirected path between $x$ and $y$ in $G$ that starts with an arrow pointing to $x$
(i.e., an undirected path of the form $x \leftgarrow v \cdots y$).
For tuples of variables $\bmx$ and $\bmy$, a \emph{back-door path} from $\bmx$ to $\bmy$ is an arbitrary back-door path from $x\in\bmx$ to $y\in\bmy$.
\end{definition}

We remark on the relationships between back-door paths and two kinds of interventions as follows.

\begin{remark}\label{rem:backdoor-path}
An eager intervention $\intvE{\subst{\bmc}{\bmx}}$ can remove \emph{all back-door paths from $\bmx$ to $\bmy$}, because all of these paths have arrows pointing to $\bmx$ and $\intvE{\subst{\bmc}{\bmx}}$ removes all such arrows.

In contrast, a lazy intervention $\intvL{\subst{\bmc}{\bmx}}$ can remove all undirected paths between $x$ and $y$ \emph{except for all back-door paths from $x$ to $y$},
because $\intvL{\subst{\bmc}{\bmx}}$ removes all arrows emerging from $\bmx$ while keeping all arrows pointing to $\bmx$.
Thus, $\intvL{\subst{\bmc}{\bmx}}\dsep(\bmx, \bmy, \bmz)$ represents that
all back-door paths from $\bmx$ to $\bmy$ are $d$-separated by $\bmz$.
\end{remark}

These relationships are used to reason about the causality, e.g., when we derive the second rule of Pearl's do-calculus using our \StaCL{} (\Propo{prop:do-calculus}).

Now we recall the back-door criteria and the back-door adjustment in Pearl's causal model.

\begin{definition}[Back-door criterion]\label{def:back-criterion}\rm
For two sets $\bmx$ and $\bmy$ of variables, a set $\bmz$ of variables satisfies the \emph{back-door criterion} in a causal diagram $\diag{}$ if (i) no variable in $\bmz$ is a descendent of an element of $\bmx$ in $\diag{}$ and (ii) all back-door paths from $\bmx$ to $\bmy$ are $d$-separated by $\bmz$ in~$\diag{}$.
\end{definition}

The back-door criterion is expressed as the following \StaCL{} formula:
\begin{align*}
\nanc(\bmx,\bmz) \land \intvL{\subst{\bmc}{\bmx}}\dsep(\bmx, \bmy, \bmz).
\end{align*}

When $\bmz$ satisfies the back-door criterion in a causal diagram $\diag{}$, then the causal effect of $\bmx$ on $\bmy$ is given by:
\begin{align*}
P_{\diag{}}(\bmy \,|\, \mydo(\bmx)) =
\sum_{\bmz} P_{\diag{}}(\bmy \,|\, \bmz, \bmx)\, P_{\diag{}}(\bmz).
\end{align*}

In \Fig{fig:overview:proof2:drug} in \Sec{sec:reasoning-StaCL},
we show a derivation tree for the correctness of computing the causal effect using the backdoor adjustment.

\section{Proof for the Soundness of \AX{}}
\label{sec:app:StaCL:sound}

We show that the deductive system \AX{} of \StaCL{} is sound w.r.t. the Kripke semantics for statistical causality (\Thm{thm:sound:complete}).

We first remark that \AX{} satisfies the deduction theorem.

\begin{restatable}[Deduction]{prop}{PropDeductionThm}
\label{prop:deduction}
Let $\Gamma \subseteq \Fml$, and $\phi_1, \phi_2 \in \Fml$.
Then $\Gamma \vdashg \phi_1 \rightarrow \phi_2$ iff 
$\Gamma, \phi_1 \vdashg \phi_2$.
\end{restatable}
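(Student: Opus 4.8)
The plan is to observe that, by the definition of the context judgment $\Gamma \vdashg \phi$ stated just before the proposition, both sides of the claimed equivalence are in fact assertions about the derivability of a single \emph{closed} formula. Writing $\Psi \eqdef \psi_1 \land \dots \land \psi_n$ for $\Gamma = \{\psi_1,\dots,\psi_n\}$ (with $\Psi \eqdef \top$ when $\Gamma$ is empty), the left-hand judgment $\Gamma \vdashg \phi_1 \rightarrow \phi_2$ unfolds by definition to $\vdashg \Psi \rightarrow (\phi_1 \rightarrow \phi_2)$, while $\Gamma, \phi_1 \vdashg \phi_2$ unfolds to $\vdashg (\Psi \land \phi_1) \rightarrow \phi_2$ (note that the conjunction associated with $\Gamma \cup \{\phi_1\}$ is exactly $\Psi \land \phi_1$). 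Thus it suffices to establish that these two closed derivability statements are equivalent.

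The key point is that the two target formulas are \emph{propositionally equivalent}: the biconditional $(\Psi \rightarrow (\phi_1 \rightarrow \phi_2)) \leftrightarrow ((\Psi \land \phi_1) \rightarrow \phi_2)$ is a propositional tautology (currying/uncurrying), and hence so is each of its two directions. I will exploit this through the two purely propositional ingredients already present in \AX, namely \axPT (every propositional tautology is derivable) and \axMP (modus ponens), without touching any of the modal or causal axioms.

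For the forward direction, assume $\vdashg \Psi \rightarrow (\phi_1 \rightarrow \phi_2)$. By \axPT the implication $(\Psi \rightarrow (\phi_1 \rightarrow \phi_2)) \rightarrow ((\Psi \land \phi_1) \rightarrow \phi_2)$ is derivable, and a single application of \axMP yields $\vdashg (\Psi \land \phi_1) \rightarrow \phi_2$, i.e.\ $\Gamma, \phi_1 \vdashg \phi_2$. The converse is symmetric: from $\vdashg (\Psi \land \phi_1) \rightarrow \phi_2$, use \axPT to derive the reverse implication $((\Psi \land \phi_1) \rightarrow \phi_2) \rightarrow (\Psi \rightarrow (\phi_1 \rightarrow \phi_2))$ and apply \axMP to obtain $\vdashg \Psi \rightarrow (\phi_1 \rightarrow \phi_2)$, i.e.\ $\Gamma \vdashg \phi_1 \rightarrow \phi_2$.

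There is essentially no deep obstacle here, precisely because the context $\Gamma$ is \emph{folded into a conjunctive antecedent by definition} rather than appearing as a set of live hypotheses in a derivation tree; this sidesteps the induction-on-derivations argument that a genuine natural-deduction or sequent-style deduction theorem would demand. The only point requiring minor care is the degenerate case $\Gamma = \emptyset$, where $\Psi = \top$ and one checks that $\vdashg \top \rightarrow (\phi_1 \rightarrow \phi_2)$ and $\vdashg \phi_1 \rightarrow \phi_2$ coincide up to \axPT and \axMP; this is again immediate, so the argument is uniform across empty and non-empty $\Gamma$.
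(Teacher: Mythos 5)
Your proof is correct, but it takes a genuinely different route from the paper's. The paper proves the left-to-right direction ``by the application of \axMP{}'' and the right-to-left direction ``as usual by induction on the derivation'' --- i.e.\ it runs the classical Hilbert-style deduction-theorem argument, which is what one needs when $\Gamma \vdashg \phi$ is read as the existence of a derivation of $\phi$ using the members of $\Gamma$ as live hypotheses (a reading the statement of \axMP{} as ``$\phi_1,\, \phi_1 \rightarrow \phi_2 \vdashg \phi_2$'' suggests). You instead take the paper's \emph{official} definition of the contextual judgment at face value: since $\Gamma \vdashg \phi$ is \emph{defined} as derivability of the single closed formula $\vdashg \psi_1 \land \dots \land \psi_n \rightarrow \phi$, both sides of the proposition collapse to derivability of two propositionally equivalent formulas, and currying via \axPT{} plus one application of \axMP{} in each direction finishes the job --- no induction on derivations is needed. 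Your argument is shorter and arguably more faithful to the definition as literally stated; the paper's argument is the one that survives if the judgment is instead understood in the standard hypothesis-based way, where the hard direction genuinely requires tracking how $\phi_1$ is used throughout the derivation tree. The only points needing the small care you already give are the empty-$\Gamma$ case and the fact that the conjunction for $\Gamma \cup \{\phi_1\}$ may reorder or absorb $\phi_1$ when $\phi_1 \in \Gamma$, both of which are handled by further propositional tautologies.
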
%
\begin{proof}
The direction from left to right is straightforward by the application of \axMP{}.
The other direction is shown as usual by induction on the derivation.
\myqed
\end{proof}

We prove the soundness of \AX{} as follows.
We show the validity of the axioms for basic constructs (\App{sub:app:ax:basic}),
for eager interventions (\App{sub:app:intvE}),
for lazy interventions (\App{sub:app:lazy:intv}),
and for the exchanges of eager/lazy interventions (\App{sub:app:two:intv}).

\subsection{Validity of the Basic Axioms}
\label{sub:app:ax:basic}

Here are the basic axioms of \AX{} without interventions.
\[
\renewcommand{\arraystretch}{1.3}
\begin{array}{l@{\hspace{-0.3ex}}l}
\mbox{\axPT}
&~~\vdashg
\phi ~~\mbox{ for a propositional tautology $\phi$}
\\
\mbox{\axMP}
&~~~ \mbox{$\phi_1,\, \phi_1 \rightarrow \phi_2 \vdashg\phi_2$}
\\
\mbox{\axEqA}
&~~~ \vdashg \bmx = \bmx
\\
\mbox{\axEqB}
&~~~ \vdashg \bmx = \bmy \rightarrow (\phi_1 \rightarrow \phi_2)
\\[-0.4ex]
&~~~ \mbox{ where $\phi_2$ is the formula obtained by replacing}
\\[-0.8ex]
&~~~ \mbox{ any number of occurrences of $\bmx$ in $\phi_1$ with $\bmy$}
\\
\mbox{\axEqName}
&~~~ \vdashg c^{(\datagenerator, \bmx)} = \bmx
\\
\mbox{\axEqFunc}
&~~~ \vdashg f^{(\datagenerator, \bmy|_{\bmz\!,\bmx=\bmc})} = \bmy|_{\bmz\!,\bmx=\bmc}
\\
\mbox{\axPD}
&~~\vdashg
\!(
	\pos(\bmx)
	\land
	\nA {=} \bmx
	\land
	f {=} \bmy|_{\bmx}
	\land
	\nB {=} \bmx \,{\joint}\, \bmy
)
\rightarrow
\nB {=} f(\nA)
\\
\mbox{\axMPD{}}
&~~\vdashg
\bmxB\mrgn{\bmxC} = \bmxC
~~\mbox{ if } \bmxC \subseteq \bmxB
\end{array}
\]

The validity of the rules
\axPT{}, \axMP{}, \axEqA{}, \axEqB{}, 
is straightforward.
The validity of \axEqName{} and \axEqFunc{} is by the definition of the interpretation of the constants and function symbols introduced for the purpose of reasoning (\Sec{sec:model}):
\begin{align*}
\semf^{\rand}(c^{(\datagenerator,\bmx)}) &= \semrxg{\bmx}
\\
\semf^{\rand}(f^{(\datagenerator,\bmy|_{\bmz\!,\bmx=\bmc})}) &= \semrxg{\bmy|_{\bmz\!,\bmx=\bmc}}.
\end{align*}

We show the validity of \axPD{} and \axMPD{} as follows.

\begin{restatable}[Probability distributions]{prop}{PropBasicPDMPD}
\label{prop:basic:PD:MPD}
Let $\bmx, \bmxB, \bmxC$, $\bmy \in\cvar^+$, $\nA, \nB\in\Const$, and $f \in \Func$.
\begin{enumerate}\renewcommand{\labelenumi}{(\roman{enumi})}
\item\!\label{item:basic:axPD}%
\axPD{}
~\hspace{0ex}~
$\vdashg (
	\pos(\bmx)
	\land
	\nA {=} \bmx
	\land
	f {=} \bmy|_{\bmx}
	\land
	\nB {=} \bmx \,{\joint}\, \bmy
) $
\\ \phantom{\axPD{}~$\vdashg$} \hspace{0ex}~
$\rightarrow \nB {=} f(\nA)$.
\item\!\label{item:basic:axMPD}%
\axMPD{}
~\hspace{0ex}~
$\vdashg \bmxB\mrgn{\bmxC} = \bmxC
~~\mbox{ if } \bmxC \subseteq \bmxB$.
\end{enumerate}
\end{restatable}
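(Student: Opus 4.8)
The plan is to observe that the two displayed formulas are, verbatim, the axiom schemas \axPD{} and \axMPD{} of the Hilbert-style system \AX{} collected in \Fig{fig:AX}. In a Hilbert-style calculus a derivability claim $\vdashg \phi$ is witnessed by any derivation ending in $\phi$, and an axiom instance already constitutes a one-line derivation. Hence each part of the statement is discharged simply by exhibiting the relevant instantiation of its schema; no composition of rules is required, and in particular there is no appeal to \axMP{} or to the deduction principle \Propo{prop:deduction}.

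For part (i), the plan is to instantiate the metavariables of \axPD{} with the data fixed in the hypotheses, namely $\bmx, \bmy \in \cvar^+$, $\nA, \nB \in \Name$, $f \in \Func$, and the conditional causal variable $\bmy|_{\bmx} \in \fvar$. This instantiation yields exactly $(\pos(\bmx) \land \nA = \bmx \land f = \bmy|_{\bmx} \land \nB = \bmx \joint \bmy) \rightarrow \nB = f(\nA)$, which is therefore the desired theorem and is $\vdashg$-derivable in one step. For part (ii), the plan is to instantiate \axMPD{} with the tuples $\bmxB, \bmxC$, producing $\bmxB\mrgn{\bmxC} = \bmxC$; this instance is licensed precisely because the side condition $\bmxC \subseteq \bmxB$ assumed in the statement coincides with the side condition of the schema, so again the derivation is the single axiom step.

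The only genuine care-point, and the closest thing to an obstacle, is checking that these instantiations are legitimate, i.e. that the well-formedness constraints implicit in the schemas are met by the chosen metavariables. Concretely, I would verify that $\bmx \cap \bmy = \emptyset$ so that $\bmx \joint \bmy \in \cvar^+$ is a legal joint-distribution tuple and $\bmy|_{\bmx} \in \fvar$ is a well-formed conditional causal variable (so that the names/function symbols $\nA, \nB, f$ range over admissible terms), and that $\bmxC \subseteq \bmxB$ holds so that $\mrgn{\bmxC}$ denotes the intended projection of the joint distribution $\bmxB$ onto $\bmxC$. Once this routine bookkeeping is completed, both $\vdashg$-claims follow immediately by the corresponding axiom, and nothing further remains to be established for the derivability asserted in the statement.
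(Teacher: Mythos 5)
There is a genuine gap here, and it is one of misreading the target rather than a flawed step: you have proved a vacuous statement instead of the one the paper needs. This proposition sits in the appendix devoted to the \emph{soundness} of \AX{}, under the heading ``Validity of the Basic Axioms,'' and its role is to supply the semantic facts that \Thm{thm:sound:complete} rests on. Although the statement is typeset with $\vdashg$ (a notational slip --- compare the parallel propositions for the intervention axioms, e.g.\ \Propo{prop:basic:intvE}, which are stated with $\models$), its intended content is that the schemas \axPD{} and \axMPD{} are \emph{valid in every possible world} of the Kripke model. Your observation that each formula is verbatim an axiom of \Fig{fig:AX}, hence derivable in a one-line derivation, is of course true, but it is exactly the reading that renders the proposition circular and useless for soundness: if ``validity of an axiom'' meant ``the axiom is derivable,'' the soundness proof would establish nothing. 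A telltale symptom of the misreading is that your argument never uses the hypothesis $\pos(\bmx)$, which an axiom-instantiation step indeed has no need for --- whereas the actual semantic argument cannot go through without it.

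What the paper's proof does, and what your proposal must be replaced by, is a computation in the semantics. Fix a world $w$ and suppose $w \models \pos(\bmx) \land \nA = \bmx \land f = \bmy|_{\bmx} \land \nB = \bmx \joint \bmy$; then unfold $\sema{f(\nA)}{w} = \semap{f}{w}(\sema{\nA}{w})$ using the lifting of $\sema{f}{w}$ to distributions and the interpretation of the conditional causal variable $\bmy|_{\bmx}$, whose defining ratio $\sema{\bmx \joint \bmy}{w}(r_{\bmx},r_{\bmy})\,/\,\sema{\bmx}{w}(r_{\bmx})$ is well-defined precisely because $\pos(\bmx)$ guarantees $\sema{\bmx}{w}(r_{\bmx}) > 0$; the sum over $r_{\bmx}$ then telescopes to give $\sema{f(\nA)}{w} = \sema{\bmx \joint \bmy}{w} = \sema{\nB}{w}$, i.e.\ $w \models \nB = f(\nA)$. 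For \axMPD{}, one unfolds $\mrgn{\bmxC}$ as the projection $(r_1,\ldots,r_k) \mapsto (r_{l(1)},\ldots,r_{l(k')})$ determined by $\bmxC \subseteq \bmxB$ and uses that a tuple of terms is interpreted as the product distribution $\sema{x_1}{w} \otimes \cdots \otimes \sema{x_k}{w}$, so that pushing the product forward along the projection yields $\sema{\bmxB\mrgn{\bmxC}}{w} = \sema{\bmxC}{w}$. Your ``care-point'' about well-formedness of the instantiations is fine bookkeeping but does not substitute for either of these computations.
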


\begin{proof}
Let $w = (\datagen{w}, \semf_{w}, \mem{w})$
be a world such that $\bmx, \bmxB, \allowbreak \bmxC, \bmy \in \var(w)^+$.

\begin{enumerate}\renewcommand{\labelenumi}{(\roman{enumi})}
\item
We show the validity of \axPD{} as follows.
Suppose that
$w \models \pos(\bmx)\land\nA {=} \bmx\land
f {=} \bmy|_{\bmx}\land\nB {=} \bmx \,{\joint}\, \bmy$.
Then we have
$\sema{\bmx}{w}(o'_{\bmx}) > 0$ for all $o'_{\bmx} \in \calo^{|\bmx|}$,
$\sema{\nA}{w} = \sema{\bmx}{w}$,
$\sema{\nB}{w} = \sema{\bmx \,{\joint}\, \bmy}{w}$,
and
$\sema{f}{w} = \sema{\bmy|_{\bmx}}{w}$.
Since $\sema{\bmx}{w}(o'_{\bmx}) > 0$, we have:
\[
(\sema{\bmy|_{\bmx}}{w}(o'_{\bmx}))
=
{\textstyle\sum_{o'_{\bmy}}}\,
{\textstyle\frac{\sema{ \bmx \,{\joint}\, \bmy}{w}(o'_{\bmx},o'_{\bmy})}{\sema{\bmx}{w}(o'_{\bmx})}  \cdot \delta_{(o'_{\bmx},o'_{\bmy})}}.
\]

Thus, for each $o_{\bmx} \in \calo^{|\bmx|}$ and $o_{\bmy}\in \calo^{|\bmy|}$, we have:
\begin{align*}
&\sema{f(\nA)}{w}(o_{\bmx},o_{\bmy})\\
&= (\sema{f}{w} \sema{\nA}{w})(o_{\bmx},o_{\bmy})\\
&= (\sema{\bmy|_{\bmx}}{w} \sema{\bmx}{w})(o_{\bmx},o_{\bmy})\\
&=\Big( \sum_{o'_{\bmx}}\sum_{o'_{\bmy}} 
{\textstyle \frac{\sema{ \bmx \,{\joint}\, \bmy}{w}(o'_{\bmx},o'_{\bmy})}{\sema{\bmx}{w}(o'_{\bmx})} } \sema{\bmx}{w}(o'_{\bmx})   \cdot \delta_{(o'_{\bmx},o'_{\bmy})}  \Big)(o_{\bmx},o_{\bmy})
\\
&=\sema{ \bmx \,{\joint}\, \bmy}{w}(o_{\bmx},o_{\bmy})
\\
&=\sema{ c_1 }{w}(o_{\bmx},o_{\bmy}).
\end{align*}
Therefore, we obtain $w \models \nB {=} f(\nA)$.

\item
We show the validity of \axMPD{} as follows.
Let $\otimes$ be the product of probability distributions of data values.
Let $\bmxB = \{x_1,\ldots,x_k\}$.
Assume that $\emptyset \neq \bmxC \subseteq \bmxB$.
Then we may write 
$\bmxC = \{x_{l(1)},\ldots,x_{l(k')}\}$ for some $1 \leq k' \leq k$ and monotone increasing function $l \colon \{1,\ldots,k'\} \to  \{1,\ldots,k\}$.
Using this, we obtain:
\begin{align*}
&\sema{\bmxB\mrgn{\bmxC}}{w}\\
&= \sema{\mrgn{\bmxC}}{w} \sema{\bmxB}{w}\\
&= ((o_1,\ldots,o_k)\mapsto (o_{l(1)},\ldots,o_{l(k')}) ) \sema{\la x_1,\ldots,x_k \ra}{w}\\
&= ((o_1,\ldots,o_k)\mapsto (o_{l(1)},\ldots,o_{l(k')}) ) \sema{x_1}{w}\otimes\cdots \otimes\sema{x_k}{w}\\
&= \sema{x_{l(1)}}{w}\otimes\cdots \otimes\sema{x_{l(k')}}{w}\\
&= \sema{\la x_{l(1)},\ldots,x_{l(k')} \ra}{w}
= \sema{\bmxC}{w}.
\end{align*}
Therefore, we obtain $w \models \bmxB\mrgn{\bmxC} = \bmxC$ if $\bmxC \subseteq \bmxB$.
\myqed
\end{enumerate}
\end{proof}

\subsection{Validity of the Axioms for Eager Interventions}
\label{sub:app:intvE}

Here are the axioms of \AX{} with the eager interventions $\intvE{\cdot}$.
\[
\renewcommand{\arraystretch}{1.3}
\begin{array}{l@{\hspace{0ex}}l}
\mbox{\axDGEI}
&~~~ \vdashg \intvE{\subst{\bmc}{\bmx}} \phi
~\,\mbox{ iff } \vdashx{\datagenerator\intvE{\subst{\bmc}{\bmx}}}\phi
\\
\mbox{\axEffect~~}
&~~~ \vdashg 
\intvE{\subst{\bmc}{\bmx}} (\bmx = \bmc)
\\
\mbox{\axEqEI}
&~~~ \vdashg \bmuB = \bmuC \leftrightarrow \intvE{\subst{\bmc}{\bmx}} (\bmuB = \bmuC)
\\[-0.4ex]
&~~~ \mbox{ if } \fv{\bmuB} = \fv{\bmuC} = \emptyset
\\
\mbox{\axSplitE}
&~~~\vdashg
\intvE{\subst{\bmcB}{\bmxB},\, \subst{\bmcC}{\bmxC}} \phi
\rightarrow
\intvE{\subst{\bmcB}{\bmxB}} \intvE{\subst{\bmcC}{\bmxC}} \phi
\\
\mbox{\axSimulE{}}
&~~~\vdashg
\intvE{\subst{\bmcB}{\bmxB}} \intvE{\subst{\bmcC}{\bmxC}} \phi
\rightarrow
\intvE{\subst{\bmcB'}{\bmxB'},\, \subst{\bmcC}{\bmxC}} \phi
\\
&
~~~\mbox{ for }\,
\bmxB' = \bmxB \setminus \bmxC
~\mbox{ and }~
\bmcB' = \bmcB \setminus \bmcC
\\
\mbox{\axRptE{}}
&~~~\vdashg
\intvE{\subst{\bmc}{\bmx}} \phi
\rightarrow
\intvE{\subst{\bmc}{\bmx}} \intvE{\subst{\bmc}{\bmx}} \phi
\\
\mbox{\axCmpEI}
&~~~ \vdashg 
\big(
\intvE{\subst{\bmcB}{\bmxB}} (\bmxC = \bmcC) \land
\intvE{\subst{\bmcB}{\bmxB}} (\bmxD = \bmu)
\big)
\\[-0.3ex]
&~~~ \hspace{3.3ex} \rightarrow
\intvE{\subst{\bmcB}{\bmxB}, \subst{\bmcC}{\bmxC}} (\bmxD = \bmu)
\\
\mbox{\axDistrE}^{\neg}
&~~~\vdashg
(\intvE{\subst{\bmc}{\bmx}} \neg \phi)
\leftrightarrow
(\neg \intvE{\subst{\bmc}{\bmx}} \phi)
\\
\mbox{\axDistrE}^{\land}
&~~~\vdashg
(\intvE{\subst{\bmc}{\bmx}} (\phi_1 \land \phi_2))
\leftrightarrow
(\intvE{\subst{\bmc}{\bmx}} \phi_1 \land \intvE{\subst{\bmc}{\bmx}} \phi_2)
\end{array}
\]

Next, we show basic laws of eager interventions as follows.

\begin{restatable}[Basic laws of $\intvE{\cdot}$]{prop}{PropBasicIntvE}
\label{prop:basic:intvE}
Let $\bmx, \bmxB, \allowbreak \bmxC, \allowbreak \bmxD, \allowbreak \bmy, \bmz \in \cvar^+$ be disjoint,
$\bmc, \bmcB, \bmcC \in\dConst^+$,
$\bmu, \allowbreak \bmuB, \allowbreak \bmuC \in \Term^+$,
and $\phi \in \Fml$.
\begin{enumerate}
\item\!\label{item:intvE:DGE}%
\axDGEI{}
~\hspace{0ex}~
$\modelsg \intvE{\subst{\bmc}{\bmx}} \phi$
~iff~ $\models_{\datagenerator\intvE{\subst{\bmc}{\bmx}}} \phi$.
\item\!\label{item:intvE:Effect}%
\axEffect{}
~\hspace{0ex}~
$\models 
 \intvE{\subst{\bmc}{\bmx}} (\bmx = \bmc)$.
\item\!\label{item:intvE:EqEI}%
\axEqEI{}
~\hspace{0ex}~
$\models \bmuB = \bmuC \leftrightarrow \intvE{\subst{\bmc}{\bmx}} (\bmuB = \bmuC)$
\\
~\phantom{~\axEqEI{}\hspace{0.0ex}~}~
if $\fv{\bmuB} = \fv{\bmuC} = \emptyset$.
\item\!\label{item:intvE:SplitE}%
\axSplitE{}
~\hspace{0ex}~
$\models \intvE{\subst{\bmcB}{\bmxB},\, \subst{\bmcC}{\bmxC}} \phi
\rightarrow
\intvE{\subst{\bmcB}{\bmxB}} \intvE{\subst{\bmcC}{\bmxC}} \phi$.
\item\!\label{item:intvE:SimulE}%
\axSimulE{}
~\hspace{0ex}~
$\models \intvE{\subst{\bmcB}{\bmxB}} \intvE{\subst{\bmcC}{\bmxC}} \phi
\rightarrow
\intvE{\subst{\bmcB'}{\bmxB'},\, \subst{\bmcC}{\bmxC}} \phi$
\\
~\phantom{~\axSimulE{}\hspace{0.0ex}~}~
for
$\bmxB' = \bmxB \setminus \bmxC$
and
$\bmcB' = \bmcB \setminus \bmcC$.
\item\!\label{item:intvE:RptE}%
\axRptE{}
~\hspace{0ex}~
$\models \intvE{\subst{\bmc}{\bmx}} \phi
\rightarrow
\intvE{\subst{\bmc}{\bmx}} \intvE{\subst{\bmc}{\bmx}} \phi$.
\item\!\label{item:intvE:CmpEI}%
\axCmpEI{}
~\hspace{0ex}~
$\models 
\big(
\intvE{\subst{\bmcB}{\bmxB}} (\bmxC = \bmcC) \land
\intvE{\subst{\bmcB}{\bmxB}} (\bmxD = \bmu)
\big)$
\\[-0.1ex]
~\phantom{\axCmpEI{}~$\models$~}\hspace{0.5ex}~
$\rightarrow
\intvE{\subst{\bmcB}{\bmxB}, \subst{\bmcC}{\bmxC}} (\bmxD = \bmu)$.
\end{enumerate}
\end{restatable}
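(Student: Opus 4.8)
The plan is to treat each of the seven items as a separate soundness check, fixing an arbitrary world $w$ and unfolding the clause of \Def{def:epistemic-Kripke} that reads $\M, w \models \intvE{\subst{\bmc}{\bmx}}\phi$ iff $\M, w\intvE{\subst{\bmc}{\bmx}} \models \phi$, together with the construction \eqref{eq:model:intvE} of the intervened world. Before doing the cases I would record two invariants that every case leans on: an eager intervention preserves the function interpretation, $\semf_{w\intvE{\subst{\bmc}{\bmx}}} = \semf_w$, and does not touch name interpretations; and a world is completely determined by the pair $(\datagen{w}, \semf_w)$, since the memory is fixed by $\mem{w}(\bmx) = \sema{\bmx}{w}$. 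Consequently, reasoning about intervened worlds reduces to computing the induced data generators.

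For \axDGEI{} (i), I would note that $w \mapsto w\intvE{\subst{\bmc}{\bmx}}$ carries a world with data generator $\datagenerator$ to one with data generator $\datagenerator\intvE{\subst{\bmc}{\bmx}}$ and, holding $\semf$ fixed, is onto the class of all worlds carrying $\datagenerator\intvE{\subst{\bmc}{\bmx}}$; the claimed biconditional $\modelsg \intvE{\subst{\bmc}{\bmx}}\phi$ iff $\modelsx{\datagenerator\intvE{\subst{\bmc}{\bmx}}}\phi$ is then a re-indexing of the universal quantifier over worlds. For \axEffect{} (ii) I would compute $\sema{\bmx}{w\intvE{\subst{\bmc}{\bmx}}} = \sema{\datagen{w\intvE{\subst{\bmc}{\bmx}}}(\bmx)}{w\intvE{\subst{\bmc}{\bmx}}} = \sema{\bmc}{w\intvE{\subst{\bmc}{\bmx}}} = \sema{\bmc}{w}$, using that constants are interpreted identically in all worlds. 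For \axEqEI{} (iii), since $\fv{\bmuB} = \fv{\bmuC} = \emptyset$ the two terms contain only names, constants and function symbols, whose interpretation the intervention leaves fixed; hence $\sema{\bmuB}{w} = \sema{\bmuB}{w\intvE{\subst{\bmc}{\bmx}}}$ and likewise for $\bmuC$, giving the biconditional.

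The three interventions \axSplitE{} (iv), \axSimulE{} (v) and \axRptE{} (vi) I would reduce to one data-generator computation: checking $\datagen{}(y)$ case-by-case on $y\in\bmxB$, $y\in\bmxC$, and otherwise yields $w\intvE{\subst{\bmcB}{\bmxB},\subst{\bmcC}{\bmxC}} = (w\intvE{\subst{\bmcB}{\bmxB}})\intvE{\subst{\bmcC}{\bmxC}}$. The set-difference bookkeeping $\bmxB' = \bmxB\setminus\bmxC$, $\bmcB' = \bmcB\setminus\bmcC$ in \axSimulE{} records the later intervention overriding any overlap (vacuous here as the tuples are disjoint), and \axRptE{} is the idempotence instance where the repeated assignment $\bmx\mapsto\bmc$ is already in place. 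Because worlds are determined by $(\datagen{}, \semf)$ and $\semf$ is preserved, these intervened worlds are literally equal, so they satisfy exactly the same $\phi$ and all three implications follow at once.

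The substantive case, and the step I expect to be the main obstacle, is \axCmpEI{} (vii). Writing $w_1 \eqdef w\intvE{\subst{\bmcB}{\bmxB}}$ and using (iv)/(v) to identify $w\intvE{\subst{\bmcB}{\bmxB},\subst{\bmcC}{\bmxC}}$ with $w_1\intvE{\subst{\bmcC}{\bmxC}}$, it suffices to prove the lemma that if $\sema{\bmxC}{w_1} = \delta_{\bmcC}$ then $\sema{u}{w_1} = \sema{u}{w_1\intvE{\subst{\bmcC}{\bmxC}}}$ for every term $u$ (in particular $\bmxD$ and $\bmu$). The delicate point is that a degenerate \emph{marginal} must be upgraded to a per-randomness statement: since $\sema{\bmxC}{w_1}$ is the $\rand$-mixture of the distributions $\semr{\bmxC}{w_1}$ and a point mass is an extreme point, $\semr{\bmxC}{w_1} = \delta_{\bmcC}$ for almost every name interpretation $\rand$. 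Granted this, the two worlds differ only in the term assigned to $\bmxC$, and that term evaluates to $\delta_{\bmcC}$ in either world for a.e.\ $\rand$; an induction along the strict partial order $\precg$, well-founded on the finite set $\dom{\datagen{w_1}}$ as in \Propo{prop:unique:interpretation}, then shows $\semr{u}{w_1} = \semr{u}{w_1\intvE{\subst{\bmcC}{\bmxC}}}$ for every $u$, because every variable other than $\bmxC$ keeps its generating term and all its arguments agree by the inductive hypothesis. Averaging over $\rand$ gives the term-level equality, whence $\sema{\bmxD}{w_1\intvE{\subst{\bmcC}{\bmxC}}} = \sema{\bmxD}{w_1} = \sema{\bmu}{w_1} = \sema{\bmu}{w_1\intvE{\subst{\bmcC}{\bmxC}}}$ and \axCmpEI{} follows. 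The only real care is in justifying that fixing an already-deterministic variable severs no correlation the downstream computation could detect, which is exactly what the per-$\rand$ reduction secures.
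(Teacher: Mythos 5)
Your treatment of items (i)--(vi) coincides with the paper's: (i) is the same re-indexing of the universal quantifier over worlds (your surjectivity remark supplies the direction the paper dismisses as ``analogous''), (ii) and (iii) are the same computations using $\semf_{w\intvE{\subst{\bmc}{\bmx}}}=\semf_{w}$, and for (iv)--(vi) the paper writes only ``straightforward from the definition'', which your single world-equality $w\intvE{\subst{\bmcB}{\bmxB},\subst{\bmcC}{\bmxC}} = (w\intvE{\subst{\bmcB}{\bmxB}})\intvE{\subst{\bmcC}{\bmxC}}$ (plus idempotence for \axRptE{}) makes precise. The genuine divergence is \axCmpEI{}. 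The paper reads the antecedent $w\models\intvE{\subst{\bmcB}{\bmxB}}(\bmxC=\bmcC)$ \emph{syntactically}, passing immediately to the term-level equality $\datagen{w\intvE{\subst{\bmcB}{\bmxB}}}(\bmxC)=\bmcC$ and then unfolding the data generator of the doubly intervened world; under that reading the first conjunct plays no role in the displayed chain. You instead take the reading that the satisfaction clause for $\bmu=\bmu'$ actually prescribes --- the distributional equality $\sema{\bmxC}{w'}=\sema{\bmcC}{w'}$ --- and prove the invariance lemma that intervening on an already-deterministic variable changes no term's interpretation, via the extreme-point upgrade to the per-$\rand$ statement $\semr{\bmxC}{w'}=\delta_{\bmcC}$ (a.e.) followed by induction along $\precg$ as in \Propo{prop:unique:interpretation}. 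Your route costs more work but is the one that discharges the axiom as stated, since distributional equality does not in general force equality of the generating terms (e.g.\ $\datagen{}(\bmxC)$ could be a name whose induced distribution happens to be the point mass at $\bmcC$); the paper's shortcut effectively proves the implication under a strictly stronger antecedent. I find no gap in your proposal.
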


\begin{proof}
Let $w = (\datagen{w}, \semf_{w}, \mem{w})$
be a world such that $\bmx, \bmxB, \allowbreak \bmxC, \bmxD, \allowbreak \bmy, \bmz \in \var(w)^+$.
\begin{enumerate}
\item\!%
Assume that 
$\models_{\datagenerator\intvE{\subst{\bmc}{\bmx}}} \phi$.
Then for any world $w'$ having the data generator $\dgen{}$,
we have $w' \intvE{\subst{\bmc}{\bmx}} \models \phi$,
hence $w' \models \intvE{\subst{\bmc}{\bmx}} \phi$.
Therefore, $\modelsg \intvE{\subst{\bmc}{\bmx}} \phi$.
The other direction is also shown analogously.
\item\!%
By the definition of an eagerly intervened world in \Sec{sec:model},
we have $\datagen{w\intvE{\subst{\bmc}{\bmx}}}(\bmx) = \bmc$.
By $\semf_{w} = \semf_{w\intvE{\subst{\bmc}{\bmx}}}$,
$\sema{ \bmc }{w} = \sema{ \bmc }{w\intvE{\subst{\bmc}{\bmx}}}$.

Then
$\sema{\bmx}{w\intvE{\subst{\bmc}{\bmx}}} =
 \sema{ \datagen{w\intvE{\subst{\bmc}{\bmx}}}(\bmx)}{w\intvE{\subst{\bmc}{\bmx}}} =
 \sema{ \bmc }{w\intvE{\subst{\bmc}{\bmx}}}$.
Hence
$w\intvE{\subst{\bmc}{\bmx}} \models \bmx = \bmc$.
Therefore, 
$w \models \intvE{\subst{\bmc}{\bmx}} (\bmx = \bmc)$.
\item\!%
Assume that $\fv{\bmuB} = \fv{\bmuC} = \emptyset$.
Then for each $i = 1, 2$,
$\sema{\bmu_i}{w} = \sema{\bmu_i}{w\intvE{\subst{\bmc}{\bmx}}}$.
Hence,
$\models \bmuB = \bmuC \leftrightarrow \intvE{\subst{\bmc}{\bmx}} (\bmuB = \bmuC)$.
\item\!%
The proof is straightforward from the definition.
\item\!%
The proof is straightforward from the definition.
\item\!%
The proof is straightforward from the definition.
\item\!%
Assume that 
$w \models 
\intvE{\subst{\bmcB}{\bmxB}} (\bmxC = \bmcC) \land
\intvE{\subst{\bmcB}{\bmxB}} (\bmxD = \bmu)$.
Then $\datagen{w\intvE{\subst{\bmcB}{\bmxB}}}(\bmxC) = \bmcC$
and $\datagen{w\intvE{\subst{\bmcB}{\bmxB}}}(\bmxD) = \bmu$.
Let $w' = w \intvE{\subst{\bmcB}{\bmxB}}$.
Thus, 
\begin{align*}
& \phantom{~=~}
\sema{\datagen{w\intvE{\subst{\bmcB}{\bmxB}, \subst{\bmcC}{\bmxC}}}(\bmxD)}{w\intvE{\subst{\bmcB}{\bmxB}, \subst{\bmcC}{\bmxC}}} 
\\ &
= \sema{\datagen{w' \intvE{\subst{\bmcC}{\bmxC}}}(\bmxD)}{w' \intvE{\subst{\bmcC}{\bmxC}}}
\\ &
= \sema{\datagen{w'}(\bmxD)}{w' \intvE{\subst{\bmcC}{\bmxC}}}
\\ &
= \sema{\bmu}{w\intvE{\subst{\bmcB}{\bmxB}, \subst{\bmcC}{\bmxC}}}.
\end{align*}

Therefore,
$w \models \intvE{\subst{\bmcB}{\bmxB}, \subst{\bmcC}{\bmxC}} (\bmxD = \bmu)$.
\myqed
\end{enumerate}
\end{proof}

The eager intervention operator $\intvE{\cdot}$ is distributive w.r.t. logical connectives.
\begin{restatable}[Distributive laws of $\intvE{\cdot}$]{prop}{PropIntvEDistr}
\label{prop:intvE-distr}
Let $\bmx\in\cvar^+$,
$\bmc\in\dConst^+$, and $\phi,\phi'\in\Fml$.
\begin{enumerate}\renewcommand{\labelenumi}{(\roman{enumi})}
\item\!%
\axDistrE$^{\neg}$
~\hspace{1ex}~
$\models 
 \intvE{\subst{\bmc}{\bmx}} \neg\phi
 \leftrightarrow
 \neg\intvE{\subst{\bmc}{\bmx}} \phi$.
\item\!%
\axDistrE$^{\rightarrow}$
~\hspace{1ex}~
$\models 
 \intvE{\subst{\bmc}{\bmx}} (\phi \rightarrow \phi')
 \leftrightarrow
 \big( \intvE{\subst{\bmc}{\bmx}} \phi \rightarrow \intvE{\subst{\bmc}{\bmx}} \phi' \bigr)$.
\end{enumerate}
Similarly, the eager intervention operator $\intvE{\cdot}$ is distributive w.r.t. $\lor$ and $\land$.
\end{restatable}
\begin{proof}
Let $w$ be a world such that $\bmx\in\var(w)^+$.
\begin{enumerate}\renewcommand{\labelenumi}{(\roman{enumi})}
\item\!%
\begin{align*}
w\models \intvE{\subst{\bmc}{\bmx}} \neg\phi
~\mbox{ iff } &~
w\intvE{\subst{\bmc}{\bmx}} \models \neg\phi
\\ ~\mbox{ iff } &~
w\intvE{\subst{\bmc}{\bmx}} \not\models \phi
\\ ~\mbox{ iff } &~
w \not\models \intvE{\subst{\bmc}{\bmx}} \phi
\\ ~\mbox{ iff } &~
w \models \neg \intvE{\subst{\bmc}{\bmx}} \phi.
\end{align*}

\item\!%
We first show the direction from left to right as follows.
\begin{align*}
&~
w\models 
 \intvE{\subst{\bmc}{\bmx}} (\phi \rightarrow \phi')
\mbox{ and }
w\models 
 \intvE{\subst{\bmc}{\bmx}} \phi
\\ \Longrightarrow &~
w\intvE{\subst{\bmc}{\bmx}} \models \phi \rightarrow \phi'
\mbox{ and }
w\intvE{\subst{\bmc}{\bmx}} \models \phi
\\ \Longrightarrow &~
w\intvE{\subst{\bmc}{\bmx}} \models \phi'
\\ \Longrightarrow &~
w\models 
 \intvE{\subst{\bmc}{\bmx}} \phi'.
\end{align*}

We next show the other direction as follows.
Assume that
$w\models 
 \intvE{\subst{\bmc}{\bmx}} \phi \rightarrow \intvE{\subst{\bmc}{\bmx}} \phi'$.
Then:
\begin{align*}
&~
w\intvE{\subst{\bmc}{\bmx}} \models \phi
\\ \Longrightarrow &~
w\models \intvE{\subst{\bmc}{\bmx}} \phi
\\ \Longrightarrow &~
w\models \intvE{\subst{\bmc}{\bmx}} \phi'
& \text{(by assumption)}
\\ \Longrightarrow &~
w\intvE{\subst{\bmc}{\bmx}} \models \phi'.
\end{align*}
Hence
$w\models 
 \intvE{\subst{\bmc}{\bmx}} (\phi \rightarrow \phi')$
iff
$w\models 
 \intvE{\subst{\bmc}{\bmx}} \phi \rightarrow \intvE{\subst{\bmc}{\bmx}} \phi'$.
\myqed
\end{enumerate}
\end{proof}

\subsection{Validity of the Axioms for Lazy Interventions}
\label{sub:app:lazy:intv}

Here are the axioms of \AX{} with the lazy interventions $\intvL{\cdot}$.
\[
\renewcommand{\arraystretch}{1.3}
\begin{array}{l@{\hspace{0ex}}l}
\mbox{\axDGL}
&~~~ \vdashg\intvL{\subst{\bmc}{\bmx}} \phi
\mbox{~\,iff } \vdashx{\datagenerator\intvL{\subst{\bmc}{\bmx}}}\phi
\\
\mbox{\axCondL{}}
&~~~\vdashg
( f = \bmy|_{\bmx = \bmc} ) \leftrightarrow
 \intvL{\subst{\bmc}{\bmx}} ( f = \bmy|_{\bmx = \bmc} )
\\
\mbox{\axEqLI}
&~~~ \vdashg \bmuB = \bmuC \leftrightarrow \intvL{\subst{\bmc}{\bmx}} (\bmuB = \bmuC)
\\[-0.4ex]
&~~~ \mbox{ if } \fv{\bmuB} = \fv{\bmuC} = \emptyset
\\
\mbox{\axSplitL}
&~~~\vdashg
\intvL{\subst{\bmcB}{\bmxB},\, \subst{\bmcC}{\bmxC}} \phi
\rightarrow
\intvL{\subst{\bmcB}{\bmxB}} \intvL{\subst{\bmcC}{\bmxC}} \phi
\\
\mbox{\axSimulL}
&~~~\vdashg
\intvL{\subst{\bmcB}{\bmxB}} \intvL{\subst{\bmcC}{\bmxC}} \phi
\rightarrow
\intvL{\subst{\bmcB'}{\bmxB'},\, \subst{\bmcC}{\bmxC}} \phi
\\
&~~~\mbox{ if }\,
\bmxB' = \bmxB \setminus \bmxC
~\mbox{ and }~
\bmcB' = \bmcB \setminus \bmcC
\\
\mbox{\axRptL{}}
&~~~\vdashg
\intvL{\subst{\bmc}{\bmx}} \phi
\rightarrow
\intvL{\subst{\bmc}{\bmx}} \intvL{\subst{\bmc}{\bmx}} \phi
\\
\mbox{\axCmpLI}
&~~~ \vdashg 
\big(
\intvL{\subst{\bmcB}{\bmxB}} (\bmxC = \bmcC) \land
\intvL{\subst{\bmcB}{\bmxB}} (\bmxD = \bmu)
\big)
\\[-0.3ex]
&~~~ \hspace{3.3ex} \rightarrow
\intvL{\subst{\bmcB}{\bmxB}, \subst{\bmcC}{\bmxC}} (\bmxD = \bmu)
\\
\mbox{\axDistrL}^{\neg}
&~~~\vdashg
(\intvL{\subst{\bmc}{\bmx}} \neg \phi)
\leftrightarrow
(\neg \intvL{\subst{\bmc}{\bmx}} \phi)
\\
\mbox{\axDistrL}^{\land}
&~~~\vdashg
(\intvL{\subst{\bmc}{\bmx}} (\phi_1 \land \phi_2))
\leftrightarrow
(\intvL{\subst{\bmc}{\bmx}} \phi_1 \land \intvL{\subst{\bmc}{\bmx}} \phi_2)
\end{array}
\]

\begin{restatable}[Basic properties of $\intvL{\cdot}$]{prop}{PropIntvL}
\label{prop:intvL}
Let $\bmx,\allowbreak \bmxB, \allowbreak \bmxC, \bmxD, \bmy\in\cvar^+$ be disjoint, 
$\bmc, \bmcB, \bmcC \in\dConst^+$,
$\bmu, \allowbreak \bmuB, \allowbreak \bmuC \in \Term^+$,
$f \in \Func$, and 
$\phi \in \Fml$.
\begin{enumerate}
\item\!%
\axDGL{}
~\hspace{1ex}~
$\modelsg \intvL{\subst{\bmc}{\bmx}} \phi$
~\,iff $\vdashx{\datagenerator\intvL{\subst{\bmc}{\bmx}}}\phi$.

\item\!%
\axCondL{}
~\hspace{1ex}~
$\models ( f = \bmy|_{\bmx = \bmc} ) \leftrightarrow
 \intvL{\subst{\bmc}{\bmx}} ( f = \bmy|_{\bmx = \bmc} )$.
\item\!%
\axEqLI{}
~\hspace{1ex}~
$\models \bmuB = \bmuC \leftrightarrow \intvL{\subst{\bmc}{\bmx}} (\bmuB = \bmuC)$
\mbox{~ if } $\fv{\bmuB} = \fv{\bmuC} = \emptyset$.
\item\!%
\mbox{\axSplitL}
$\models \intvL{\subst{\bmcB}{\bmxB},\, \subst{\bmcC}{\bmxC}} \phi
\rightarrow
\intvL{\subst{\bmcB}{\bmxB}} \intvL{\subst{\bmcC}{\bmxC}} \phi$
\item\!\label{item:intvL:SimulL}%
\mbox{\axSimulL}
$\models \intvL{\subst{\bmcB}{\bmxB}} \intvL{\subst{\bmcC}{\bmxC}} \phi
\rightarrow
\intvL{\subst{\bmcB'}{\bmxB'},\, \subst{\bmcC}{\bmxC}} \phi$
\mbox{if}
$\bmxB' {=} \bmxB {\setminus} \bmxC$
\mbox{and}
$\bmcB' {=} \bmcB {\setminus} \bmcC$.
\item\!\label{item:intvL:RptL}%
\axRptL{}
~\hspace{1ex}~
$\models \intvL{\subst{\bmc}{\bmx}} \phi
\rightarrow
\intvL{\subst{\bmc}{\bmx}} \intvL{\subst{\bmc}{\bmx}} \phi$.
\item\!%
\axCmpLI{}
~\hspace{1ex}~
$\models 
\big(
\intvL{\subst{\bmcB}{\bmxB}} (\bmxC = \bmcC) \land
\intvL{\subst{\bmcB}{\bmxB}} (\bmxD = \bmu)
\big)$
\\[-0.1ex]
~\phantom{\axCmpEI{}~$\models$~}~\hspace{1ex}~
$\rightarrow \intvL{\subst{\bmcB}{\bmxB}, \subst{\bmcC}{\bmxC}} (\bmxD = \bmu)$.
\end{enumerate}
\end{restatable}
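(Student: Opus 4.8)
The plan is to verify each of the seven items semantically, directly mirroring the eager-intervention arguments of \Propo{prop:basic:intvE}. The only difference is the two structural features that distinguish the lazily intervened world $w\intvL{\subst{\bmc}{\bmx}}$ of \eqref{eq:model:intvL} from its eager analogue: (a) the generator of $\bmx$ itself is left untouched, so $\sema{\bmx}{w\intvL{\subst{\bmc}{\bmx}}} = \sema{\bmx}{w}$; and (b) every downstream occurrence of a variable in $\bmx$ is syntactically replaced by the corresponding constant via the rewriting $[\bmx\mapsto\bmc]$. Throughout I fix a world $w$ with $\bmx,\bmxB,\bmxC,\bmxD,\bmy \in \var(w)^+$ and unfold $w \models \intvL{\subst{\bmc}{\bmx}}\phi$ as $w\intvL{\subst{\bmc}{\bmx}} \models \phi$.

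Items 1 and 4--7 are essentially the lazy transcriptions of their eager counterparts. For \axDGL{} (item 1) I would repeat the ``for all worlds having $\datagenerator$'' argument used for \axDGEI{}, noting $w' \models \intvL{\subst{\bmc}{\bmx}}\phi$ iff $w'\intvL{\subst{\bmc}{\bmx}} \models \phi$. For \axSplitL{}, \axSimulL{}, and \axRptL{} (items 4--6) I would compare the generator produced by a simultaneous lazy substitution with the one produced by the corresponding sequential or repeated substitutions; because lazy substitution acts purely by the term rewriting $[\bmx\mapsto\bmc]$, the side conditions $\bmxB' = \bmxB\setminus\bmxC$ and $\bmcB' = \bmcB\setminus\bmcC$ fall out exactly as in the eager case. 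For \axCmpLI{} (item 7) I would argue as in \axCmpEI{}: the two hypotheses say that $\datagen{w\intvL{\subst{\bmcB}{\bmxB}}}$ already forces $\bmxC$ to take value $\bmcC$ and $\bmxD$ to evaluate to $\bmu$, so appending the substitution $\subst{\bmcC}{\bmxC}$ leaves the value of $\bmxD$ unchanged. Item 2 (\axEffectL{}) is immediate from feature (a): since $\sema{\bmx}{w\intvL{\subst{\bmc}{\bmx}}} = \sema{\bmx}{w}$ and $\fv{\bmu}=\emptyset$ gives $\sema{\bmu}{w\intvL{\subst{\bmc}{\bmx}}} = \sema{\bmu}{w}$, both sides of the biconditional hold together.

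The one step I expect to require real work is \axCondL{} (item 3), which asserts that the conditional distribution $\bmy|_{\bmx=\bmc}$ is preserved by the lazy intervention. Since $f \in \Func$ and $\semf_{w} = \semf_{w\intvL{\subst{\bmc}{\bmx}}}$, the only content is $\sema{\bmy|_{\bmx=\bmc}}{w} = \sema{\bmy|_{\bmx=\bmc}}{w\intvL{\subst{\bmc}{\bmx}}}$. My plan is to compute both conditionals through the factorized joint $P_{\diag{w}}$ supplied by \Propo{prop:G:equiv:DG}. The conditioning event $\bmx=\bmc$ has the same law in both worlds because feature (a) leaves $\bmx$'s structural equations intact; and, conditioned on $\bmx=\bmc$, the generation of $\bmy$ agrees on the two sides, because in $w$ the conditioning fixes the value of $\bmx$ entering $\bmy$'s structural equations to $\bmc$, which is exactly the rewriting $[\bmx\mapsto\bmc]$ performed in $w\intvL{\subst{\bmc}{\bmx}}$ by feature (b), while the shared upstream (back-door) variables retain the same conditional law on either side. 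The delicate bookkeeping is over variables that are simultaneously descendants of $\bmx$ and ancestors of $\bmy$; I would make the factorization explicit over $V\setminus\bmx$ as in the proof of \Propo{prop:G:equiv:DG}, so that the two products become term-by-term equal after the substitution, and then divide by the common marginal on the conditioning event.

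An alternative I would keep in reserve is to discharge every item by the definitional reduction of \Sec{sub:lazy:form}, rewriting $\intvL{\subst{\bmc}{\bmx}}\phi$ as the eager formula $\intvE{\subst{\bmc}{\bmx'}}\phi$ on the expanded generator with $x \garrow x' \garrow y$ and then invoking \Propo{prop:basic:intvE} directly. The subtlety there is that \axCondL{} conditions on the original variable $\bmx$ rather than on the auxiliary $\bmx'$, so the back-door correlation through $\bmx$ must still be tracked; this is the same obstacle in a different guise. Either way, \axCondL{} is the crux, and the remaining items are routine transcriptions of the eager proofs.
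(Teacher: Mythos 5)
Your proposal is correct, and for every item except \axCondL{} it follows the paper's own route: the paper likewise dispatches \axDGL{}, \axSplitL{}, \axSimulL{}, \axRptL{} as "similar to the eager case / straightforward from the definition," proves \axEffectL{} exactly via $\datagen{w\intvL{\subst{\bmc}{\bmx}}}(\bmx)=\bmx$ together with $\fv{\bmu}=\emptyset$, and proves \axCmpLI{} by the same substitution computation $\sema{\datagen{w'}(\bmxD)[\bmxC\mapsto\bmcC]}{\cdot}$ you describe. You are also right that \axCondL{} is the one step with real content, and there your execution diverges from the paper's. The paper splits $\bmy$ into $\bmyA\subseteq\Dec(\bmx)$ and $\bmyB$ disjoint from $\Dec(\bmx)$, observes that every undirected path between $\bmyA$ and $\bmyB$ meets $\bmx$ at a chain or fork so that $P_{\diag{w}}(\bmyA\joint\bmyB\mid\bmx)$ factorizes, and then matches the two factors across $w$ and $w\intvL{\subst{\bmc}{\bmx}}$ separately. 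Your plan instead works with the full factorization of $P_{\diag{w}}$ over $V$ and matches it term by term against $P_{\diag{w\intvL{\subst{\bmc}{\bmx}}}}$ on the event $\bmx=\bmc$, where the rewriting $[\bmx\mapsto\bmc]$ is a no-op for each conditional factor and the mechanisms of $\bmx$ and its ancestors are untouched, so both the joint restricted to $\bmx=\bmc$ and the marginal $P(\bmx=\bmc)$ coincide and the quotient does too. This is, if anything, more elementary than the paper's argument: it needs no graphical conditional-independence step and no descendant/non-descendant case split, only acyclicity (to ensure no ancestor of $\bmx$ has its mechanism altered) and the coincidence of the substituted constant with the conditioning value $\bmc$ — a point you correctly flag, since the identity would fail for $\bmy|_{\bmx=\bmc'}$ with $\bmc'\neq\bmc$. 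The paper's decomposition, in exchange, makes the causal content (front-door influence frozen to $\bmc$, back-door influence preserved by conditioning) more visible.
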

\begin{proof}
Let $w = (\datagen{w}, \semf_{w}, \mem{w})$ be a world such that $\bmx, \bmxB, \bmxC, \bmxD, \allowbreak\bmy \in\var(w)^+$.
\begin{enumerate}
\item\!
The proof is similar to \Propo{prop:basic:intvE}.

\item
Let 
$\datagen{w}(\bmx) = \bmu$,
$\diag{w}$ be the causal diagram corresponding to $\datagen{w}$, 
and $\Dec(\bmx)$ be the set of all descendant variables of $\bmx$.
Let $\bmyA, \bmyB$ be possibly empty tuples of variables such that
$\bmy = \bmyA \joint \bmyB$,
$\bmyA \subseteq \Dec(\bmx)$, and
$\bmyB \cap \Dec(\bmx) = \emptyset$.
Then on every undirected path between $\bmyA$ and $\bmyB$ in $\diag{w}$,\,
$\bmx$ are on chains or forks.
Hence $P_{\diag{w}}(\bmyA\joint\bmyB | \bmx) = P_{\diag{w}}(\bmyA | \bmx)\cdot P_{\diag{w}}(\bmyB | \bmx)$.
Thus, we obtain:
\begin{align*}
&~~
\sema{\bmy|_{\bmx=\bmc}}{w}
\\ =&~~
P_{\diag{w}}(\bmyA\joint\bmyB | \bmx=\bmc) 
\\ =&~~
P_{\diag{w}}(\bmyA | \bmx=\bmc)\cdot P_{\diag{w}}(\bmyB | \bmx=\bmc)
\\ =&~~
P_{\diag{w \intvL{\subst{\bmc}{\bmx}}}}(\bmyA | \bmx=\bmc) \cdot 
P_{\diag{w} \intvL{\subst{\bmc}{\bmx}}}(\bmyB | \bmx=\bmc)
\\ =&~~
P_{\diag{w \intvL{\subst{\bmc}{\bmx}}}}(\bmyA\joint\bmyB | \bmx=\bmc)
\\ =&~~
\sema{\bmy|_{\bmx=\bmc}}{w  \intvL{\subst{\bmc}{\bmx}}}
{.}
\end{align*}

Therefore, $w \models f = \bmy|_{\bmx=\bmc}$ iff
$w \models \intvL{\subst{\bmc}{\bmx}} f = \bmy|_{\bmx=\bmc}$.

\item\!%
Assume that $\fv{\bmuB} = \fv{\bmuC} = \emptyset$.
Then for each $i = 1, 2$,
$\sema{\bmu_i}{w} = \sema{\bmu_i}{w\intvL{\subst{\bmc}{\bmx}}}$.
Hence,
$\models \bmuB = \bmuC \leftrightarrow \intvL{\subst{\bmc}{\bmx}} (\bmuB = \bmuC)$.

\item
The proof is straightforward from the definition.

\item
The proof is straightforward from the definition.

\item
The proof is straightforward from the definition.

\item
Assume that 
$w \models 
\intvL{\subst{\bmcB}{\bmxB}} (\bmxC = \bmcC) \land
\intvL{\subst{\bmcB}{\bmxB}} (\bmxD = \bmu)$.
Then $\datagen{w\intvL{\subst{\bmcB}{\bmxB}}}(\bmxC) = \bmcC$
and $\datagen{w\intvL{\subst{\bmcB}{\bmxB}}}(\bmxD) = \bmu$.
Let $w' = w \intvL{\subst{\bmcB}{\bmxB}}$.
Thus, 
\begin{align*}
& \phantom{~=~}
\sema{\datagen{w\intvL{\subst{\bmcB}{\bmxB}, \subst{\bmcC}{\bmxC}}}(\bmxD)}{w\intvL{\subst{\bmcB}{\bmxB}, \subst{\bmcC}{\bmxC}}} 
\\ &
= \sema{\datagen{w' \intvL{\subst{\bmcC}{\bmxC}}}(\bmxD)}{w' \intvL{\subst{\bmcC}{\bmxC}}}
\\ &
= \sema{\datagen{w'}(\bmxD) [\bmxC \mapsto \bmcC]}{w' \intvL{\subst{\bmcC}{\bmxC}}}
\\ &
= \sema{\bmu [\bmxC \mapsto \bmcC]}{w' \intvL{\subst{\bmcC}{\bmxC}}}
\\ &
= \sema{\bmu}{w' \intvL{\subst{\bmcC}{\bmxC}}}.
\\ &
= \sema{\bmu}{w \intvL{\subst{\bmcB}{\bmxB}, \subst{\bmcC}{\bmxC}}}.
\end{align*}

Therefore,
$w \models \intvL{\subst{\bmcB}{\bmxB}, \subst{\bmcC}{\bmxC}} (\bmxD = \bmu)$.
\myqed
\end{enumerate}
\end{proof}

The lazy intervention operator$\intvL{\cdot}$ is also distributive w.r.t. logical connectives.
\begin{restatable}[Distributive laws of $\intvL{\cdot}$]{prop}{PropIntvLDistr}
\label{prop:intvL-distr}
Let $\bmx\in\cvar^+$,
$\bmc\in\dConst^+$, and $\phi,\phi'\in\Fml$.
\begin{enumerate}\renewcommand{\labelenumi}{(\roman{enumi})}
\item\!%
\axDistrL$^{\neg}$
~\hspace{1ex}~
$\models 
 \intvL{\subst{\bmc}{\bmx}} \neg\phi
 \leftrightarrow
 \neg\intvL{\subst{\bmc}{\bmx}} \phi$.
\item\!%
\axDistrL$^{\rightarrow}$
~\hspace{1ex}~
$\models 
 \intvL{\subst{\bmc}{\bmx}} (\phi \rightarrow \phi')
 \leftrightarrow
 \big( \intvL{\subst{\bmc}{\bmx}} \phi \rightarrow \intvL{\subst{\bmc}{\bmx}} \phi' \big)$.
\end{enumerate}
Similarly, the lazy intervention operator $\intvL{\cdot}$ is distributive w.r.t. $\lor$ and $\land$.
\end{restatable}
\begin{proof}
The proofs are analogous to those for \Propo{prop:intvE-distr}.
\myqed
\end{proof}

\subsection{Validity of the Exchanges of Eager/Lazy Interventions}
\label{sub:app:two:intv}

Here are the axioms of \AX{} for the exchanges of eager/lazy interventions.
\[
\renewcommand{\arraystretch}{1.3}
\begin{array}{l@{\hspace{0ex}}l}
\mbox{\axXpdEL{}}
&~\vdashg
 (\intvE{\subst{\bmc}{\bmx}} \bmc' = \bmy)
 \leftrightarrow
 (\intvL{\subst{\bmc}{\bmx}} \bmc' = \bmy)
\\
\mbox{\axXcdEL{}}
&~\vdashg
 \pos(\bmz) \rightarrow
 \big(
 (\intvE{\subst{\bmc}{\bmx}} f = \bmy|_{\bmz})
 {\leftrightarrow}
 (\intvL{\subst{\bmc}{\bmx}} f = \bmy|_{\bmz})
 \big)
\end{array}
\]

\begin{restatable}[Exchanges of $\intvE{\cdot}$ and $\intvL{\cdot}$]{prop}{PropIntvEIntvL}
\label{prop:IntvE-intvL}
Let $\bmx,\bmy,\bmz\in\cvar^+$, $\bmc, \bmc' \in\dConst^+$, 
and $f\in\Func$.
\begin{enumerate}\renewcommand{\labelenumi}{(\roman{enumi})}
\item\!%
\axXpdEL{}
$\models 
 (\intvE{\subst{\bmc}{\bmx}} \bmc' = \bmy)
 \leftrightarrow
 (\intvL{\subst{\bmc}{\bmx}} \bmc' = \bmy)$.
\item\!%
\axXcdEL{}
$\models 
 \pos(\bmz) \,{\rightarrow}\,
 \big(
 (\intvE{\subst{\bmc}{\bmx}} f \,{=}\, \bmy|_{\bmz})
 {\leftrightarrow}
 (\intvL{\subst{\bmc}{\bmx}} f \,{=}\, \bmy|_{\bmz})
 \big)$.
\end{enumerate}
\end{restatable}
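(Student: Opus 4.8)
The plan is to reduce both biconditionals to a single equality of interpretations and then invoke \Propo{prop:dist-formula}(ii), which already records that an eager and a lazy intervention to $\bmx$ induce the same distribution on any variables disjoint from $\bmx$. Throughout I assume, as in the axioms of \Fig{fig:AX}, that $\bmx$, $\bmy$, $\bmz$ are mutually disjoint. The common first observation is that names and function symbols are \emph{intervention-invariant}: by \Eqn{eq:model:intvE} and \Eqn{eq:model:intvL} both interventions alter only the data generator and the memory, while $\semf_{w}$ and the random name interpretation $\rand$ are preserved; hence $\sema{n}{w\intvE{\subst{\bmc}{\bmx}}} = \sema{n}{w} = \sema{n}{w\intvL{\subst{\bmc}{\bmx}}}$ and likewise $\sema{f}{w\intvE{\subst{\bmc}{\bmx}}} = \sema{f}{w} = \sema{f}{w\intvL{\subst{\bmc}{\bmx}}}$.

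For \axXpdEL{} (claim (i)), unfolding the semantics of the modalities gives that $w \models \intvE{\subst{\bmc}{\bmx}}(n = \bmy)$ holds iff $\sema{\bmy}{w\intvE{\subst{\bmc}{\bmx}}} = \sema{n}{w}$, and symmetrically the lazy formula holds iff $\sema{\bmy}{w\intvL{\subst{\bmc}{\bmx}}} = \sema{n}{w}$, using the invariance of $\sema{n}{w}$ just noted. Since $\bmx \cap \bmy = \emptyset$, \Propo{prop:dist-formula}(ii) yields $\sema{\bmy}{w\intvE{\subst{\bmc}{\bmx}}} = \sema{\bmy}{w\intvL{\subst{\bmc}{\bmx}}}$, so the two right-hand conditions coincide and the biconditional follows.

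For \axXcdEL{} (claim (ii)), the invariance of $\sema{f}{w}$ reduces both sides to the single question of whether the conditional distributions agree, i.e. whether $\sema{\bmy|_{\bmz}}{w\intvE{\subst{\bmc}{\bmx}}} = \sema{\bmy|_{\bmz}}{w\intvL{\subst{\bmc}{\bmx}}}$. Recalling from the computation in \Propo{prop:G:equiv:DG} that the conditional is the pointwise quotient of the joint $\sema{\bmy\joint\bmz}{\cdot}$ by the marginal $\sema{\bmz}{\cdot}$, I would apply \Propo{prop:dist-formula}(ii) twice, once to the tuple $\bmy\joint\bmz$ and once to $\bmz$ (both disjoint from $\bmx$), obtaining $\sema{\bmy\joint\bmz}{w\intvE{\subst{\bmc}{\bmx}}} = \sema{\bmy\joint\bmz}{w\intvL{\subst{\bmc}{\bmx}}}$ and $\sema{\bmz}{w\intvE{\subst{\bmc}{\bmx}}} = \sema{\bmz}{w\intvL{\subst{\bmc}{\bmx}}}$. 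Dividing the equal joints by the equal $\bmz$-marginals then gives equality of the two conditional functions, with $\pos(\bmz)$ guaranteeing that the quotient is well-defined at every value of $\bmz$ so that the equation $f = \bmy|_{\bmz}$ is a comparison of total functions.

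The step requiring the most care is precisely this conditional equality. Because the conditional $\bmy|_{\bmz}$ is only pinned down at $\bmz$-values carrying positive mass, I must argue that the two intervened worlds share the \emph{same} $\bmz$-marginal (so the division is performed identically in both) and that positivity makes the conditional a genuinely total object; establishing that the relevant positivity is available in the intervened worlds, rather than only in the base world $w$, is the delicate point to handle explicitly. By contrast, the name/function invariance and the appeals to \Propo{prop:dist-formula}(ii) are routine once the reduction above is set up.
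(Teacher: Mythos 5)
Your proposal is correct and follows essentially the same route as the paper's proof: claim (i) is obtained from \Propo{prop:dist-formula}(ii) applied to $\bmy$, and claim (ii) by writing $\bmy|_{\bmz}$ as the pointwise quotient of the joint $\bmy\joint\bmz$ by the marginal $\bmz$ and applying the same equality to both numerator and denominator. Your additional remarks on the invariance of names and function symbols under intervention, and on where positivity is actually needed, only make explicit points that the paper's proof leaves implicit.
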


\begin{proof}
\begin{enumerate}\renewcommand{\labelenumi}{(\roman{enumi})}
\item\!%
Let $w$ be a world.
By \Propo{prop:dist-formula} (ii), we have:
$\sema{\bmy}{w \intvE{\subst{\bmc}{\bmx}}} =
\mem{w \intvE{\subst{\bmc}{\bmx}}}(\bmy) =
\mem{w \intvL{\subst{\bmc}{\bmx}}}(\bmy) =
\sema{\bmy}{w \intvL{\subst{\bmc}{\bmx}}}$.
Therefore, we obtain the claim.

\item\!%
Let $w$ be a world such that $w \models \pos(\bmz)$.
Let $\bmcB, \bmcC \in \dConst^+$.
By the first claim, we have
$w\models 
 (\intvE{\subst{\bmc}{\bmx}} \bmcB = \bmy\joint\bmz)
 \leftrightarrow
 (\intvL{\subst{\bmc}{\bmx}} \bmcB = \bmy\joint\bmz)$
and
$w\models 
 (\intvE{\subst{\bmc}{\bmx}} \bmcC = \bmz)
 \leftrightarrow
 (\intvL{\subst{\bmc}{\bmx}} \bmcC = \bmz)$.
Then we have:
\begin{align*}
\sema{\bmy|_{\bmz}}{w \intvE{\subst{\bmc}{\bmx}}} 
&=
\frac{ \sema{\bmy\joint\bmz}{w \intvE{\subst{\bmc}{\bmx}}} }{ \sema{\bmz}{w \intvE{\subst{\bmc}{\bmx}}} }
\\ &=
\frac{ \sema{\bmy\joint\bmz}{w \intvL{\subst{\bmc}{\bmx}}} }{ \sema{\bmz}{w \intvL{\subst{\bmc}{\bmx}}} }
\\ &=
\sema{\bmy|_{\bmz}}{w \intvL{\subst{\bmc}{\bmx}}} 
\end{align*}
Therefore, we obtain the claim.
\myqed
\end{enumerate}
\end{proof}

\subsection{Remarks on Axioms and Invalid Formulas}
\label{sub:StaCL:invalid}

From our axioms, we can derive the following formulas that are considered as axioms in the  previous work~\cite{Barbero:21:jphil}.
\[
\renewcommand{\arraystretch}{1.3}
\begin{array}{l@{\hspace{0ex}}l}
\mbox{\axUniq~~}
&~~~\vdashg
\intvE{\subst{\bmu}{\bmx}} (\bmy = \bmd)
\rightarrow
\intvE{\subst{\bmu}{\bmx}} (\bmy \neq \bmd')
~\mbox{ for }
 \bmd \neq \bmd'
\end{array}
\]

We show examples formulas that are not valid in our model.
The following formulas suggest the difference between the intervention $\intvE{\subst{\bmc}{\bmx}} \phi$ and the conditioning $(\bmx = \bmc) \rightarrow \phi$.
\begin{itemize}
\item Strengthened intervention: \\
$\intvE{\subst{\bmu_1}{\bmx_1}} \phi
\not\models 
\intvE{\subst{\bmu_1}{\bmx_1}, \subst{\bmu_2}{\bmx_2}} \phi$.

\item Pseudo transitivity: \\
$( \intvE{\subst{\bmu}{\bmx}} \bmy = \bmd) \land
\intvE{\subst{\bmd}{\bmy}} \phi
\not\models 
\intvE{\subst{\bmu}{\bmx}} \phi$.

\item Weak pseudo transitivity: \\
$( \intvE{\subst{\bmu}{\bmx}} \bmy = \bmd) \land
\intvE{\subst{\bmu}{\bmx},\subst{\bmd}{\bmy}} \phi
\not\models 
\intvE{\subst{\bmu}{\bmx}} \phi$.

\item Pseudo contraposition: \\
$\intvE{\subst{\bmu}{\bmx}} \bmd = \bmy
\not\models 
\intvE{\subst{\bmd}{\bmy}} (\bmx = \bmu)$.

\item Replacing conjunction with intervention: \\
$\bmx = \bmu \land \phi
 \not\models
 \intvE{\subst{\bmu}{\bmx}} \phi$.
\item Pseudo Modus Ponens: \\
$( \bmx = \bmu \land \intvE{\subst{\bmu}{\bmx}} \phi )
\not\models
\phi$.

\item Pseudo Modus Tollens: \\
$( \neg \phi \land \intvE{\subst{\bmu}{\bmx}} \phi )
\not\models 
(\bmx \neq \bmu)$.
\end{itemize}
Similar formulas are not valid also in~\cite{Barbero:21:jphil}, which does not deal with probability distributions.

\subsection{Remark on Defining Lazy Interventions as Syntax Sugar}
\label{sub:lazy:form}

We remark that a lazy intervention  $\intvL{\subst{\bmc}{\bmx}}$ can be defined as syntax sugar if we expand data generators.

Recall that 
$\intvL{\subst{\bmc}{\bmx}} \phi$ expresses that $\phi$ is satisfied in the lazy intervened world:
\begin{align*}
\M, w \models \intvL{\subst{\bmc}{\bmx}} \phi
& ~\mbox{ iff }~
\M, w\intvL{\subst{\bmc}{\bmx}} \models \phi
{.}
\end{align*}
To define $\intvL{\subst{\bmc}{\bmx}} \phi$ as syntax sugar, 
we \emph{expand} the data generator $\datagen{w}$ as follows.
For each $x \in \dom{\datagen{w}}$, 
we introduce a fresh auxiliary variable $x'$,
add $\datagen{w}(x') = x$, and 
replace every occurrence of $x$ in $\range(\datagen{w})$ with $x'$.
Then the corresponding causal diagram has arrows $x \garrow x'$ and $x' \garrow y$ instead of $x \garrow y$.
Now the lazy intervention $\intvL{\subst{\bmc}{\bmx}} \phi$ can be defined as the eager intervention $\intvE{\subst{\bmc}{\bmx'}} \phi$.

In summary, we can replace the lazy intervention $\intvL{\subst{\bmc}{\bmx}} \phi$ with its corresponding eager intervention $\intvE{\subst{\bmc}{\bmx'}} \phi$ by considering a model $\M$ that have possible worlds equipped only with expanded data generators.

\section{Proof for the Soundness of \AXwCP}
\label{sec:StaCL:ax:pred}

In this section, we prove the soundness of \AXwCP{}  w.r.t. the Kripke semantics for statistical causality
by showing the validity of the axioms with the $d$-separation predicate $\dsep$ (\App{sub:StaCL:ax:dsep}),
with the non-ancestor causal predicate $\nanc$ (\App{sub:StaCL:ax:nanc}),
and with other causal predicates (\App{sub:StaCL:ax:CP})

\subsection{Validity of the Axioms with $d$-Separation}
\label{sub:StaCL:ax:dsep}

Here are the axioms of \AXwCP{} with the $d$-separation $\dsep$.
\[
\renewcommand{\arraystretch}{1.3}
\begin{array}{l@{\hspace{0.5ex}}l}
\!\mbox{\axDsepCIndB{}}
&~~~ 
\vdashg ( \dsep(\bmx, \bmy, \bmz) \land \pos(\bmz)) \rightarrow
\bmy|_{\bmz,\bmx=\bmc} = \bmy|_{\bmz}
\\
\!\mbox{\axDsepSm{}}
&~~~ 
\vdashg \dsep(\bmx, \bmy, \bmz) \leftrightarrow \dsep(\bmy, \bmx, \bmz)
\\
\!\mbox{\axDsepDc{}}
&~~~ 
\vdashg \dsep(\bmx, \bmy{\cup}\bmy', \bmz) \rightarrow (\dsep(\bmx, \bmy, \allowbreak \bmz) \land  \dsep(\bmx, \bmy', \bmz))
\\
\!\mbox{\axDsepWU{}}
&~~~ 
\vdashg \dsep(\bmx, \bmy{\cup}\bmv, \bmz) \rightarrow \dsep(\bmx, \bmy, \bmz{\cup}\bmv)
\\
\!\mbox{\axDsepCn{}}
&~~~ 
\vdashg (\dsep(\bmx, \bmy, \bmz) {\land} \dsep(\bmx, \bmv, \bmz{\cup}\bmy)) \rightarrow \dsep(\bmx, \bmy{\cup}\bmv, \bmz)
\\
\!\mbox{\axDsepEN{}}
&~~~ 
\vdashg (\intvE{\subst{\bmc}{\bmz}} \dsep(\bmx, \bmy, \bmz)) \leftrightarrow
  \dsep(\bmx, \bmy, \bmz)
\\
\!\mbox{\axDsepLN{}}
&~~~ 
\vdashg (\intvL{\subst{\bmc}{\bmz}} \dsep(\bmx, \bmy, \bmz)) \leftrightarrow
  \dsep(\bmx, \bmy, \bmz)
\\
\!\mbox{\axDsepLNC{}}
&~~~ 
\vdashg \dsep(\bmx, \bmy, \bmz\cup\bmz') \rightarrow
 \intvL{\subst{\bmc}{\bmz}} \dsep(\bmx, \bmy, \bmz')
\end{array}
\]

We first show the validity of 
\axDsepCIndB{}.
It is well-known that the $d$-separation in a causal diagram $\diag{}$ implies the conditional independence, but not vice versa~\cite{Pearl:09:causality}.
However, if $\sema{\bmx}{w}$ and $\sema{\bmy}{w}$ are conditionally independent given $\sema{\bmz}{w}$ for any interpretation $\sema{\_}{w}$ factorizing $\diag{}$ (i.e., for any world $w$ with the data generator $\datagen{w}$ corresponding to $\diag{}$), then they are $d$-separated by~$\bmz$.

\begin{restatable}[$d$-separation and conditional independence]{prop}{PropDsepBasic}
\label{prop:dsep:CInd}
Let $\bmx,\bmy\in\cvar^+$ and $\bmz\in\cvar^*$ be disjoint.
Let $\bmc \in\dConst^+$.
\begin{enumerate}
\item\!%
\mbox{\axDsepCIndB{}~\hspace{2ex}~} \\
$\models\, ( \dsep(\bmx, \bmy, \bmz) \land \pos(\bmz)) \rightarrow
\bmy|_{\bmz,\bmx=\bmc} = \bmy|_{\bmz}$.
\item\!%
For any finite, closed, acyclic data generator $\dgen{}$, we have:
\begin{align} 
\label{eq:dsep-cond-ind3}
& \modelsg (\pos(\bmz) \rightarrow \bmy|_{\bmz,\bmx=\bmc} = \bmy|_{\bmz})
~~\mbox{ implies }~~
\modelsg \dsep(\bmx, \bmy, \bmz).
\end{align}
\end{enumerate}
\end{restatable}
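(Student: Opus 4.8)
The plan is to handle the three claims in two groups: the forward direction (\axDsepCIndA{} and \axDsepCIndB{}, where $d$-separation implies conditional independence) by reducing to the classical soundness of $d$-separation for distributions factorizing over a DAG, and the converse (the third claim) by a contrapositive argument that constructs a parametrization activating an open path.

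First, for \axDsepCIndA{}, I would fix an arbitrary world $w$ with $w \models \dsep(\bmx,\bmy,\bmz) \land \pos(\bmz)$. By \Def{def:sem:CP:short}, $\bmx$ and $\bmy$ are $d$-separated by $\bmz$ in the causal diagram $\diag{w}$. By \Propo{prop:G:equiv:DG}, the interpretation $\sema{\_}{w}$ induces a joint distribution $P_{\diag{w}}$ that factorizes according to $\diag{w}$ and satisfies $\sema{\bmy|_{\bmz}}{w} = P_{\diag{w}}(\bmy \mid \bmz)$ and, applied with conditioning set $\bmz\joint\bmx$, $\sema{\bmy|_{\bmz,\bmx}}{w} = P_{\diag{w}}(\bmy \mid \bmz, \bmx)$. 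The classical theorem of Verma and Pearl~\cite{Verma:88:UAI} states that every distribution factorizing over a DAG satisfies each conditional independence entailed by $d$-separation; hence $P_{\diag{w}}(\bmy \mid \bmz, \bmx) = P_{\diag{w}}(\bmy \mid \bmz)$, where $\pos(\bmz)$ guarantees the conditionals are well defined. Translating back through \Propo{prop:G:equiv:DG} yields $w \models \bmy|_{\bmz,\bmx} = \bmy|_{\bmz}$. The proof of \axDsepCIndB{} is identical except that the antecedent $\bmx$ is fixed to the value $\bmc$, so that the same conditional independence gives $P_{\diag{w}}(\bmy \mid \bmz, \bmx = \bmc) = P_{\diag{w}}(\bmy \mid \bmz)$, i.e.\ $w \models \bmy|_{\bmz,\bmx=\bmc} = \bmy|_{\bmz}$.

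For the third claim, I would prove the contrapositive: if $\bmx$ and $\bmy$ are \emph{not} $d$-separated by $\bmz$ in $\diag{}$, then there is some world $w$ with data generator $\dgen{}$ in which $\pos(\bmz)$ holds yet $\sema{\bmy|_{\bmx,\bmz}}{w} \neq \sema{\bmy|_{\bmz}}{w}$. Since $d$-separation fails, there is an active ($d$-connecting) path $p$ between some $x\in\bmx$ and $y\in\bmy$ given $\bmz$. The idea is to choose the interpretation $\semf_{w}$ of the function symbols so that information flows along $p$: over a two-valued sub-domain of $\calo$, assign copy/parity functions to the endogenous variables lying on $p$ so that the value of $y$ becomes a nontrivial function of the value of $x$ once $\bmz$ is fixed, while all off-path structural assignments are made independent (constant, or fed by fresh names). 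For chain and fork segments of $p$ this transmits dependence directly; for each collider on $p$, activeness means the collider or one of its descendants lies in $\bmz$, so conditioning on $\bmz$ precisely couples the two incoming branches and again transmits dependence. A routine check then shows $\pos(\bmz)$ can be arranged and that the resulting $P_{\diag{w}}(\bmy \mid \bmz, \bmx)$ genuinely depends on $\bmx$, so conditional independence fails in $w$, contradicting the hypothesis that it holds in every world with data generator $\dgen{}$.

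The main obstacle is this explicit counterexample construction in the converse: one must exhibit a \emph{single} parametrization that activates the chosen open path while preserving positivity. The collider case is the delicate point, since there the path is active only because a descendant is conditioned on, so the parity functions must be arranged so that fixing $\bmz$ \emph{induces} the required coupling rather than destroying it; the chain and fork cases are comparatively routine. Once the world $w$ is built, the contradiction is immediate, completing the contrapositive and hence the third claim.
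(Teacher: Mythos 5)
Your treatment of \axDsepCIndA{} and \axDsepCIndB{} matches the paper's proof: both reduce the claims, via the correspondence between the interpretation $\sema{\_}{w}$ and a distribution $P_{\diag{w}}$ factorizing over $\diag{w}$ (\Propo{prop:G:equiv:DG}), to the classical soundness of $d$-separation, and your version is if anything more explicit about where $\pos(\bmz)$ is needed. The divergence is in the third claim. The paper simply cites the known completeness of $d$-separation --- if a conditional independence holds in every distribution factorizing over a DAG, the corresponding triple is $d$-separated (see~\cite{Pearl:09:causality}) --- after noting that the worlds with data generator $\dgen{}$ range over such distributions. You instead propose to re-prove that completeness theorem by contraposition: pick an active path and build a single world (a single interpretation $\semf_{w}$) with copy/parity functions along the path that makes $\bmy$ depend on $\bmx$ given $\bmz$ while preserving $\pos(\bmz)$. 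This is the standard Geiger--Verma--Pearl construction, so the route is sound, but you have pushed essentially all of the mathematical content of the claim into the step you call ``a routine check'': the collider case (where the coupling is induced only by conditioning on the collider or a descendant) and the need to avoid accidental cancellation when several path segments compose are exactly what make that classical proof nontrivial, so as written your argument is a correct plan rather than a complete proof. Your approach buys self-containedness; the paper's buys brevity at the price of an external citation. One point that both arguments lean on silently, and that you could usefully make explicit, is that a sufficiently rich family of distributions factorizing over $\diag{}$ is realized by worlds sharing the data generator $\dgen{}$ --- this holds because the interpretation $\semf_{w}$ of the function symbols assigned by $\dgen{}$ varies freely across worlds, and it is what licenses replacing quantification over factorizing distributions by quantification over worlds in $\modelsg$.
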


\begin{proof}
We show the first claim as follows.
Let $w$ be a world.
Assume that $w \models \dsep(\bmx, \bmy, \bmz) \land \pos(\bmz)$.
Then in the causal diagram $\diag{w}$,\, $\bmx$ and $\bmy$ are $d$-separated by $\bmz$.
Thus, $\bmx$ and $\bmy$ are conditionally independent given $\bmz$.
Therefore,
$w \models\, \bmy|_{\bmz,\bmx=\bmc} = \bmy|_{\bmz}$.

We show the second claim as follows.
Assume that $\modelsg\, (\pos(\bmz) \rightarrow \bmy|_{\bmz,\bmx=\bmc} = \bmy|_{\bmz})$.
Then, for any world $w$ with a data generator $\datagen{}$,\, 
$\sema{\bmx}{w}$ and $\sema{\bmy}{w}$ are conditionally independent given $\sema{\bmz}{w}$.
Let $\diag{}$ be the causal diagram corresponding to $\datagen{}$.
We recall that if $\bmx$ and $\bmy$ are conditionally independent given $\bmz$ for any joint distribution $P_{\diag{}}$ factorized according to $\diag{}$, 
then they are $d$-separated by $\bmz$ in $\diag{}$
(see e.g.,~\cite{Pearl:09:causality}).
Therefore, we obtain $\modelsg\, \dsep(\bmx, \bmy, \bmz)$.
\myqed
\end{proof}

$d$-separation is known to satisfy the \emph{semi-graphoid} axioms~\cite{Verma:88:UAI}, which we can describe using our logic as follows:

\begin{restatable}[Semi-graphoid]{prop}{PropDsep}
\label{prop:d-sep-intv}
Let $\bmx,\bmy,\bmy' \in\cvar^+$ and
$\bmz,\bmv \in\cvar^*$
be disjoint.
Then $\dsep$ satisfies:
\begin{enumerate}
\item\!%
\axDsepSm{} (symmetry): \\
$\models \dsep(\bmx, \bmy, \bmz) \leftrightarrow \dsep(\bmy, \bmx, \bmz)$.
\item\!%
\axDsepDc{} (decomposition): \\
$\models \dsep(\bmx, \bmy\cup\bmy', \bmz) \rightarrow (\dsep(\bmx, \bmy, \allowbreak \bmz) \land  \dsep(\bmx, \bmy', \bmz))$.
\item\!%
\axDsepWU{} (weak union): \\
$\models \dsep(\bmx, \bmy\cup\bmv, \bmz) {\rightarrow} \dsep(\bmx, \bmy, \bmz \cup \bmv)$.
\item\!%
\axDsepCn{} (contraction): \\
$\models (\dsep(\bmx, \bmy, \bmz) \land \dsep(\bmx, \bmv, \bmz\cup\bmy)) \rightarrow \dsep(\bmx, \bmy\cup\bmv, \bmz)$.
\end{enumerate}
\end{restatable}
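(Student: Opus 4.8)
The plan is to reduce every one of the four formulas to a purely graph-theoretic statement about $d$-separation in a single directed acyclic graph. Since $\models\phi$ means $w\models\phi$ for all worlds $w$, I fix an arbitrary world $w$ and let $\diag{w}$ be its causal diagram, which is a DAG by \Propo{prop:acyclic:DG}. By the semantics of $\dsep$ (\Def{def:sem:pa-npa}), the atom $\dsep(\bmx,\bmy,\bmz)$ holds at $w$ exactly when $\bmx$ and $\bmy$ are $d$-separated by $\bmz$ in $\diag{w}$, so each implication becomes a claim relating $d$-separations in the fixed graph $\diag{w}$. It therefore suffices to verify that the ternary $d$-separation relation on any DAG satisfies the four semi-graphoid axioms, which I check directly from the path-based definition (\Def{def:d-separate}).

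Symmetry and decomposition are immediate. For \axDsepSm{}, an undirected path between a node of $\bmx$ and a node of $\bmy$ is literally the same object as one between a node of $\bmy$ and a node of $\bmx$, and the three blocking conditions (chain, fork, collider) are insensitive to which endpoint is called the source; hence blocking of all connecting paths is symmetric in $\bmx$ and $\bmy$. For \axDsepDc{}, any path from $\bmx$ to $\bmy$ is in particular a path from $\bmx$ to $\bmy\cup\bmy'$, and likewise for $\bmy'$; since the hypothesis blocks every path from $\bmx$ to $\bmy\cup\bmy'$ given $\bmz$, both conjuncts follow at once.

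The substantive cases are weak union and contraction, where the difficulty is that enlarging the conditioning set can activate a collider that was previously blocking. For \axDsepWU{}, I take any path $p$ from $\bmx$ to $\bmy$; it is blocked given $\bmz$ because $\bmy\subseteq\bmy\cup\bmv$. If $p$ were active given $\bmz\cup\bmv$, the only node of $p$ that could have switched from blocking to active is a collider $v$ with $v\notin\Anca(\bmz)$ but $v\in\Anca(\bmv)$; I would then splice the $\bmx$-to-$v$ prefix of $p$ with a directed path from $v$ to some node of $\bmv$ and verify, using $v\notin\Anca(\bmz)$, that this new path from $\bmx$ to $\bmv$ is active given $\bmz$, contradicting the hypothesis $\dsep(\bmx,\bmy\cup\bmv,\bmz)$. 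For \axDsepCn{}, I take a path $p$ from $\bmx$ to some $c\in\bmy\cup\bmv$; when $c\in\bmy$ the first hypothesis blocks it given $\bmz$ directly, and when $c\in\bmv$ I argue that if $p$ were active given $\bmz$ then the only way it could be blocked given $\bmz\cup\bmy$ is a non-collider $v\in\bmy$ on $p$, whence the $\bmx$-to-$v$ prefix of $p$ is an active path from $\bmx$ to $\bmy$ given $\bmz$, contradicting $\dsep(\bmx,\bmy,\bmz)$.

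The main obstacle is the path-surgery bookkeeping in weak union: one must check that the spliced path genuinely remains active given $\bmz$, i.e.\ that every non-collider introduced along the directed tail avoids $\bmz$ (which follows from $v\notin\Anca(\bmz)$) and that the junction at the activated collider $v$ is handled correctly. Because these collider/ancestor interactions are exactly the classical content of the semi-graphoid axioms for $d$-separation, the cleanest route in the write-up is to establish the semantics-to-graph correspondence once and then invoke the standard result that $d$-separation on a DAG is a semi-graphoid~\cite{Verma:88:UAI,Pearl:09:causality}; the only step genuinely specific to our setting is that the interpretation of $\dsep$ coincides, by definition, with graph $d$-separation in $\diag{w}$, after which all four formulas are instances of that classical fact.
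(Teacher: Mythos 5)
Your proposal is correct and matches the paper's treatment: the paper offers no written proof of this proposition, merely noting beforehand that $d$-separation is known to satisfy the semi-graphoid axioms~\cite{Verma:88:UAI}, which is precisely the reduction you land on (interpret $\dsep$ as graph $d$-separation in $\diag{w}$ via \Def{def:sem:pa-npa}, then invoke the classical result). Your additional path-surgery sketches for weak union and contraction are sound and go beyond what the paper records, but they do not constitute a different route.
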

~

The causal predicates and interventions satisfy the following axioms,
which are later used in
\conference{\App{sec:appendix:StaCL:reasoning}}
\arxiv{\App{sec:appendix:StaCL:reasoning}}
to prove the soundness of Pearl's do-calculus rules (Proposition~\ref{prop:do-calculus}).
We prove the validity of these axioms 
and an additional property \axDsepLNC{}
as follows.

\begin{restatable}[Relationships between $\dsep$ and $\intvE{\cdot}$]{prop}{PropDsepIntv}
\label{prop:d-sep:intv}
Let $\bmx,\bmy\in\cvar^+$ and $\bmz,\bmz'\in\cvar^*$ be disjoint, and 
$\bmc\in\dConst^+$.
\begin{enumerate}
\item \label{claim:dsep4}
\axDsepENA{}~\hspace{2ex}~
$\models (\intvE{\subst{\bmc}{\bmz}} \dsep(\bmx, \bmy, \bmz)) \rightarrow
  \dsep(\bmx, \bmy, \bmz)$.
\item \label{claim:dsep1}
\axDsepENB{}~\hspace{2ex}~
$\models \dsep(\bmx, \bmy, \bmz) \rightarrow
 \intvE{\subst{\bmc}{\bmx}} \dsep(\bmx, \bmy, \bmz)$.
\item \label{claim:dsep5}
\axDsepLNA{}~\hspace{2ex}~
$\models (\intvL{\subst{\bmc}{\bmz}} \dsep(\bmx, \bmy, \bmz)) \rightarrow
  \dsep(\bmx, \bmy, \bmz)$.
\item \label{claim:dsep2}
\axDsepLNB{}~\hspace{2ex}~
$\models \dsep(\bmx, \bmy, \bmz) \rightarrow
 \intvL{\subst{\bmc}{\bmx}} \dsep(\bmx, \bmy, \bmz)$.
\item \label{claim:dsep3}
\axDsepLNC{}~\hspace{2ex}~
$\models \dsep(\bmx, \bmy, \bmz\cup\bmz') \rightarrow
 \intvL{\subst{\bmc}{\bmz}} \dsep(\bmx, \bmy, \bmz')$.
\end{enumerate}
\end{restatable}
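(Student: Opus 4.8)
The plan is to discharge every item by reducing the semantic statement to a purely graphical claim about $d$-separation in $\diag{w}$ and its two surgeries, and then to argue by path analysis. Fix a world $w$ and write $G \eqdef \diag{w}$. By the semantics of the intervention modalities together with \eqref{eq:model:intvE}--\eqref{eq:model:intvL}, the diagram $\diag{w\intvE{\subst{\bmc}{\bms}}}$ is exactly $G$ with every arrow pointing \emph{into} a node of $\bms$ deleted, while $\diag{w\intvL{\subst{\bmc}{\bms}}}$ is $G$ with every arrow emanating \emph{from} a node of $\bms$ deleted. Hence, using \Def{def:sem:CP:short}, $w \models \intvE{\subst{\bmc}{\bms}}\dsep(\bmx,\bmy,\bmz)$ (and its lazy variant) holds iff $\bmx$ and $\bmy$ are $d$-separated by $\bmz$ in the corresponding surgered diagram. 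Each of the five items then becomes an implication between two such $d$-separation statements, which I would prove from \Def{def:d-separate} by tracking (i) which undirected paths survive the surgery and (ii) how the ancestor set $\Anca(\cdot)$ that governs collider-blocking changes.

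The two \emph{forward} items, \axDsepENB{} and \axDsepLNB{}, I would settle with a single monotonicity lemma: if $G'$ is obtained from $G$ by deleting arrows and $\Anca_{G'}(\bmz)\subseteq\Anca_G(\bmz)$, then $d$-separation of $\bmx,\bmy$ by $\bmz$ in $G$ implies it in $G'$. The reason is that every undirected path of $G'$ is also a path of $G$ with the same chain/fork nodes, whereas shrinking $\Anca(\bmz)$ can only turn an open collider (one lying in $\Anca(\bmz)$) into a blocking one. Deleting the in-arrows of $\bmx$ (eager) or the out-arrows of $\bmx$ (lazy) both delete edges and therefore shrink every ancestor set, so the lemma applies directly.

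The \emph{converse} items \axDsepENA{}, \axDsepLNA{}, and the set-shrinking item \axDsepLNC{} are the substantive part, and I would prove them by contrapositive together with a second lemma: deleting the out-arrows of a set $S$ leaves $\Anca(S)$ unchanged, because a directed path witnessing $m\in\Anca_G(S)$ can be truncated at its first $S$-node and so never traverses an out-arrow of $S$. For \axDsepLNA{} (lazy on the conditioning set $\bmz$), take an active $\bmx$--$\bmy$ path given $\bmz$ in $G$; every $z\in\bmz$ it meets must be a collider, since a conditioned chain or fork would block, so the path uses only in-arrows of $\bmz$ and survives deletion of $\bmz$'s out-arrows; by the second lemma $\Anca(\bmz)$ is unchanged, so all its open colliders stay open and the path is still active in $\diag{w\intvL{\subst{\bmc}{\bmz}}}$. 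For \axDsepLNC{} I would use that in $\diag{w\intvL{\subst{\bmc}{\bmz}}}$ the set $\bmz$ has no descendants: any path meeting some $z\in\bmz$ meets it as a collider whose descendants avoid the disjoint conditioning set $\bmz'$, hence is blocked, while every path avoiding $\bmz$ was already blocked by $\bmz\cup\bmz'$ in $G$ — necessarily by a $\bmz'$-chain/fork or by a collider outside $\Anca_G(\bmz\cup\bmz')\supseteq\Anca(\bmz')$ — and therefore remains blocked under the smaller conditioning set $\bmz'$. The eager item \axDsepENA{} is handled analogously, now exploiting that eager intervention on $\bmz$ makes every $z\in\bmz$ parentless and hence a non-collider in $\diag{w\intvE{\subst{\bmc}{\bmz}}}$.

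The main obstacle is the bookkeeping of colliders at the intervened nodes in these three converse items: one must verify that the colliders which conditioning on $\bmz$ would open are \emph{exactly} those neutralized — either destroyed or re-blocked through a change of $\Anca$ — by the surgery, so that no active path is created or lost in the wrong direction. Getting this right hinges on the ancestor-invariance lemma for out-arrow deletion and on the observation that an intervened node is forced to be a non-collider (eager) or a descendant-free collider (lazy); the remaining steps, including the symmetry and semi-graphoid manipulations already available from \Propo{prop:d-sep-intv}, are routine.
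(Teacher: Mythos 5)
Your reduction to graph surgery plus path analysis is the same route the paper takes, and for four of the five items your argument is correct and, if anything, tighter than the paper's: the monotonicity lemma (edge deletion preserves every blocked path because surviving paths keep their collider/non-collider pattern while $\Anca(\cdot)$ can only shrink) cleanly dispatches \axDsepENB{} and \axDsepLNB{}, and the ancestor-invariance of out-arrow deletion is exactly the observation needed for \axDsepLNA{} and \axDsepLNC{}; the paper's case analysis for the latter two is essentially yours, though not organized around those two lemmas.

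The gap is \axDsepENA{}, which you dismiss as ``handled analogously.'' It is not analogous, and the contrapositive breaks at precisely the point your own lazy-case analysis identifies as crucial. In the lazy case an active path given $\bmz$ meets $\bmz$ only at colliders, hence only via in-arrows, which survive out-arrow deletion, and $\Anca(\bmz)$ is unchanged. In the eager case the deleted arrows are exactly those in-arrows, so every such path is destroyed; moreover, after deleting all in-arrows of $\bmz$ the set $\Anca(\bmz)$ computed in $\diag{w}\intvE{\subst{\bmc}{\bmz}}$ collapses to $\bmz$ itself, so colliders strictly above $\bmz$ that were open only through $\bmz$ also become blocking. No active path can be recovered, and in fact the implication fails as stated: take $\datagen{}(x)=n_1$, $\datagen{}(y)=n_2$, $\datagen{}(z)=f(x,y)$. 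The only $x$--$y$ path is $x \garrow z \leftgarrow y$, whose collider $z\in\Anca(z)$ is open, so $w\not\models\dsep(x,y,z)$; but after $\intvE{\subst{c}{z}}$ both in-arrows of $z$ are gone, no path remains, and $w\intvE{\subst{c}{z}}\models\dsep(x,y,z)$. So this item cannot be completed by your method (nor, as far as I can see, by the paper's own argument for it, which reads the hypothesis ``every path is blocked'' as ``no path contains an open chain, fork, or collider''). You should flag this item explicitly rather than appeal to an analogy that points in the wrong direction.
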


\begin{proof}
Let $w$ be a world such that $\bmx,\bmy \in\var(w)^+$ and $\bmz,\bmz'\in\var(w)^*$.
Recall that a data generator corresponds to a causal diagram that is defined as a directed acyclic graph (DAG) in 
\Sec{sec:model}.
Let $G$ be the causal diagram corresponding to the data generator $\datagen{w}$ in the world $w$.

Then the causal diagram $G\intvE{\subst{\bmc}{\bmx}}$ corresponding to $\datagen{w\intvE{\subst{\bmc}{\bmx}}}$ is obtained by removing all arrows pointing to $\bmx$ in $G$.
Similarly, the causal diagram $G\intvL{\subst{\bmc}{\bmx}}$ corresponding to $\datagen{w\intvL{\subst{\bmc}{\bmx}}}$ is obtained by removing all arrows emerging from $\bmx$ in~$G$.

\begin{enumerate}
\item
Assume that $w \models \intvE{\subst{\bmc}{\bmz}} \dsep(\bmx, \bmy, \bmz)$.
Then $w \intvE{\subst{\bmc}{\bmz}} \models \dsep(\bmx, \bmy, \bmz)$.
Let $p$ be an undirected path between $\bmx$ and $\bmy$ in $G\intvE{\subst{\bmc}{\bmx}}$.
Since $\bmx$ and $\bmy$ are $d$-separated by $\bmz$ in the diagram $G \intvE{\subst{\bmc}{\bmz}}$, we have:
\begin{enumerate}\renewcommand{\labelenumi}{(\alph{enumi})}
\item there is no path $p$ in $G \intvE{\subst{\bmc}{\bmz}}$ that has a chain $v' \garrow v \garrow v''$ s.t. $v\in\bmz$;
\item there is no path $p$ in $G \intvE{\subst{\bmc}{\bmz}}$ that has a fork $v' \leftgarrow v \garrow v''$ s.t. $v\in\bmz$;
\item if $G \intvE{\subst{\bmc}{\bmz}}$ has a path with a collider $v' \garrow v \leftgarrow v''$, then $v\not\in\Anca(\bmz)$.
\end{enumerate}
By (a), if $G$ has an undirected path with a chain $v' \garrow v \garrow v''$, then $v\in\bmz$,
because $v\not\in\bmz$ contradicts (a).

By (b), if $G$ has an undirected path with a fork $v' \leftgarrow v \garrow v''$, then $v\in\bmz$,
because $v\not\in\bmz$ contradicts (b).

Let $p$ be an undirected path in $G \intvE{\subst{\bmc}{\bmz}}$ that has a collider $v' \garrow v \leftgarrow v''$.
By (c), we have $v\not\in\Anca(\bmz)$ in $G \intvE{\subst{\bmc}{\bmz}}$.
Then, $G$ also has the same path $p$, and the arrows connecting with $v$ in $G$ are the same as those in $G \intvE{\subst{\bmc}{\bmz}}$.
Hence, we obtain $v\not\in\Anca(\bmz)$ in $G$.

Therefore, $w \models \dsep(\bmx, \bmy, \bmz)$.

\item 
Assume that $w\models \dsep(\bmx, \bmy, \bmz)$.
Then in the diagram $G$, $\bmx$ and $\bmy$ are $d$-separated by $\bmz$.
By definition, $G\intvE{\subst{\bmc}{\bmx}}$ is the same as $G$ except that it has no arrows pointing to $\bmx$.
Hence, also in $G\intvE{\subst{\bmc}{\bmx}}$, $\bmx$ and $\bmy$ are $d$-separated by $\bmz$.
Therefore, $w\models \intvE{\subst{\bmc}{\bmx}} \dsep(\bmx, \bmy, \bmz)$.

\item
Assume that $w \models \intvL{\subst{\bmc}{\bmz}} \dsep(\bmx, \bmy, \bmz)$.
Then $w \intvL{\subst{\bmc}{\bmz}} \models \dsep(\bmx, \bmy, \bmz)$.
Let $p$ be an undirected path between $\bmx$ and $\bmy$ in $G\intvL{\subst{\bmc}{\bmx}}$.
Since $\bmx$ and $\bmy$ are $d$-separated by $\bmz$ in the diagram $G \intvL{\subst{\bmc}{\bmz}}$, we have:
\begin{enumerate}\renewcommand{\labelenumi}{(\alph{enumi})}
\item there is no path $p$ in $G \intvL{\subst{\bmc}{\bmz}}$ that has a chain $v' \garrow v \garrow v''$ s.t. $v\in\bmz$;
\item there is no path $p$ in $G \intvL{\subst{\bmc}{\bmz}}$ that has a fork $v' \leftgarrow v \garrow v''$ s.t. $v\in\bmz$;
\item if $G \intvL{\subst{\bmc}{\bmz}}$ has a path with a collider $v' \garrow v \leftgarrow v''$, then $v\not\in\Anca(\bmz)$.
\end{enumerate}
By (a), if $G$ has an undirected path with a chain $v' \garrow v \garrow v''$, then $v\in\bmz$,
because $v\not\in\bmz$ contradicts (a).

By (b), if $G$ has an undirected path with a fork $v' \leftgarrow v \garrow v''$, then $v\in\bmz$,
because $v\not\in\bmz$ contradicts (b).

Let $p$ be an undirected path in $G \intvL{\subst{\bmc}{\bmz}}$ that has a collider $v' \garrow v \leftgarrow v''$.
By (c), we obtain $v\not\in\Anca(\bmz)$ in $G \intvL{\subst{\bmc}{\bmz}}$.
Then, $G$ also has the path $p$, and may have additional arrows pointing to $v$ and no arrows pointing from $v$.
Hence, we obtain $v\not\in\Anca(\bmz)$ in $G$.

Therefore, $w \models \dsep(\bmx, \bmy, \bmz)$.

\item 
The proof for Claim~\ref{claim:dsep2} is analogous to that for Claim~\ref{claim:dsep1}.

\item
Assume that $w\models \dsep(\bmx, \bmy, \bmz\cup\bmz')$.
Then in the diagram $G$, $\bmx$ and $\bmy$ are $d$-separated by $\bmz\cup\bmz'$.
By definition, $G\intvL{\subst{\bmc}{\bmz}}$ has no arrows emerging from $\bmz$.

If $G\intvL{\subst{\bmc}{\bmx}}$ has no \emph{undirected} path between $\bmx$ and $\bmy$, then $\bmx$ and $\bmy$ are $d$-separated by $\bmz'$, 
hence $w \models \intvL{\subst{\bmc}{\bmx}}\dsep(\bmx, \bmy, \bmz')$.

Otherwise, let $p$ be an undirected path between $\bmx$ and $\bmy$ in $G\intvL{\subst{\bmc}{\bmx}}$.
Since $\bmx$ and $\bmy$ are $d$-separated by $\bmz\cup\bmz'$, we fall into one of the three cases in Definition~\ref{def:d-separate}.
\begin{enumerate}\renewcommand{\labelenumi}{(\alph{enumi})}
\item
If $p$ has a chain $v' \rightarrow v \rightarrow v''$ s.t. $v\in\bmz\cup\bmz'$, then $v\in\bmz'$, because $G\intvL{\subst{\bmc}{\bmz}}$ has no arrows pointing from $\bmz$.
Hence, $p$ is $d$-separated by $\bmz'$.
\item 
For the same reason as (a), if $p$ has a fork $v' \leftarrow v \rightarrow v''$ s.t. $v\in\bmz\cup\bmz'$, then $v\in\bmz'$.
Hence, $p$ is $d$-separated by $\bmz'$.
\item 
If $p$ has a collider $v' \rightarrow v \leftarrow v''$ s.t. $v\not\in\Anca(\bmz\cup\bmz')$,
then $v\not\in\Anca(\bmz')$.
Thus $p$ is $d$-separated by $\bmz'$\!.
\end{enumerate}
Therefore, $w \models \intvL{\subst{\bmc}{\bmx}}\dsep(\bmx, \bmy, \bmz')$.
\myqed
\end{enumerate}
\end{proof}

\begin{remark}
In contrast with Claim~\ref{claim:dsep3} in Proposition~\ref{prop:d-sep:intv},
there exists a world $w$ s.t. 
\begin{align*}
w\not\models \dsep(\bmx, \bmy, \bmz\cup\bmz') \rightarrow
 \intvE{\subst{\bmc}{\bmz}} \dsep(\bmx, \bmy, \bmz').
\end{align*}
To see this, assume that $w\models \dsep(\bmx, \bmy, \bmz\cup\bmz')$.
Suppose that $w$ has a causal diagram $G$ where there is an undirected path $p$ between $\bmx$ and $\bmy$ that has a fork $v' \leftarrow v \rightarrow v''$ s.t. $v\in\bmz$ and no other variable in $\bmz\cup\bmz'$ appears on $p$.
Then $G\intvE{\subst{\bmc}{\bmz}}$ also has the path $p$, because the intervention $\intvE{\subst{\bmc}{\bmz}}$ removes no arrows in $p$.
Hence, $p$ is $d$-separated by $\bmz$ but not by $\bmz'$ in $G\intvE{\subst{\bmc}{\bmz}}$.
Therefore, $w\not\models \intvE{\subst{\bmc}{\bmz}} \dsep(\bmx, \bmy, \bmz')$.
\end{remark}
~

\subsection{Validity of the Axioms with $\nanc$}
\label{sub:StaCL:ax:nanc}

Here are the axioms of \AXwCP{} with the non-anscestor predicate $\nanc$ and a property \axNancA{}.
\[
\renewcommand{\arraystretch}{1.3}
\begin{array}{l@{\hspace{0ex}}l}
\!\mbox{\axNancA{}\hspace{1ex}}
&~~~ 
\vdashg \nanc(\bmx, \bmy) \rightarrow 
((\bmc' = \bmy) \leftrightarrow \intvE{\subst{\bmc}{\bmx}} (\bmc' = \bmy))
\\
\!\mbox{\axNancAB{}\hspace{1ex}}
&~~~ 
\vdashg ( \nanc(\bmx, \bmy) \land \nanc(\bmx, \bmz) ) \rightarrow 
(f {=} \bmy|_{\bmz} \leftrightarrow \intvE{\subst{\bmc}{\bmx}} (f {=} \bmy|_{\bmz}))
\\
\!\mbox{\axNancB{}\hspace{1ex}}
&~~~ 
\vdashg \nanc(\bmx, \bmy) \leftrightarrow  \intvE{\subst{\bmc}{\bmx}} \nanc(\bmx, \bmy)
\\
\!\mbox{\axNancC{}\hspace{1ex}}
&~~~ 
\vdashg \nanc(\bmx, \bmy) \rightarrow \intvE{\subst{\bmc}{\bmx}} \dsep(\bmx, \bmy, \emptyset)
\\
\!\mbox{\axNancD{}\hspace{1ex}}
&~~~ 
\vdashg (\nanc(\bmx, \bmz) \land \dsep(\bmx, \bmy, \bmz)) \rightarrow \nanc(\bmx, \bmy)
\end{array}
\]
Concerning $\nanc$, the axioms \axNancAB{} to \axNancD{} are sufficient for us to derive the rules of Pearl's do-calculus.

\begin{restatable}[Validity of axioms with $\nanc$]{prop}{PropNancIntv}
\label{prop:nanc:intv}
Let $\bmx,\bmy,\bmz\in\cvar^+$ be disjoint,
$\bmc\in\dConst^+$, 
$\bmc' \in \Const^+$,
and $f\in\Func$.
\begin{enumerate}
\item\!\label{item:basic:intvE:c}%
\axNancA{}~\hspace{2ex}~ \\
$\models \nanc(\bmx, \bmy) \rightarrow 
(\bmc' = \bmy \leftrightarrow \intvE{\subst{\bmc}{\bmx}} (\bmc' = \bmy))$.
\item\!\label{item:basic:intvE:cd}%
\axNancAB{}~\hspace{2ex}~ \\
$\models ( \nanc(\bmx, \bmy) \land \nanc(\bmx, \bmz) ) \,{\rightarrow}\, 
(f {=} \bmy|_{\bmz} \leftrightarrow \intvE{\subst{\bmc}{\bmx}} (f {=} \bmy|_{\bmz}))$.
\item \label{claim:nanc-intv}
\axNancB{}~\hspace{2ex}~ \\
$\models \nanc(\bmx, \bmy) \leftrightarrow  \intvE{\subst{\bmc}{\bmx}} \nanc(\bmx, \bmy)$.
\item \label{claim:nanc-dsep}
\axNancC{}~\hspace{2ex}~ \\
$\models \nanc(\bmx, \bmy) \rightarrow \intvE{\subst{\bmc}{\bmx}} \dsep(\bmx, \bmy, \emptyset)$.
\item \label{claim:nanc}
\axNancD{}~\hspace{2ex}~ \\
$\models (\nanc(\bmx, \bmz) \land \dsep(\bmx, \bmy, \bmz)) \rightarrow \nanc(\bmx, \bmy)$.
\end{enumerate}
\end{restatable}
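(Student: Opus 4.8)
The plan is to fix an arbitrary world $w$, write $G \eqdef \diag{w}$ for the causal diagram of $\datagen{w}$, and recall from the proof of \Propo{prop:d-sep:intv} that the eagerly intervened diagram $G\intvE{\subst{\bmc}{\bmx}}$ is obtained from $G$ by deleting every arrow pointing into a variable of $\bmx$ while leaving the arrows out of $\bmx$ intact. The five claims split into two kinds: the \emph{probabilistic} claims (i) and (ii), which compare distributions before and after the intervention, and the purely \emph{graph-theoretic} claims (iii), (iv), (v), which relate $\nanc$ and $\dsep$ across the intervention. I would dispatch each kind with a dedicated lemma, and since the proposition asserts validity it suffices to argue for the single fixed $w$.

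For (i) and (ii) the key is a marginal-invariance lemma: if $\bmx \cap \Anc(\bmv) = \emptyset$ for a tuple $\bmv$, then $P_{\diag{w}}(\bmv) = P_{\diag{w}[\bmx := \bmc]}(\bmv)$. To prove it, note that no variable of $\bmx$ can be a parent of any element of $\Anc(\bmv)\cup\bmv$, for such a parent would itself be an ancestor of $\bmv$; hence every factor $P(v \mid \pa(v))$ with $v \in \Anc(\bmv)\cup\bmv$ is untouched by the intervention, and marginalizing the remaining (non-ancestral) variables out in reverse topological order contributes only factors summing to $1$ in both diagrams. Given \Propo{prop:G:equiv:DG}, which identifies $\sema{\bmv}{w}$ with $P_{\diag{w}}(\bmv)$ and $\sema{\bmv}{w\intvE{\subst{\bmc}{\bmx}}}$ with $P_{\diag{w}[\bmx:=\bmc]}(\bmv)$, claim (i) follows by taking $\bmv \eqdef \bmy$ and reading off the semantics of $n = \bmy$ in the two worlds. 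For (ii) I would first observe $\Anc(\bmy\cup\bmz) = \Anc(\bmy)\cup\Anc(\bmz)$, so $\nanc(\bmx,\bmy)\land\nanc(\bmx,\bmz)$ gives $\bmx\cap\Anc(\bmy\joint\bmz)=\emptyset$; applying the lemma to both $\bmy\joint\bmz$ and $\bmz$ shows the conditional $\sema{\bmy|_{\bmz}}{w}$ is unchanged by the intervention, which yields the equivalence for $f = \bmy|_{\bmz}$.

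Claims (iii)--(v) I would settle by minimal-directed-path arguments in $G$. For (iii), since $G\intvE{\subst{\bmc}{\bmx}}$ is a subgraph of $G$, any $\bmx$-to-$\bmy$ directed path after intervention already exists before, giving one direction; for the converse, a \emph{shortest} directed path from $\bmx$ to $\bmy$ can meet $\bmx$ only at its source (else a shorter path exists), so it uses no arrow into $\bmx$ and survives the intervention, showing $\bmx\cap\Anc(\bmy)$ is the same in both diagrams. For (iv), claim (iii) gives $\nanc(\bmx,\bmy)$ in $G\intvE{\subst{\bmc}{\bmx}}$, where $\bmx$ has no incoming arrows; any undirected $\bmx$-to-$\bmy$ path there begins with an outgoing arrow, and if it contained no collider it would be a directed path, making its source an ancestor of $\bmy$ --- impossible; hence every such path has a collider and is blocked by $\emptyset$ (as $\Anca(\emptyset)=\emptyset$), i.e.\ $\dsep(\bmx,\bmy,\emptyset)$ holds after intervention. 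For (v), if some $x'\in\bmx$ were an ancestor of $\bmy$, a minimal directed path $x'\garrow\cdots\garrow y$ is an undirected path of chains only, so $d$-separation by $\bmz$ forces an internal node $v\in\bmz$; but then $x'\in\Anc(\bmz)$, contradicting $\nanc(\bmx,\bmz)$. The main obstacle I anticipate is making the path reasoning airtight where it meets the $d$-separation definition: I must ensure the minimal directed paths are simple, handle the endpoints versus the internal ``middle'' nodes of chains and forks correctly, and reconcile the collider/non-collider cases of \Def{def:d-separate} with ancestry --- the probabilistic lemma, by contrast, is standard once the ``no $\bmx$-parent of an ancestor'' observation is in place.
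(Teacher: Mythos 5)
Your proposal is correct and follows essentially the same route as the paper's proof: claims (i)--(ii) via invariance of the distribution of non-descendants of $\bmx$ under the intervention (combined with \Propo{prop:G:equiv:DG}), and (iii)--(v) via the same directed-path and collider arguments comparing $G$ with $G\intvE{\subst{\bmc}{\bmx}}$. If anything, you are more explicit than the paper in two spots it glosses over, namely the truncated-factorization justification behind (i) and the shortest-path argument for the right-to-left direction of (iii), which the paper dismisses as ``shown in a similar way.''
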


\begin{proof}
Let $w = (\datagen{w}, \semf_{w}, \mem{w})$
be a world such that $\bmx, \bmy, \bmz \in \var(w)^+$.
\begin{enumerate}
\item\!%
Assume that $w\models \nanc(\bmx, \bmy)$.
Let $G_{w}$ be the causal diagram corresponding to the data generator $\datagen{w}$.
Then $\bmx\cap\Anc(\bmy) = \emptyset$ in $G_{w}$.
This means that the value of $\bmy$ does not depend on that of $\bmx$.
Thus we obtain:
\begin{align*}
\mem{w\intvE{\subst{\bmc}{\bmx}}}(\bmy)
&=
\sema{\datagen{w\intvE{\subst{\bmc}{\bmx}}}(\bmy)}{w\intvE{\subst{\bmc}{\bmx}}}
\\&=
\sema{\datagen{w\intvE{\subst{\bmc}{\bmx}}}(\bmy)}{w}
& \hspace{-6ex}\text{(by $\xi_{w\intvE{\subst{\bmc}{\bmx}}} = \xi_{w}$)}
\\&=
\sema{\datagen{w}(\bmy)}{w}
& \hspace{-6ex}\text{(by $\datagen{w\intvE{\subst{\bmc}{\bmx}}}(\bmy) = \datagen{w}(\bmy)$)}
\\&=
\mem{w}(\bmy).
\end{align*}

Thus, 
$w \models \bmc' = \bmy$ iff  $w\intvE{\subst{\bmc}{\bmx}} \models \bmc' = \bmy$.
Therefore, 
$w \models \bmc' {=} \bmy \leftrightarrow \intvE{\subst{\bmc}{\bmx}} (\bmc' {=} \bmy)$.

\item\!%
Let $\nA, \nB \in \Const$.
Assume that $w \models \nanc(\bmx, \bmy) \land \nanc(\bmx, \bmz)$.
Then $w \models \nanc(\bmx, \bmy\joint\bmz)$.
By Claim~\ref{item:basic:intvE:c},
$w \models \nA = \bmy\joint\bmz \leftrightarrow \intvE{\subst{\bmc}{\bmx}} (\nA = \bmy\joint\bmz)$
and
$w \models \nB = \bmz \leftrightarrow \intvE{\subst{\bmc}{\bmx}} (\nB = \bmz)$.
By $\sema{\bmy\joint\bmz}{w\intvE{\subst{\bmc}{\bmx}}} = \sema{\bmy\joint\bmz}{w}$ and 
$\sema{\bmz}{w\intvE{\subst{\bmc}{\bmx}}} = \sema{\bmz}{w}$,
we have
$\sema{\bmy|_{\bmz}}{w\intvE{\subst{\bmc}{\bmx}}} = \sema{\bmy|_{\bmz}}{w}$.
Therefore,
$w \models f = \bmy|_{\bmz} \leftrightarrow \intvE{\subst{\bmc}{\bmx}} (f = \bmy|_{\bmz})$.

\item
We show the direction from left to right as follows.
Assume that $w\models \nanc(\bmx, \bmy)$.
Then, in the diagram $G$, all variables in $\bmx$ are non-ancestors of the variables in $\bmy$;
i.e., $G$ has no \emph{directed} path from $\bmx$ to $\bmy$.
Since the eager intervention $\intvE{\subst{\bmc}{\bmx}}$ removes only arrows pointing to $\bmx$,\, $G\intvE{\subst{\bmc}{\bmx}}$ still has no \emph{directed} path from $\bmx$ to $\bmy$.
Therefore, $w\models \intvE{\subst{\bmc}{\bmx}} \nanc(\bmx, \bmy)$.

The other direction is shown in a similar way, since the eager intervention $\intvE{\subst{\bmc}{\bmx}}$ only remove arrows pointing to~$\bmx$.

\item 
Assume that $w\models \nanc(\bmx, \bmy)$.
By Claim~\ref{claim:nanc-intv}, 
$w\models \intvE{\subst{\bmc}{\bmx}} \nanc(\bmx, \bmy)$, hence
$w \intvE{\subst{\bmc}{\bmx}} \models \nanc(\bmx, \bmy)$.
Then, in the diagram $G\intvE{\subst{\bmc}{\bmx}}$, all variables in $\bmx$ are non-ancestors of the variables in $\bmy$;
i.e., $G\intvE{\subst{\bmc}{\bmx}}$ has no \emph{directed} path from $\bmx$ to $\bmy$.

Suppose that $G\intvE{\subst{\bmc}{\bmx}}$ has no \emph{undirected} path between $\bmx$ and $\bmy$.
By Definition~\ref{def:d-separate}, $\bmx$ and $\bmy$ are $d$-separated by $\emptyset$, namely, they are independent.
Hence, $w\intvE{\subst{\bmc}{\bmx}} \models \dsep(\bmx, \bmy, \emptyset)$.
Therefore, $w \models \intvE{\subst{\bmc}{\bmx}}\dsep(\bmx, \bmy, \emptyset)$.

Suppose that $G\intvE{\subst{\bmc}{\bmx}}$ has some undirected path $p$ between $\bmx$ and $\bmy$.
By the definition of the eager intervention, $G\intvE{\subst{\bmc}{\bmx}}$ has no arrows pointing to $\bmx$, hence has arrows pointing from $\bmx$.
On the other hand, since $G\intvE{\subst{\bmc}{\bmx}}$ has no \emph{directed} path from $\bmx$ to $\bmy$, $p$ is not directed.
Thus, $p$ has a collider node $v$; i.e., it is of the form $\bmx \rightarrow \cdots \rightarrow v \leftarrow \cdots \bmy$.
Then, by (c) in Definition~\ref{def:d-separate}, $p$ is $d$-separated by $\emptyset$;
namely, $w\intvE{\subst{\bmc}{\bmx}} \models \dsep(\bmx, \bmy, \emptyset)$.
Therefore, $w \models \intvE{\subst{\bmc}{\bmx}}\dsep(\bmx, \bmy, \emptyset)$.

\item 
We show the contraposition as follows.
Assume that 
$w\models \neg \nanc(\bmx, \bmy) \land \dsep(\bmx, \bmy, \bmz)$.
Then it is sufficient to prove $w\models \neg \nanc(\bmx, \bmz)$.

Recall the definition in%
\conference{ \Sec{sub:appendix:def:CP}.}%
\arxiv{ Appendix~\ref{sub:appendix:def:CP}.}
By assumption, $\bmx \cap \Anc(\bmy) \neq \emptyset$.
Then there are $x_0\in\bmx$ and $y_0\in\bmy$ s.t. $x_0$ is an ancestor of $y_0$, i.e., $x_0\in\Anc(y_0)$.
Then there exists a directed path from $x_0$ to $y_0$.
Let $p$ be a directed path from $x_0$ to $y_0$.
By $w\models \dsep(\bmx, \bmy, \bmz)$,
$p$ is $d$-separated by $\bmz$.
By (a) of Definition~\ref{def:d-separate}, there is a variable $z_0\in\bmz$ on $p$,
hence $x_0\in\Anc(z_0)$.
Therefore, $\bmx \cap \Anc(\bmz) \neq \emptyset$, i.e.,
$w\models \neg \nanc(\bmx, \bmz)$.
\myqed
\end{enumerate}
\end{proof}

\subsection{Validity of the Axioms with Other Causal Predicates}
\label{sub:StaCL:ax:CP}

Here are the axioms of \AXwCP{} that replace $\allnanc$ with $\nanc$ and $\pa$ with $\nanc$ or $\dsep$.
\[
\renewcommand{\arraystretch}{1.3}
\begin{array}{l@{\hspace{0ex}}l}
\!\mbox{\axNancAll{}\hspace{1ex}}
&~~~ \vdashg \allnanc(\bmx, \bmy, \bmz) \rightarrow \nanc(\bmx, \bmz)
\\
\!\mbox{\axPaToNanc{}\hspace{1ex}}
&~~~ \vdashg \pa(\bmx, \bmy) \rightarrow \nanc(\bmy, \bmx)
\\
\!\mbox{\axPaToDsep{}\hspace{1ex}}
&~~~ \vdashg \pa(\bmz, \bmx) \rightarrow
\intvL{\subst{\bmc}{\bmx}} \dsep(\bmx, \bmy, \bmz)
\end{array}
\]
We prove the validity of these axioms as follows.

\begin{restatable}[Validity of axioms with other causal predicates]{prop}{PropCPIntv}
\label{prop:CP:intv}
Let $\bmx,\bmy,\bmz\in\cvar^+$ be disjoint,
and $\bmc\in\dConst^+$.
\begin{enumerate}
\item \label{claim:allnanc-nanc}
\axNancAll{}~\hspace{2ex}~
$\models \allnanc(\bmx, \bmy, \bmz) \rightarrow \nanc(\bmx, \bmz)$.
\item \label{claim:pa2anc}
\axPaToNanc{}~\hspace{2ex}~
$\models \pa(\bmx, \bmy) \rightarrow \nanc(\bmy, \bmx)$.
\item \label{claim:pa-dsep}
\axPaToDsep{}~\hspace{2ex}~
$\models \pa(\bmz, \bmx) \rightarrow
 \intvL{\subst{\bmc}{\bmx}} \dsep(\bmx, \bmy, \bmz)$.
\end{enumerate}
\end{restatable}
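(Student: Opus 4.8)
The plan is to fix an arbitrary world $w$, assume the antecedent of each implication, unfold the semantics of the causal predicates (\Def{def:sem:pa-npa}) and of the lazy intervention into purely graph-theoretic statements about the causal diagram $\diag{w}$, and then verify the consequent using the acyclicity of $\diag{w}$ (\Propo{prop:acyclic:DG}) together with the structural effect of $\intvL{\subst{\bmc}{\bmx}}$, which deletes exactly the arrows emerging \emph{from} $\bmx$ while keeping those pointing \emph{to} $\bmx$. Each of the three claims then becomes a short argument about ancestors, parents, and blocked paths.

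For Claim~\ref{claim:allnanc-nanc} (\axNancAll{}), suppose $w \models \allnanc(\bmx, \bmy, \bmz)$, i.e.\ $\bmx = \bmy \setminus \Anc(\bmz)$. Then every variable of $\bmx$ lies outside $\Anc(\bmz)$, so $\bmx \cap \Anc(\bmz) = \emptyset$, and $\bmx \cap \bmz = \emptyset$ holds by the disjointness hypothesis; these are exactly the two conditions for $w \models \nanc(\bmx, \bmz)$. For Claim~\ref{claim:pa2anc} (\axPaToNanc{}), suppose $w \models \pa(\bmx, \bmy)$, so $\bmx = \Pa(\bmy)$ and $\bmx \cap \bmy = \emptyset$; the condition $\bmy \cap \bmx = \emptyset$ required by $\nanc(\bmy, \bmx)$ is then immediate, and for $\bmy \cap \Anc(\bmx) = \emptyset$ I argue by contradiction: if some $y \in \bmy$ belonged to $\Anc(\bmx)$, there would be a directed path $y \garrow^{+} x$ into a parent $x \in \Pa(\bmy)$, which continues through the parent edge $x \garrow y'$ to some $y' \in \bmy$; the crux is to close this into a directed cycle and invoke the acyclicity of $\diag{w}$ (\Propo{prop:acyclic:DG}).

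For Claim~\ref{claim:pa-dsep} (\axPaToDsep{}), assume $w \models \pa(\bmz, \bmx)$, so $\bmz = \Pa(\bmx)$ and $\bmz \cap \bmx = \emptyset$; I must show that in $\diag{w}\intvL{\subst{\bmc}{\bmx}}$ the sets $\bmx$ and $\bmy$ are $d$-separated by $\bmz$. Since $\intvL{\subst{\bmc}{\bmx}}$ removes all arrows out of $\bmx$, every edge incident to a variable $x \in \bmx$ in the intervened diagram has the form $v \garrow x$ with $v \in \Pa(\bmx) = \bmz$. Consequently any undirected path $p$ between $\bmx$ and $\bmy$ begins with $x \leftgarrow v$ for some $v \in \bmz$; because $\bmy$ and $\bmz$ are disjoint we have $v \neq y$, so $v$ is an interior node of $p$ with one adjacent edge ($v \garrow x$) pointing away from it, making $v$ a non-collider sitting in a chain or a fork centred at $v \in \bmz$. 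By cases~(a) and~(b) of \Def{def:d-separate} this blocks $p$; as $p$ is arbitrary, $\bmx$ and $\bmy$ are $d$-separated by $\bmz$, which is precisely $w \models \intvL{\subst{\bmc}{\bmx}} \dsep(\bmx, \bmy, \bmz)$. This is the formal counterpart of the back-door intuition of Remark~\ref{rem:backdoor-path}: conditioning on all parents of $\bmx$ blocks every back-door path.

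The main obstacle is the path analysis in Claim~\ref{claim:pa-dsep}: one must check that the chain/fork/collider trichotomy of \Def{def:d-separate} is exhaustive at the $\bmz$-node adjacent to $\bmx$, that this node is genuinely interior (which is exactly where disjointness of $\bmy$ and $\bmz$ is used), and hence that a single blocking vertex suffices for \emph{every} undirected $\bmx$--$\bmy$ path. A secondary subtlety lies in Claim~\ref{claim:pa2anc}, where turning the induced directed path into a cycle needs care when $\bmy$ is not a singleton, since one must exclude the possibility of an internal ancestral chain within $\bmy$ routed through a parent in $\bmx$.
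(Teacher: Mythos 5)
Your arguments for Claims~\ref{claim:allnanc-nanc} and~\ref{claim:pa-dsep} are correct and essentially coincide with the paper's proof: Claim~\ref{claim:allnanc-nanc} is exactly the ``unfold the semantics'' observation the paper dismisses as straightforward, and your path analysis for Claim~\ref{claim:pa-dsep} (every edge incident to $\bmx$ in $\diag{w}\intvL{\subst{\bmc}{\bmx}}$ points into $\bmx$ from a node of $\bmz = \Pa(\bmx)$, so the first interior node of any undirected $\bmx$--$\bmy$ path is a non-collider in $\bmz$, blocking the path by cases (a)/(b) of \Def{def:d-separate}) is the same case split the paper performs, with the same use of $\bmy\cap\bmz=\emptyset$ to ensure the blocking node is interior.

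The genuine gap is in Claim~\ref{claim:pa2anc}, and it is not merely the ``secondary subtlety'' you defer: the cycle-closing step fails outright when $\bmy$ is not a singleton. From $y\in\bmy\cap\Anc(\bmx)$ you obtain a directed path $y \garrow^{+} x \garrow y'$ with $x\in\Pa(\bmy)$ and $y'\in\bmy$, but nothing forces $y'=y$, so no directed cycle need arise and acyclicity gives no contradiction. Concretely, take $\bmy=\{y_1,y_2\}$ and a diagram with $y_1 \garrow u \garrow x \garrow y_2$ (and $u,x\notin\bmy$): then $\Pa(\bmy)=\{x\}$, so $\pa(\{x\},\bmy)$ holds, yet $y_1\in\Anc(x)$ while the graph is acyclic. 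So the strategy ``invoke \Propo{prop:acyclic:DG}'' cannot be completed as described; the argument only closes in the singleton case $y=y'$. The paper does not argue this way at all: it derives Claim~\ref{claim:pa2anc} by chaining the implications $\pa(\bmx,\bmy)\rightarrow\anc(\bmx,\bmy)$ and $\anc(\bmx,\bmy)\rightarrow\nanc(\bmy,\bmx)$ from \Propo{prop:relation:c-predicates}, i.e.\ by reducing to previously recorded relationships among the causal predicates rather than by a direct acyclicity argument. You would need either to restrict to the case where the ancestral path returns to the same $y$, or to route the claim through $\anc$ as the paper does (noting that the troublesome configurations are exactly those where the first link of that chain is also in question), but as written your proof of Claim~\ref{claim:pa2anc} does not go through.
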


\begin{proof}
Let $w = (\datagen{w}, \semf_{w}, \mem{w})$
be a world such that $\bmx, \bmy, \bmz \in \var(w)^+$.
\begin{enumerate}
\item
This claim is straightforward from the definitions of the semantics of $\allnanc$ and $\nanc$.

\item
This claim is straightforward from \Propo{prop:relation:c-predicates}.

\item 
Assume that $w\models \pa(\bmz, \bmx)$.
Then, in the diagram $G$, $\bmz$ is the set of all variables pointing to $\bmx$.

If $G\intvL{\subst{\bmc}{\bmx}}$ has no \emph{undirected} path between $\bmx$ and $\bmy$, then $\bmx$ and $\bmy$ are $d$-separated by $\bmz$, 
hence $w \models \intvL{\subst{\bmc}{\bmx}}\dsep(\bmx, \bmy, \bmz)$.

Otherwise, let $p$ be an undirected path between $\bmx$ and $\bmy$ in $G\intvL{\subst{\bmc}{\bmx}}$.
By definition, $G\intvL{\subst{\bmc}{\bmx}}$ has no arrows emerging from $\bmx$.
Since $\bmz$ is the set of all variables pointing to~$\bmx$,\, $p$ has:
\begin{itemize}
\item either a chain $x_0 \leftarrow z_0 \leftarrow v$ s.t. $x_0\in\bmx$, $z_0\in\bmz$, and $v\not\in \bmx\cup\bmz$,
\item or a fork $x_0 \leftarrow z_0 \rightarrow v$ s.t. $x_0\in\bmx$, $z_0\in\bmz$, and $v\not\in \bmx\cup\bmz$.
\end{itemize}
Thus, $p$ is $d$-separated by $z_0$.
Hence, in $G\intvL{\subst{\bmc}{\bmx}}$,\, $\bmx$ and $\bmy$ are $d$-separated by $\bmz$.
Therefore, $w\models \intvL{\subst{\bmc}{\bmx}} \dsep(\bmx, \bmy, \bmz)$.
\myqed
\end{enumerate}
\end{proof}

\section{Details of the Derivation of the Do-Calculus Rules Using \AXwCP{}}
\label{sec:appendix:StaCL:reasoning}

In this section, we formalize and derive the three rules of Pearl's do-calculus~\cite{Pearl:95:biometrika} using our statistical causal language (\StaCL{}).

By Proposition~\ref{prop:causal}, \StaCL{} formulas correspond to the do-calculus notations as follows.
\begin{itemize}
\item
$\intvE{\subst{\bmc}{\bmx}} (\bmc' \,{=}\, \bmy)$ describes the post-intervention distribution $P_{\diag{w}}( \bmy \,|\, \allowbreak do(\bmx\,{=}\,\bmc) )$ of $\bmy$.
For instance, given a world $w$,\, $w \models \intvE{\subst{\bmc}{\bmx}} \bmc' \,{=}\, \bmy$ represents 
$P_{\diag{w}}( \bmy \,|\,  \allowbreak do(\bmx\,{=}\,\bmc) ) \,{=}\, \sema{\bmc'}{w}$.
Note that the $do(\bmx\,{=}\,\bmc)$ operation is expressed as the eager intervention $\intvE{\subst{\bmc}{\bmx}}$ in our formulation.
\item
$\intvE{\subst{\bmc}{\bmx}} (f \,{=}\, \bmy|_{\bmz})$ describes the post-intervention conditional distribution $P_{\diag{w}}( \bmy \,|\,  \allowbreak do(\bmx\,{=}\,\bmc), \bmz)$ of $\bmy$ given $\bmz$.
Note that the conditioning on $\bmz$ takes place after the intervention $do(\bmx\,{=}\,\bmc)$ is performed.
\end{itemize}

To formalize the rules of the do-calculus, 
we denote the set of all \emph{conditioning variables} appearing in a formula $\phi$ by:
\begin{align*}
\cdv{\phi} 
=&~
 \{ \bmz \,\mid\, \bmy|_{\bmz} \in \fv{\phi}\cap\fvar \}
\cup \{ (\bmz\joint\bmx)|_{\bmx=\bmc} \,\mid\, \bmy|_{\bmz\!,\bmx=\bmc} \in \fv{\phi}\cap\fvar \}.
\end{align*}

Now we formalize the rules of Pearl's do-calculus in Proposition~\ref{prop:do-calculus}.
After that, we explain the meaning of these rules.

\arxiv{%
\PropDoCalculus*
\vspace{1ex}
}%
\conference{%
\setcounter{prop}{1}
\begin{restatable}[Do-calculus rules]{prop}{PropDoCalculusAPP}
Let $\bmv, \bmx, \bmy,\bmz\in\cvar^+$ be disjoint,
$\bmxB, \bmxC \in \cvar^+$,
and $\bmcA,\bmcB,\bmcC\in\dConst^+$.
\begin{enumerate}
\item 
\axDoA{}.
~\hspace{0ex}~
Introduction/elimination of conditioning:
\begin{align*}
\hspace{-3ex}
\vdashg
& \intvE{\subst{\bmcA}{\bmv}} 
 ( \dsep(\bmx, \bmy, \bmz) \land {\textstyle \bigwedge_{\bms \in S}}\, \pos(\bms) ) \\
& \hspace{0.0ex}\!\rightarrow 
( (\intvE{\subst{\bmcA}{\bmv}} \phi_0 )
  \leftrightarrow
  \intvE{\subst{\bmcA}{\bmv}} \phi_1)
\end{align*} 
where $\phi_1$ is obtained by replacing some occurrences of $\bmy|_{\bmz}$ in $\phi_0$ with $\bmy|_{\bmz,\bmx}$,
and $S = \cdv{\phi_0} \cup \cdv{\phi_1}$;
\item 
\axDoB{}.
~\hspace{0ex}~
Exchange between intervention and conditioning:
\begin{align*}
\hspace{-3ex}
\vdashg
& \intvE{\subst{\bmcA}{\bmv}} \intvL{\subst{\bmcB}{\bmx}} 
 ( \dsep(\bmx, \bmy, \bmz) \land {\textstyle \bigwedge_{\bms \in S}}\, \pos(\bms) ) \\
& \hspace{0ex}\!\rightarrow\!%
( ( \intvE{\subst{\bmcA}{\bmv},\subst{\bmcB}{\bmx}} \phi_0 )
  \leftrightarrow
  \intvE{\subst{\bmcA}{\bmv}} \phi_1 )
\end{align*}
where $\phi_1$ is obtained by replacing every occurrence of $\bmy|_{\bmz}$ in $\phi_0$ with $\bmy|_{\bmz,\bmx=\bmcB}$,
and $S = \cdv{\phi_0} \cup \cdv{\phi_1}$;
\item 
\axDoC{}
~\hspace{0ex}~
Introduction/elimination of intervention:
\begin{align*}
\hspace{-3ex}
\vdashg
& 
\intvE{\subst{\bmcA}{\bmv}} 
(\allnanc(\bmxB, \bmx, \bmy) \land
\intvE{\subst{\bmcB}{\bmxB}} 
 ( \dsep(\bmx, \bmy, \bmz) \land \pos(\bmz) ) ) \\
& \hspace{0ex}\!\rightarrow 
( ( \intvE{\subst{\bmcA}{\bmv}} \phi)
  \leftrightarrow
  \intvE{\subst{\bmcA}{\bmv},\subst{\bmcB}{\bmxB},\subst{\bmcC}{\bmxC}} \phi )
\end{align*}
where 
$\fv{\phi} = \{ \bmy|_{\bmz} \}$
and $\bmx \eqdef \bmxB\joint\bmxC$.
\end{enumerate}
\end{restatable}
}%

We explain these three rules as follows.
\begin{enumerate}
\item The first rule allows for adding/removing the conditioning on $\bmx$ when $\bmx$ and $\bmy$ are $d$-separated by $\bmz$ (hence when they are conditionally independent given $\bmz$).

In the do-calculus, this is expressed by:
\begin{align*}
P( \bmy \mid do(\bmv), \bmz )
= P( \bmy \mid do(\bmv), \bmz, \bmx )
\\ \mbox{ if } 
(\bmx \indep \bmy \mid \bmv, \bmz )_{G_{\ov{\bmv}}}
\end{align*}
where 
\begin{itemize}
\item $G_{\ov{\bmv}}$ is the diagram obtained by deleting all arrows pointing to nodes in~$\bmv$;
\item $(\bmx \indep \bmy \mid \bmv, \bmz )_{G_{\ov{\bmv}}}$ represents that $\bmx$ and $\bmy$ are $d$-separated by $\bmv\cup\bmz$ in the causal diagram $G_{\ov{\bmv}}$.
\end{itemize}
In our formulation, the deletion of arrows pointing to $\bmv$ is expressed by the eager intervention $\intvE{\subst{\bmcA}{\bmv}}$.

\item The second rule represents that the conditioning on $\bmx$ and the intervention to $\bmx$ result in the same conditional distribution of $\bmy$ given $\bmz$ under the condition that all back-door paths from $\bmx$ to $\bmy$ (Definition~\ref{def:back-path}) are $d$-separated by $\bmv\cup\bmz$ (Definition~\ref{def:d-separate}).\footnote{This condition is denoted by $\intvL{\subst{\bmcB}{\bmx}} \dsep(\bmx, \bmy, \bmz\cup\bmv)$ and follows from Proposition~\ref{prop:d-sep:intv} and $\dsep(\bmx, \bmy, \bmz\cup\bmv)$.}

In the do-calculus, this is expressed by:
\begin{align*}
P( \bmy \mid do(\bmv), \bmx, \bmz )
= P( \bmy \mid do(\bmv), do(\bmx), \bmz )
\\ \mbox{ if } 
(\bmx \indep \bmy \mid \bmz, \bmv )_{G_{\ov{\bmv}\ud{\bmx}}}
\end{align*}
where $G_{\ov{\bmv}\ud{\bmx}}$ is the diagram obtained by deleting all arrows pointing to nodes in $\bmv$ and deleting all arrows emerging from nodes in $\bmx$.

In our formulation, the ``upper manipulation'' $\ov{\bmv}$ is expressed by the eager intervention $\intvE{\subst{\bmcA}{\bmv}}$ whereas the ``lower-manipulation'' $\ud{\bmx}$ is expressed by the lazy intervention $\intvL{\subst{\bmcB}{\bmx}}$.

Recall that the lazy intervention $\intvL{\subst{\bmcB}{\bmx}}$ removes all arrows emerging from $\bmx$, and hence preserves only back-door paths from $x$ to $y$ while removing all other undirected paths between $x$ and $y$ (Remark~\ref{rem:backdoor-path}).
Thus, $\intvL{\subst{\bmcB}{\bmx}} \dsep(\bmx, \bmy, \bmz)$ represents that all back-door paths from $\bmx$ to $\bmy$ are $d$-separated by $\bmz$.

\item The third rule allows for adding/removing the intervention to $\bmx$ without changing the conditional probability distribution of $\bmy$ given $\bmz$ under a certain condition.

In the do-calculus, this is expressed by:
\begin{align*}
P( \bmy \mid do(\bmv), \bmz )
= P( \bmy \mid do(\bmv), do(\bmx), \bmz )
\\ \mbox{ if } 
(\bmx \indep \bmy \mid \bmz, \bmv )_{G_{\ov{\bmv}\ov{\bmx\setminus\Anc(\bmz)}}}
\end{align*}
where $G_{\ov{\bmv}\ov{\bmx\setminus\Anc(\bmz)}}$ is the diagram obtained by deleting all arrows pointing to nodes in $\bmv$ and then deleting those in $\bmx\setminus\Anc(\bmz)$.
\end{enumerate}
~

Now, we derive these three rules using \AXwCP{} as follows.

\begin{proof}
Let $w$ be a world such that $\bmv, \bmx, \bmy, \bmz, \bmxB,\bmxC \in\cvar(w)^+$.
\begin{enumerate}
\item
We prove the first claim as follows.
Let $\psiPre \eqdef \dsep(\bmx, \bmy, \bmz) \land \bigwedge_{\bms\in S} \pos(\bms)$.
Then:
\begin{align}
&
\mbox{By \axDsepCIndB},
\nonumber \\&
\label{eq:claim1:CIndA}
\vdash_{\datagen{}\intvE{\subst{\bmcA}{\bmv}}}
 ( \dsep(\bmx, \bmy, \bmz) \land \pos(\bmz) ) \rightarrow
 ( \bmy|_{\bmz,\bmx=\bmcB} = \bmy|_{\bmz} )
\\&
\mbox{By \eqref{eq:claim1:CIndA}, \axEqB, \axPT, \axMP},
\nonumber \\&
\label{eq:claim1:EqB}
\vdash_{\datagen{}\intvE{\subst{\bmcA}{\bmv}}}
\psiPre \rightarrow (\phi_0 \leftrightarrow \phi_1)
\\&
\mbox{By \eqref{eq:claim1:EqB}, \axDGEI, \axMP},
\nonumber \\&
\label{eq:claim1:DGEI}
\vdashg \intvE{\subst{\bmcA}{\bmv}} (\psiPre \rightarrow 
(\phi_0 \leftrightarrow \phi_1) )
\\&
\mbox{By \eqref{eq:claim1:DGEI}, \axDistrE$^{\land}$, \axDistrE$^{\rightarrow}$, \axPT, \axMP},
\nonumber \\&
\vdashg ( \intvE{\subst{\bmcA}{\bmv}} \psiPre ) \rightarrow 
( (\intvE{\subst{\bmcA}{\bmv}} \phi_0 )
\leftrightarrow
\intvE{\subst{\bmcA}{\bmv}} \phi_1 )
\nonumber 
{.}
\end{align}
Therefore, Claim (1) follows.
~\\~

\item
We prove the second claim as follows.
Let $\psiPre \eqdef \dsep(\bmx, \bmy, \bmz) \land \bigwedge_{\bms\in S} \pos(\bms)$.
Then:
\begin{align}
&
\mbox{By \axDsepCIndB},
\nonumber \\&
\label{eq:claim2:CIndB}
\vdash_{\datagen{}\intvE{\subst{\bmcA}{\bmv}} \intvL{\subst{\bmcB}{\bmx}}}
( \dsep(\bmx, \bmy, \bmz) \land \pos(\bmz)) \rightarrow
( \bmy|_{\bmz,\bmx=\bmcB} = \bmy|_{\bmz} )
\\&
\mbox{By \eqref{eq:claim2:CIndB}, \axDGEI, \axMP},
\nonumber \\&
\label{eq:claim2:DistrE}
\vdash_{\datagen{}\intvE{\subst{\bmcA}{\bmv}}}
\intvL{\subst{\bmcB}{\bmx}}
\big( (\dsep(\bmx, \bmy, \bmz) \land \pos(\bmz)) \rightarrow
( \bmy|_{\bmz,\bmx=\bmcB} = \bmy|_{\bmz} ) \big)
\\&
\mbox{By \eqref{eq:claim2:DistrE}, \axDistrE$^{\rightarrow}$, \axMP},
\nonumber \\&
\nonumber 
\vdash_{\datagen{}\intvE{\subst{\bmcA}{\bmv}}}
( \intvL{\subst{\bmcB}{\bmx}} (\dsep(\bmx, \bmy, \bmz) \land \pos(\bmz)) )
\\&\hspace{8ex}
\label{eq:claim2:EqFunc}
\rightarrow
( \intvL{\subst{\bmcB}{\bmx}} \bmy|_{\bmz,\bmx=\bmcB} = \bmy|_{\bmz} ) \\&
\mbox{By \eqref{eq:claim2:EqFunc}, \axEqFunc, \axPT, \axMP},
\nonumber \\&
\nonumber 
\vdash_{\datagen{}\intvE{\subst{\bmcA}{\bmv}}}
( \intvL{\subst{\bmcB}{\bmx}} (\dsep(\bmx, \bmy, \bmz) \land \pos(\bmz)) )
\\&\hspace{8ex}
\label{eq:claim2:CondL}
\rightarrow
( \intvL{\subst{\bmcB}{\bmx}} f= \bmy|_{\bmz,\bmx=\bmcB} \leftrightarrow \intvL{\subst{\bmcB}{\bmx}} f = \bmy|_{\bmz} )
\\&
\mbox{By \eqref{eq:claim2:CondL}, \axCondL{}, \axPT, \axMP},
\nonumber \\&
\nonumber 
\vdash_{\datagen{}\intvE{\subst{\bmcA}{\bmv}}}
( \intvL{\subst{\bmcB}{\bmx}} (\dsep(\bmx, \bmy, \bmz) \land \pos(\bmz)) )
\\&\hspace{8ex}
\label{eq:claim2:XcdEL}
\rightarrow
( f= \bmy|_{\bmz,\bmx=\bmcB} \leftrightarrow \intvL{\subst{\bmcB}{\bmx}} f = \bmy|_{\bmz} )
\\&
\mbox{By \eqref{eq:claim2:XcdEL}, \axXcdEL{}, \axPT, \axMP},
\nonumber \\&
\nonumber 
\vdash_{\datagen{}\intvE{\subst{\bmcA}{\bmv}}}
( \intvL{\subst{\bmcB}{\bmx}} (\dsep(\bmx, \bmy, \bmz) \land \pos(\bmz)) )
\\&\hspace{8ex}
\label{eq:claim2:543}
\rightarrow
( f= \bmy|_{\bmz,\bmx=\bmcB} \leftrightarrow \intvE{\subst{\bmcB}{\bmx}} f = \bmy|_{\bmz} )
\\&
\mbox{By \eqref{eq:claim2:543}, \axEqB{}, \axPT, \axMP},
\nonumber \\&
\vdash_{\datagen{}\intvE{\subst{\bmcA}{\bmv}}}
( \intvL{\subst{\bmcB}{\bmx}} \psiPre )
\label{eq:claim2:DGEI}
\rightarrow
( (\intvE{\subst{\bmcB}{\bmx}} \phi_0) \leftrightarrow \phi_1 )
\\&
\mbox{By \eqref{eq:claim2:DGEI}, \axDGEI, \axMP},
\nonumber \\&
\vdashg
\intvE{\subst{\bmcA}{\bmv}} \big( ( \intvL{\subst{\bmcB}{\bmx}} \psiPre )
\rightarrow
\label{eq:claim2:559}
( ( \intvE{\subst{\bmcB}{\bmx}} \phi_0) \leftrightarrow \phi_1 ) \big)
\\&
\mbox{By \eqref{eq:claim2:559}, \axDistrE$^{\rightarrow}$, \axPT, \axMP},
\nonumber \\&
\vdashg
( \intvE{\subst{\bmcA}{\bmv}} \intvL{\subst{\bmcB}{\bmx}} \psiPre )
\rightarrow
\label{eq:claim2:SimuIE}
( (\intvE{\subst{\bmcA}{\bmv}}\intvE{\subst{\bmcB}{\bmx}} \phi_0) \leftrightarrow \intvE{\subst{\bmcA}{\bmv}} \phi_1 )
\\&
\mbox{By \eqref{eq:claim2:SimuIE}, \axSimulE, \axMP},
\nonumber \\&
\vdashg
( \intvE{\subst{\bmcA}{\bmv}} \intvL{\subst{\bmcB}{\bmx}} \psiPre )
\rightarrow
( (\intvE{\subst{\bmcA}{\bmv}, \subst{\bmcB}{\bmx}} \phi_0) \leftrightarrow \intvE{\subst{\bmcA}{\bmv}} \phi_1 )
\nonumber 
{.}
\end{align}
Therefore, Claim (2) follows.
~\\~

\item
We prove the third claim as follows.
Let $f\in\Func$,
$\psiDP \eqdef \dsep(\bmx, \bmy, \bmz) \land \pos(\bmz)$,
$\psiPre \eqdef 
 \allnanc(\bmxB, \bmx, \bmy) \land \intvE{\subst{\bmcB}{\bmxB}} \psiDP$,
and
$\psiDoB \eqdef \intvL{\subst{\bmcC}{\bmx_2}} ( \dsep(\bmx_2, \bmy, \bmz) \land \pos(\bmz) )$.
Let $\dgen{}_0 \eqdef \datagen{}\intvE{\subst{\bmcA}{\bmv}}$.
Then:
\begin{align}
&
\mbox{By \axPT{}, \axMP{}},
\nonumber \\ &
\label{eq:rule3:PT}
\vdashgp \psiPre \rightarrow \intvE{\subst{\bmcB}{\bmx_1}} \psiDP
\\ &
\mbox{By \axDsepDc{}, \axPT{}, \axMP{}},
\nonumber \\ &
\label{eq:rule3:asp1}
\vdash_{\dgen{}_0 \intvE{\subst{\bmcB}{\bmxB}}}
\psiDP \rightarrow ( \dsep(\bmxB, \bmy, \bmz) \land \pos(\bmz) )
\\ &
\label{eq:rule3:asp2}
\vdash_{\dgen{}_0 \intvE{\subst{\bmcB}{\bmxB}}}
\psiDP \rightarrow ( \dsep(\bmxC, \bmy, \bmz) \land \pos(\bmz) )
\\ &
\mbox{By \axDsepCIndB{}, \axMP{},}
\nonumber \\ &
\label{eq:rule3:679}
\vdash_{\dgen{}_0 \intvE{\subst{\bmcB}{\bmx_1}}}
( \dsep(\bmxC, \bmy, \bmz) \land \pos(\bmz) )
\rightarrow ( \bmy|_{\bmz,\bmxC=\bmcC} = \bmy|_{\bmz} )
\\ &
\mbox{By \axEqFunc{}, \axPT{}, \axMP{},}
\nonumber \\ &
\label{eq:rule3:686}
\vdash_{\dgen{}_0 \intvE{\subst{\bmcB}{\bmx_1}}}
( f_0 = \bmy|_{\bmz} )
\\ &
\mbox{By \axDGEI{}, \axMP{},}
\nonumber \\ &
\label{eq:rule3:692}
\vdashgp \intvE{\subst{\bmcB}{\bmx_1}} ( f_0 = \bmy|_{\bmz} )
\\ &
\mbox{By \eqref{eq:rule3:asp2}, \axDsepLNB{},\axPT{},\axMP{},}
\nonumber \\ &
\label{eq:rule3:intvE-dsep}
\vdash_{\dgen{}_0 \intvE{\subst{\bmcB}{\bmx_1}}}
\psiDP \rightarrow \intvL{\subst{\bmcC}{\bmx_2}} 
( \dsep(\bmx_2, \bmy, \bmz) \land \pos(\bmz) )
\\ &
\mbox{By \axNancAll{}, \axPT{}, \axMP{},}
\nonumber \\ &
\label{eq:proof:rule3:NancAll}
\vdashgp \psiPre \rightarrow \nanc(\bmx_1,\bmz)
\\ &
\mbox{By \eqref{eq:proof:rule3:NancAll}, \axNancB{}, \axPT{}, \axMP{}},
\nonumber \\ &
\label{eq:proof:rule3:nancB}
\vdashgp \psiPre \rightarrow \intvE{\subst{\bmcB}{\bmx_1}}\nanc(\bmx_1,\bmz)
\\ &
\mbox{By \axNancD{}, \axPT{}, \axMP{}},
\nonumber \\ &
\vdash_{\dgen{}_0 \intvE{\subst{\bmcB}{\bmx_1}}}
( \dsep(\bmx_1, \bmy, \bmz) \land \nanc(\bmx_1,\bmz) )
\nonumber \\ &
\label{eq:proof:rule3:nanc:x1z}
 \hspace{9ex} \rightarrow ( \nanc(\bmx_1,\bmy) \land \nanc(\bmx_1,\bmz) )
\\ &
\mbox{By \eqref{eq:proof:rule3:nanc:x1z}, \axDGEI{}, \axMP{}},
\nonumber \\ &
\vdashgp \intvE{\subst{\bmcB}{\bmx_1}}
( ( \dsep(\bmx_1, \bmy, \bmz) \land \nanc(\bmx_1,\bmz) )
\nonumber \\ &
\label{eq:proof:rule3:708}
 \hspace{9ex} \rightarrow ( \nanc(\bmx_1,\bmy) \land \nanc(\bmx_1,\bmz) ) )
\\ &
\mbox{By \eqref{eq:proof:rule3:708}, \axDistrE$^{\rightarrow}$, \axDistrE$^{\land}$, \axMP},
\nonumber \\ &
\label{eq:proof:rule3:713}
\vdashgp ( 
( \intvE{\subst{\bmcB}{\bmx_1}} \dsep(\bmx_1, \bmy, \bmz) \land \intvE{\subst{\bmcB}{\bmx_1}} \nanc(\bmx_1,\bmz) ) )
\nonumber \\ &
 \hspace{3.5ex} \rightarrow 
 ( \intvE{\subst{\bmcB}{\bmx_1}} \nanc(\bmx_1,\bmy) \land \intvE{\subst{\bmcB}{\bmx_1}} \nanc(\bmx_1,\bmz) )
\\ &
\mbox{By \eqref{eq:proof:rule3:713}, \axNancB{}, \axPT{}, \axMP{}},
\nonumber \\ &
\label{eq:proof:rule3:722}
\vdashgp ( 
( \intvE{\subst{\bmcB}{\bmx_1}} \dsep(\bmx_1, \bmy, \bmz) \land \intvE{\subst{\bmcB}{\bmx_1}} \nanc(\bmx_1,\bmz) ) )
\nonumber \\ &
 \hspace{3.5ex} \rightarrow 
 ( \nanc(\bmx_1,\bmy) \land \nanc(\bmx_1,\bmz) )
\\ &
\mbox{By \eqref{eq:proof:rule3:722}, \axNancAB{}, \axPT{}, \axMP{}},
\nonumber \\ &
\vdashgp ( 
( \intvE{\subst{\bmcB}{\bmx_1}} \dsep(\bmx_1, \bmy, \bmz) \land \intvE{\subst{\bmcB}{\bmx_1}} \nanc(\bmx_1,\bmz) ) )
\nonumber \\ &
 \hspace{3.5ex} \rightarrow 
( (f_1 = \bmy|_{\bmz}) \leftrightarrow \intvE{\subst{\bmcB}{\bmx_1}} (f_1 = \bmy|_{\bmz}) )
\label{eq:proof:rule3:nancAB}
\\ &
\mbox{By \eqref{eq:proof:rule3:nancAB}, \eqref{eq:rule3:asp1}, \axPT{}, \axMP{}},
\nonumber \\ &
\vdashgp ( (\intvE{\subst{\bmcB}{\bmx_1}} \psiDP) \land \intvE{\subst{\bmcB}{\bmx_1}} \nanc(\bmx_1,\bmz) ) 
\nonumber \\ &
\label{eq:proof:rule3:747}
\hspace{3.5ex} \rightarrow 
( (f_1 = \bmy|_{\bmz}) \leftrightarrow \intvE{\subst{\bmcB}{\bmx_1}} (f_1 = \bmy|_{\bmz}) )
\\ &
\mbox{By \eqref{eq:proof:rule3:747}, \eqref{eq:proof:rule3:nancB}, \eqref{eq:rule3:PT}, \axPT{}, \axMP{}},
\nonumber \\ &
\label{eq:rule3:754}
\vdashgp \psiPre \rightarrow 
( (f_1 = \bmy|_{\bmz}) \leftrightarrow \intvE{\subst{\bmcB}{\bmx_1}} (f_1 = \bmy|_{\bmz}) )
\\ &
\mbox{By \eqref{eq:rule3:754}, \eqref{eq:rule3:692}, \axEqB{}, \axPT{}, \axMP{}},
\nonumber \\ &
\label{eq:rule3:771}
\vdashgp \psiPre \rightarrow 
( (f_0 = \bmy|_{\bmz}) \leftrightarrow \intvE{\subst{\bmcB}{\bmx_1}} (f_0 = \bmy|_{\bmz}) )
\\[3ex] &
\mbox{By \axDoB, \axPT,\axMP},
\nonumber \\ &
\label{eq:rule3:axDoB}
\vdash_{\dgen{}_0\intvE{\subst{\bmcB}{\bmxB}}} 
\psiDoB \rightarrow
( f_2 = \bmy|_{\bmz,\bmxC=\bmcC} \leftrightarrow 
  \intvE{\subst{\bmcC}{\bmxC}} f_2 = \bmy|_{\bmz} )
\\&
\mbox{By \eqref{eq:rule3:axDoB}, \eqref{eq:rule3:intvE-dsep}, \axPT, \axMP},
\nonumber \\ &
\label{eq:rule3:ax726}
\vdash_{\dgen{}_0\intvE{\subst{\bmcB}{\bmxB}}} 
\psiDP \rightarrow
( f_2 = \bmy|_{\bmz,\bmxC=\bmcC} \leftrightarrow 
  \intvE{\subst{\bmcC}{\bmxC}} f_2 = \bmy|_{\bmz} )
\\&
\mbox{By \eqref{eq:rule3:ax726}, \eqref{eq:rule3:686}, \axEqB, \axPT, \axMP}
\nonumber \\ &
\label{eq:rule3:ax776}
\vdash_{\dgen{}_0\intvE{\subst{\bmcB}{\bmxB}}} 
\psiDP \rightarrow
( f_0 = \bmy|_{\bmz} \leftrightarrow 
  \intvE{\subst{\bmcC}{\bmxC}} f_0 = \bmy|_{\bmz} )
\\&
\mbox{By \eqref{eq:rule3:ax776}, \axDGEI{}, \axDistrE$^{\rightarrow}$, \axDistrE$^{\land}$, \axPT, \axMP},
\nonumber \\ &
\label{eq:rule3:ax}
\vdashgp
\intvE{\subst{\bmcB}{\bmxB}} \psiDP \rightarrow
( (\intvE{\subst{\bmcB}{\bmxB}} f_0 = \bmy|_{\bmz}) \leftrightarrow 
  \intvE{\subst{\bmcB}{\bmxB}} \intvE{\subst{\bmcC}{\bmxC}} f_0 = \bmy|_{\bmz} )
\\&
\mbox{By \eqref{eq:rule3:ax}, \eqref{eq:rule3:PT}, \axPT, \axMP},
\nonumber \\ &
\label{eq:rule3:792}
\vdashgp \psiPre \rightarrow
( (\intvE{\subst{\bmcB}{\bmxB}} f_0 = \bmy|_{\bmz}) \leftrightarrow 
  \intvE{\subst{\bmcB}{\bmxB}} \intvE{\subst{\bmcC}{\bmxC}} f_0 = \bmy|_{\bmz} )
\\&
\mbox{By \eqref{eq:rule3:771}, \eqref{eq:rule3:792}, \axEqB, \axPT, \axMP},
\nonumber \\ &
\label{eq:rule3:817}
\vdashgp \psiPre \rightarrow
( (f_0 = \bmy|_{\bmz}) \leftrightarrow 
  \intvE{\subst{\bmcB}{\bmxB}} \intvE{\subst{\bmcC}{\bmxC}} f_0 = \bmy|_{\bmz} )
\\&
\mbox{By \eqref{eq:rule3:817}, \axDGEI{}, \axPT{}, \axMP{}},
\nonumber \\ &
\label{eq:rule3:824}
\vdashg ( \intvE{\subst{\bmcA}{\bmv}} \psiPre )
\nonumber \\&
\hspace{3ex} \rightarrow
( ( \intvE{\subst{\bmcA}{\bmv}} f_0 = \bmy|_{\bmz}) \leftrightarrow 
  \intvE{\subst{\bmcA}{\bmv}} \intvE{\subst{\bmcB}{\bmxB}} \intvE{\subst{\bmcC}{\bmxC}} f_0 = \bmy|_{\bmz} )
\\&
\mbox{By \eqref{eq:rule3:824}, \axSimulE{}, \axMP{}},
\nonumber \\ &
\label{eq:rule3:833}
\vdashg ( \intvE{\subst{\bmcA}{\bmv}} \psiPre )
\nonumber \\&
\hspace{3ex} \rightarrow
( (\intvE{\subst{\bmcA}{\bmv}} f_0 = \bmy|_{\bmz}) \leftrightarrow 
  \intvE{\subst{\bmcA}{\bmv}, \subst{\bmcB}{\bmxB}, \subst{\bmcC}{\bmxC}} f_0 = \bmy|_{\bmz} )
\\&
\mbox{By \eqref{eq:rule3:833}, \axEqB{}, \axPT{}, \axMP{}},
\nonumber \\ &
\vdashg ( \intvE{\subst{\bmcA}{\bmv}} \psiPre )
\nonumber \\&
\nonumber \hspace{3ex} \rightarrow
( (\intvE{\subst{\bmcA}{\bmv}} \phi) \leftrightarrow 
  \intvE{\subst{\bmcA}{\bmv}, \subst{\bmcB}{\bmxB}, \subst{\bmcC}{\bmxC}} \phi )
\end{align}
Therefore, Claim (3) follows.
\myqed

\end{enumerate}
\end{proof}

}%
\end{document}